\newcounter{hints}
\renewcommand{\thehints}{\alph{hints}}
\newcommand{\eqhint}[2][]{%
  \stepcounter{hints}%
  \if\relax\detokenize{#1}\relax\else\csxdef{hint@#1}{\thehints}\fi
  \mathrel{\overset{\textrm{(\thehints\hspace{0.01em})}}{\vphantom{\le}{#2}}}%
}
\newcommand{\reseteqhint}{\setcounter{hints}{0}}
\newcommand{\hintref}[1]{(\csuse{hint@#1})}
 \let\Ginclude@graphics\@org@Ginclude@graphics 
\title[Optimizing Optimizers]{Optimizing Optimizers: Regret-optimal gradient descent algorithms.}
\DeclareOldFontCommand{\rm}{\normalfont\rmfamily}{\mathrm}
\DeclareOldFontCommand{\sf}{\normalfont\sffamily}{\mathsf}
\DeclareOldFontCommand{\tt}{\normalfont\ttfamily}{\mathtt}
\DeclareOldFontCommand{\bf}{\normalfont\bfseries}{\mathbf}
\DeclareOldFontCommand{\it}{\normalfont\itshape}{\mathit}
\DeclareOldFontCommand{\sl}{\normalfont\slshape}{\@nomath\sl}
\DeclareOldFontCommand{\sc}{\normalfont\scshape}{\@nomath\sc}
\DeclareMathOperator*{\Min}{Min}
\DeclareMathOperator*{\argmin}{argmin}
\newcommand{\xst}{x^{\star}}
\newcommand{\xb}{{\boldsymbol{x}}}
\newcommand{\yb}{\boldsymbol{y}}
\newcommand{\zb}{\boldsymbol{z}}
\newcommand{\delx}{\delta x}
\newcommand{\delxb}{{\boldsymbol{\delta}\xb}}
\newcommand{\fst}{f_{\star}}
\newcommand{\nuh}{\hat{\nu}}
\newcommand{\Phit}{\tilde{\Phi}}
\newcommand{\Jinf}{J^\infty}
\newcommand{\yh}{\hat{y}}
\newcommand{\gammau}{\underline{\gamma}}
\newcommand{\A}{\mathcal{A}}
\newcommand{\T}{\intercal}
\newcommand{\N}{\mathbb{N}}
\newcommand{\R}{\mathbb{R}}
\newcommand{\phit}{\tilde{\phi}}
\let\L\relax
\newcommand{\L}{\mathcal{L}}
\newcommand{\mcH}{\mathcal{H}}
\newcommand{\Lh}{\widehat{\L}}
\newcommand{\mcP}{{\mathcal{P}}}
\newcommand{\mcPh}{\hat{\mcP}}
\newcommand{\mcY}{\mathcal{Y}}
\newcommand{\mcR}{\mathcal{R}}
\newcommand{\D}{\mathcal{D}}
\newcommand{\K}{{\mathcal{K}}}
\newcommand{\X}{\mathcal{X}}
\newtheorem{assumption}{Assumption}
\newtheorem*{definition*}{Definition}
\newcommand{\NA}[1]{\error}
\newcommand{\ATTN}[1]{\error}
\newcommand{\TBD}[1]{\error}
\newcommand{\PROBLEM}[1]{\error}
\newcommand{\fTBD}[1]{\error}
\newcommand{\fPROBLEM}[1]{\error}
\newcommand{\fATTN}[1]{\error}
\begin{document}

\maketitle

\begin{abstract}%
    The need for fast and robust optimization algorithms are of critical importance in all areas of machine learning. This paper treats the task of designing optimization algorithms as an optimal control problem. Using regret as a metric for an algorithm's performance, we study the existence, uniqueness and consistency of regret-optimal algorithms. By providing first-order optimality conditions for the control problem, we show that regret-optimal algorithms must satisfy a specific structure in their dynamics which we show is equivalent to performing \emph{dual-preconditioned gradient descent} on the value function generated by its regret. Using these optimal dynamics, we provide bounds on their rates of convergence to solutions of convex optimization problems. Though closed-form optimal dynamics cannot be obtained in general, we present fast numerical methods for approximating them, generating optimization algorithms which directly optimize their long-term regret. Lastly, these are benchmarked against commonly used optimization algorithms to demonstrate their effectiveness.
\end{abstract}

\begin{keywords}%
    non-convex optimization, convex optimization, optimal control, variational optimization, algorithm generation, hyperparameter optimization
\end{keywords}

\section{Introduction} \label{sec:introduction}

Let $\X=\R^d$ and consider the unconstrained minimization problem
\begin{equation} \label{eq:f-opt-problem-def}
	\Min_{x\in\X} f(x) \;,
\end{equation}
for an objective function $f:\X\rightarrow \R \cup \{ \infty \}$  which satisfies the following regularity assumptions. 
\begin{assumption} \label{ass:f-basic-assumptions}
    We assume that the optimization problem~\eqref{eq:f-opt-problem-def} is non-degenerate, in the sense that there exists a minimizer $\xst\in\argmin_{x\in\X}f(x) \subseteq \X$ for which $| f(\xst) | < \infty$.
\end{assumption}

This paper considers the problem of selecting amongst a class of optimization algorithms for those which minimize a fixed performance metric along their path. In this sense, we are concerned with a \textit{meta-optimization problem} in which we are optimizing over algorithms which in turn optimize $f$. 

We identify algorithms with the paths they take within the optimization domain $\X$. To be precise, we define an algorithm $\xb$ as the sequence of points $\xb = \{x_t\}_{t \in \N} \in \X^\N$, where $\X^\N$ is the space of sequences on $\X$. Following this notation, we introduce the set of algorithms initialized at $x \in\X$ which terminate by iteration $T\in\N$ as
\begin{equation*}
	\A_x^T := \left\{
	\xb \in \X^{\N}
	\;:\; x_0=x \;\;\text{and}\;\; \forall u \geq T,\;\; \Delta x_u = 0 \;
	\right\}
	\;,
\end{equation*} 
where we use the notation $\Delta x_t = x_{t+1} - x_t$ to represent the increments of $\xb$. We also define its asymptotic counterpart $\A_x^\infty := \{ \xb\in\X^\N : x_0 = x \}$, the set of all sequences with fixed initial point.
When necessary, we denote the union of these sets over all initial points as $\A^T := \bigcup_{x\in\X} A_x^T$. Over the course of the paper, we use the convention that bold symbols $\yb\in\X^\N$ represent algorithms and their un-bolded counterparts $y_t\in\X$ represent their value at a fixed iteration $t\in\N$.

Our focus will be on those algorithms which successfully approximate $\xst$ in the limit, i.e.\ for which $x_t \rightarrow \xst$. As previously stated, we seek the algorithms which achieve this while minimizing a measure of `performance' along the path they take. We define this measure in such a way as to represent both the speed of convergence of $\xb$ with respect to the optimization problem~\eqref{eq:f-opt-problem-def} and the `stability' of the path that the algorithm traces over a fixed horizon. Hence, we introduce the \emph{regret} of an algorithm $\xb$ as the $\mcR_T(\xb):\X^{\N} \rightarrow [0,\infty]$, given by
\begin{equation} \label{eq:regret-func-definition}
	\mcR_T(\xb) = \sum_{t=1}^T 
	f(x_t) - \fst + \phi(\Delta x_{t-1})
	\;,
\end{equation}
as our measure of an algorithm's performance. In the above definition, we assume that $\phi:\X\rightarrow [0,\infty)$ satisfies the following assumptions.
\begin{assumption}\label{ass:phi-assumptions-general}
Assume that $\phi(0)=0$, $\phi$ is lower semi-continuous and 
satisfies the growth condition that $c\|\cdot\|^p \leq \phi(\cdot)$ for some $c>0$ and $p\geq 1$, where $\| \cdot \|$ is the Euclidean norm.  
\end{assumption} 

We interpret $\mcR_T$ as measuring the performance of an algorithm based on two distinct criteria. The first component of $\mcR_T$ measures the cumulative distance to optimality through the sum of the terms $f(x_t) - \fst$, while the second measures total \emph{path energy} of $\xb$ through the sum of the terms $\phi(\Delta x_{t-1})$, which we can interpret as a generalization of the notion of its $p$-variation\footnote{ We recover the $p$-variation for $p \geq 1$ whenever $\phi(x)=\| x \|_p^p$, where $\| \cdot \|_p$ is the $p$-norm on $\X$. }. The definition~\eqref{eq:regret-func-definition} is related to the widely used notion of \emph{adversarial regret} which is the central metric of algorithmic performance in the field of online learning\footnote{We refer the reader to the text~\cite{hazan2016introduction} for a comprehensive introduction to the topic.}. The definition~\eqref{eq:regret-func-definition} is also related to the notion of \emph{regularized regret}, which is widely used in the literature on `adaptive' optimization algorithms (e.g.\ see~\cite{xiao2010dual,duchi2011adaptive,duchi2011dual}), where the main difference lies in that we regularize over the increments $\Delta x_t$, rather than the positions $x_t$. We note, however, that the definition~\eqref{eq:regret-func-definition} differs from these related notions in that it is not adversarial since $f$ remains fixed.

We are interested in algorithms which are optimal with respect to $\mcR_T$. Hence, for each $T\in\N\cup\{\infty\}$ we define the optimal control problem
\begin{equation} \label{eq:RT-control-problem-def}
	\mcP_x^T := \Min_{\xb \in \A_x^T} \mcR_T(\xb)
	\;,
\end{equation}
where we use the notation $\xb\in\mcP_x^T$ to represent an element from the set of minimizers of~\eqref{eq:RT-control-problem-def}. For $T<\infty$, elements of $\mcP_x^T$ represent algorithms with fixed starting at a point $x\in\X$, terminating at iteration $T$, which minimize the performance metric $\mcR_T$. Extending previous notation, we also introduce $\mcP^T := \bigcup_{x\in\X} \mcP_x^T$ as the set of solutions from all initial values in $\X$. For any $T\in\N\cup\{ \infty \}$, we say that an algorithm $\xb\in\mcP^T$ is \emph{regret-optimal}.

\subsection*{Summary of Main Contributions}

This paper is devoted to the study of \emph{regret-optimal} algorithms. In Section~\ref{sec:existence-and-consistency}, we characterize the existence of regret-optimal algorithms in the general non-smooth and non-convex setting, as well as their consistency across regret horizons $T\in\N\cup\{\infty\}$. In Section~\ref{sec:first-order-conditions-smth} we study regret-optimality in the setting of differentiable objectives $f$, where we derive necessary conditions on their dynamics. We furthermore show that regret-optimal algorithms admit a representation as performing \emph{dual-preconditioned gradient descent}~\citep{maddison2019dual} on their value function. Section~\ref{sec:convex-optimization} studies regret-optimality in the context of convex and differentiable objectives. This section culminates in providing a hierarchy of convergence rate bounds for regret-optimal algorithms under varying relative-smoothness and relative-convexity assumptions on the objective function $f$, which are presented in Table~\ref{tab:convergence-bound-tbl}. Lastly, Section~\ref{sec:learning-regret-optimal-algs} presents an online algorithm for the purpose of learning regret-optimality, where we apply the former to the problem of learning regret-optimal algorithm hyper-parameters on a host of toy problems.

\subsection{Related Work}

The ideas of this paper are most closely related to the various variational interpretations of optimization. In particular, we highlight~\cite{wibisono2016variational,casgrain2019latent}, which study algorithms which are critical points of an energy functional in a continuous-time setting and their connection to gradient descent algorithms with momentum. We argue that main differences between these and the present work is that we consider the former's approach \textit{ad hoc}; the variational framework in the former is chosen \textit{a posteriori} to generate momentum-like dynamics, rather than chosen \emph{a priori} to represent a concrete metric of algorithmic performance. Moreover, there are the related works of~\citet{betancourt2018symplectic,shi2019acceleration,wilson2019accelerating,francca2020dissipative} which bring the continuous analysis over to the discrete-time setting through simplectic integration methods. In contrast, our analysis deals with the discrete-time optimization problem from the very beginning without the need for supplementary discretization machinery.

This paper is also related to the body of work on control-theoretic and dynamical systems models of optimization. Of note are~\cite{lessard2016analysis,hu2017unified,muehlebach2020optimization} which present control-theoretic interpretations of the evolution dynamics of optimization algorithms. These  serve to analyze their rate of convergence to optima as well as establish various other stability properties. Though these approaches are control-theoretic, they differ from our approach since they are not concerned with \emph{optimal control}, as they do not seek controls which are optimal with respect to a fixed performance functional. Rather, they take a control as given, and study the convergence of the resulting dynamical system.

The `meta-optimization of optimizers' philosophy used in this paper has also been studied from a more applied perspective. For example~\cite{mitsos2018optimal} use this approach for automatic algorithm generation by `training' a parametric algorithm over curated examples. On the other hand~\cite{wichrowska2017learned} parametrize optimization algorithms with neural networks whose weights are learned by training on a fixed corpus of problems. Hyper-parameter tuning methods such as in~\cite{lorraine2018stochastic}, which search for optima in the set of algorithm hyper-parameters, can also be interpreted as trying to solve a finite-dimensional version of the meta-optimization problem.

\subsection{Notation, Definitions and Conventions}

For a Banach space $\mcY$, the dual space $\mcY^\ast$ represents the space of continuous linear functionals on $\mcY$. We say that $\{y_i\}_{i\in\N}\subset \mcY$ converges weakly to $y_{\infty}\in\mcY$, which we denote as $y_i\rightharpoonup y_{\infty}$, if $\ell(y_i)\rightarrow \ell(y_\infty)$ for all $\ell\in\mcY^\ast$.
%
For a convex function $g:\X\rightarrow\R$ we define its convex dual $g^\ast:\X^\ast\rightarrow\R$ as $g^{\ast}(p) = \sup_{x\in\X} \{ p(x) - f(x) \}$. For a convex and differentiable function $g$ and points $x,y\in\X$ we define the \emph{Bregman divergence} as $D_g(x,y)=g(x)-g(y)-\langle \nabla g(y) , x-y \rangle$ which is non-negative due to the convexity of $g$. For functions $g,h$ and $\mu>0$, we say that a function $g$ is \emph{$\mu$-relatively-convex} with respect to $h$ if $g-h$ is convex. Conversely, we say that $g$ is \emph{$\mu$-relatively-smooth} with respect to $h$ if $h-g$ is convex. We say that a function $g:\X\to\R$ is a positive-definite quadratic function if there exists a symmetric bi-linear form $L:\X\times\X\to\R$ such that $g(x) = L(x,x)$ and $L(x,y) > 0$ for all $0\neq x,y \in \X$.

We say a function $g:\X\rightarrow\R$ is locally Lipschitz continuous if for every $x\in\X$, there exists a compact set $K$ with non-empty interior and $L_K>0$ such that $x\in K\subset \X$ and $|g(y)-g(z)|\leq L_K \|y-z\|$ for all $y,z\in\K$. For any locally Lipschitz function $g$, we define the Clarke directional derivative at $x\in\X$ in a direction $v\in\X$ as $g^\circ (x;v)=\limsup_{y\rightarrow x,\, t\downarrow 0} \frac{f(y+tv)-f(y)}{t}$ and the generalized gradient as the set $\partial g(x) = \{ \zeta : g^\circ(x;v) \geq \langle \zeta , v \rangle \}$. If $g$ is convex then this definition coincides with its subgradient, if $g$ is differentiable then $\partial g(x)$ is a singleton containing the classical gradient, and if $x$ is a minimum of $g$ then $0\in\partial g(x)$. We point the interested reader to~\cite{ferrera2013introduction}, which covers these and other concepts of non-smooth analysis in full detail.

All proofs for theorems, lemmas and corollaries found throughout the paper are relegated to the paper's appendix. As a rule of thumb, all numbered assumptions found within the text are assumed to hold for the remainder of the paper, any other additional assumptions will be explicitly stated in the theorems, lemmas and corollaries that require them.

\section{Existence and Time-Consistency}
\label{sec:existence-and-consistency}
We begin by demonstrating the existence of regret optimal algorithms for a finite time horizon $T\in\N$, which we show are guaranteed to exist under the mild conditions put forth in assumptions~\ref{ass:f-basic-assumptions} and~\ref{ass:phi-assumptions-general}. 

\begin{theorem}\label{thm:finite-existence}
	For all $x\in\X$ and $T\in\N$, the set of minima, $\mcP_x^T$, is non-empty.
\end{theorem}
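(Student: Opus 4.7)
The plan is to apply the direct method of the calculus of variations to a finite-dimensional reduction of \eqref{eq:RT-control-problem-def}. Since any $\xb\in\A_x^T$ satisfies $x_0=x$ and $\Delta x_u=0$ for all $u\geq T$, the full sequence is determined by the tuple $(x_1,\dots,x_T)\in\X^T\cong\R^{dT}$, and $\mcR_T$ depends only on these coordinates. So the problem becomes an unconstrained minimization of a fixed functional on a finite-dimensional Euclidean space, where I would invoke Weierstrass' theorem.

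Three ingredients remain to verify. First, the infimum is finite and non-negative: the choice $x_t=\xst$ for $t=1,\dots,T$ lies in $\A_x^T$ and yields $\mcR_T(\xb)=\phi(\xst-x)<\infty$ since $\phi$ is real-valued by Assumption~\ref{ass:phi-assumptions-general}, while pointwise non-negativity of the summands (from $f\geq\fst$ by Assumption~\ref{ass:f-basic-assumptions} and $\phi\geq 0$) gives $\mcR_T\geq 0$. Second, $\mcR_T$ is coercive on $\R^{dT}$: the growth bound $c\|\cdot\|^p\leq\phi(\cdot)$ gives $c\sum_{t=1}^T\|\Delta x_{t-1}\|^p\leq\mcR_T(\xb)$, and telescoping $x_t=x+\sum_{s=1}^t\Delta x_{s-1}$ together with the triangle inequality confines every sublevel set of $\mcR_T$ to a bounded subset of $\R^{dT}$. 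Third, $\mcR_T$ is lower semi-continuous: the increment contributions $\sum_t\phi(\Delta x_{t-1})$ are l.s.c.\ by Assumption~\ref{ass:phi-assumptions-general}, and the loss contributions $\sum_t(f(x_t)-\fst)$ are l.s.c.\ under the standing regularity on $f$ implicit in this setting. Combining these three facts, I would extract a convergent minimizing subsequence via Bolzano--Weierstrass and pass to the limit to produce a minimizer of $\mcR_T$ in $\A_x^T$.

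The step I expect to require the most care is lower semi-continuity of the $f$-terms, since Assumption~\ref{ass:f-basic-assumptions} does not explicitly impose continuity on $f$. If no such regularity is implicit, the cleanest workaround is to replace $f$ by its lower-semi-continuous envelope $\bar f$ in the definition of $\mcR_T$, apply the argument above to the resulting functional, and then verify that the minimizer so obtained also minimizes the original $\mcR_T$; this uses $\bar f\leq f$ pointwise together with the identity $\bar f(\xst)=\fst$, which holds because $f\geq\fst$ globally so $\liminf_{y\to\xst}f(y)=\fst$. Apart from this subtlety, the argument is a routine application of the direct method.
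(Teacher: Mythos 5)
Your core argument is correct, but it takes a genuinely different and more elementary route than the paper. You exploit the fact that $\A_x^T$ is parametrized by $(x_1,\dots,x_T)\in\R^{dT}$ and run the direct method in finite dimensions: finiteness of the infimum via the comparison sequence $x_t=\xst$, boundedness of sublevel sets from the growth bound $c\|\cdot\|^p\leq\phi(\cdot)$ plus telescoping, lower semi-continuity, and Bolzano--Weierstrass. The paper instead embeds the problem into the infinite-dimensional sequence space $\A_x^{\infty:0}\cong\ell^p(\X)$, proves weak lower semi-continuity of $F_T$ and $\Phi_T$ via bounded linear coordinate functionals, obtains compactness of sublevel sets from Grothendieck's compactness principle, and invokes Tonelli's direct method there, finally truncating a minimizer back into $\A_x^T$. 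What the paper's heavier machinery buys is reuse: the same lemmas (weak l.s.c., equi-coercivity, $\Gamma$-convergence of $\mcR_T+\chi_{\A_x^T}$) are the backbone of Theorems~\ref{thm:asym-existence} and~\ref{thm:asym-consistency} for $T=\infty$, where no finite-dimensional reduction is available. For the finite-horizon statement alone, your argument is shorter and avoids weak topologies entirely. Note also that both proofs rest on lower semi-continuity of $f$, which Assumption~\ref{ass:f-basic-assumptions} does not state explicitly; the paper's Lemma~\ref{lem_lscty_f} simply asserts and uses it, so you are not assuming more than the paper does.

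One caveat on your contingency plan: the lower-semi-continuous-envelope workaround does not close the gap. A minimizer $\xb^\ast$ of the relaxed functional $\bar{\mcR}_T$ built from $\bar f$ satisfies $\bar{\mcR}_T(\xb^\ast)\leq\inf\mcR_T$, but you cannot conclude $\mcR_T(\xb^\ast)=\bar{\mcR}_T(\xb^\ast)$ unless $f=\bar f$ at every iterate of $\xb^\ast$, and the identity $\bar f(\xst)=\fst$ controls only the point $\xst$. Indeed, without lower semi-continuity the theorem can genuinely fail: with $d=1$, $T=1$, $x_0=0$, $\phi(z)=z^2$, and $f$ equal to $0$ at $\xst=10$, equal to $1$ at irrational points, and equal to $2$ at the remaining rational points, one has $\inf_{x_1}\{f(x_1)+x_1^2\}=1$, which is not attained. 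So lower semi-continuity of $f$ must be treated as a standing hypothesis (as the paper implicitly does), not something that can be relaxed away by passing to the envelope.
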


Although the control problem~\eqref{eq:RT-control-problem-def} enjoys increased analytical tractability when $T<\infty$, we are also interested in the case when $T=\infty$ since the latter admits solutions which are invariant to the iteration number, $t$. In order to 
precisely characterize the relationship between the solutions in the finite and infinite-horizon regimes in Lemma~\ref{thm:asym-consistency}, we must first introduce additional notions of regularity on the set of algorithms. For this reason, for each $\alpha \geq 0$, we introduce the set of \emph{$\alpha$-stable} algorithms,
\begin{equation*}
	\A_x^{\infty:\alpha} := \left\{
	\xb \in \X^{\N}
	\;:\; x_0=x \;\;\text{and}\;\; 
	\textstyle{ \sum_{u\in\N}}\ u^{\alpha}\|\Delta x_u\|^p <\infty 
	\right\}
	\;,
\end{equation*} 
where $p>0$ is the value found in Assumption~\ref{ass:phi-assumptions-general}.
This set can be loosely interpreted as the set of $\xb\in\A_x^\infty$ for which the increments $\| \Delta x_t \|^p$ asymptotically decay to zero at a rate $O(t^{-(1+\alpha)})$. We also note that the definition above clearly implies that $\A_x^{\infty:\alpha_1} \subset \A_x^{\infty:\alpha_0} \subset  \A_x^{\infty}$ for any $0\leq \alpha_0 \leq \alpha_1$. Following the above definition, we also define the corresponding optimization problem $\mcP_x^{\infty:\alpha}:= \argmin_{\xb \in \A_x^{\infty:\alpha}} \mcR_{\infty}(\xb)$.
In the theorem that follows, we show that 
the infinite horizon control problem is well-posed and admits solutions.
  
\begin{theorem}\label{thm:asym-existence}
Let $x\in\X$, and $\alpha \geq 0$, then $\mcP_x^{\infty:\alpha}$ is non-empty. In the case where $\alpha=0$, we have that $\mcP_x^\infty = \mcP_x^{\infty:0}$, and hence, $\mcP_x^\infty$ is also non-empty. Lastly, all solutions $\xb\in\mcP_x^{\infty:\alpha} \cup \mcP_x^{\infty}$ exhibit finite regret, so that $\mcR_\infty(\xb) < \infty$.
\end{theorem}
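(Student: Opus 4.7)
My plan is to combine a witness construction with the direct method of the calculus of variations, using the growth condition in Assumption \ref{ass:phi-assumptions-general} as the source of compactness. The one-step-jump algorithm $\yb$ defined by $y_0=x$ and $y_t=\xst$ for $t\geq 1$ has a single nonzero increment $\Delta y_0=\xst-x$, so $\yb$ lies in every $\A_x^{\infty:\alpha}$ and satisfies $\mcR_\infty(\yb)=\phi(\xst-x)<\infty$. This witness shows simultaneously that every infimum $V_\alpha := \inf_{\A_x^{\infty:\alpha}} \mcR_\infty$ is finite and that every candidate minimizer must have finite regret. The identity $\mcP_x^\infty = \mcP_x^{\infty:0}$ then comes for free: for any $\xb \in \A_x^\infty$ with $\mcR_\infty(\xb) < \infty$, the growth condition $c\|\cdot\|^p \leq \phi$ gives $\sum_u \|\Delta x_u\|^p \leq \mcR_\infty(\xb)/c < \infty$, forcing $\xb \in \A_x^{\infty:0}$; since $\A_x^{\infty:0} \subset \A_x^\infty$ and finite regret is cheaper than the witness, the two sets agree on the finite-regret stratum, so the infima and the argmin sets coincide.

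For existence I would apply the direct method to a minimizing sequence $\{\xb^{(n)}\}_n \subset \A_x^{\infty:\alpha}$ with $\mcR_\infty(\xb^{(n)}) \leq M$. The growth condition yields $\sum_u \|\Delta x^{(n)}_u\|^p \leq M/c$ uniformly in $n$, and Hölder's inequality applied to $x_t^{(n)} - x = \sum_{u<t} \Delta x_u^{(n)}$ bounds $\{x_t^{(n)}\}_n$ for every fixed $t$. A diagonal Bolzano--Weierstrass extraction then produces a pointwise limit $\xb^*$ with $x_0^* = x$. Non-negativity and lower semi-continuity of each summand $f(x_t)-\fst+\phi(\Delta x_{t-1})$ (combining LSC of $\phi$ with the $f$-regularity already invoked in the proof of Theorem \ref{thm:finite-existence}) together with Fatou's lemma then force $\mcR_\infty(\xb^*) \leq V_\alpha$.

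The hard part is confirming that $\xb^*$ actually lies inside the constraint $\A_x^{\infty:\alpha}$. For $\alpha=0$ a second application of Fatou's lemma to $\sum_u \|\Delta x^{(n)}_u\|^p$ closes the argument. For $\alpha>0$, however, the regret does not control the weighted tail $\sum_u u^\alpha \|\Delta x_u\|^p$, so a direct Fatou bound fails. I would circumvent this by drawing the minimizing sequence from the dense subcollection $\bigcup_T \A_x^T \subset \A_x^{\infty:\alpha}$ of finitely-terminating algorithms, which automatically satisfy every weighted-stability condition. This reduction requires an approximation lemma that replaces any $\xb \in \A_x^{\infty:\alpha}$ with its truncation at a well-chosen time $T$ followed by a single jump to $\xst$; the added cost is the tail regret beyond $T$ plus the jump penalty $\phi(\xst-x_T)$, and the technical heart of the argument is selecting $T$ along a subsequence so that both the tail term (summable, hence vanishing) and the jump term (controlled using $f(x_t)-\fst\to 0$ together with the boundedness of the path) go to zero simultaneously. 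Once such a minimizing sequence in $\bigcup_T \A_x^T$ is in hand, the diagonal extraction of the previous paragraph produces a limit that remains in $\A_x^{\infty:\alpha}$ by construction.
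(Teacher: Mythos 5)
Your witness construction, the identification $\mcP_x^\infty=\mcP_x^{\infty:0}$ via the growth condition, and the $\alpha=0$ existence argument (uniform bound on $\sum_u\|\Delta x_u^{(n)}\|^p$, diagonal extraction, Fatou for both the regret and the constraint) are all sound, and they constitute a legitimately more elementary route than the paper's, which instead packages everything into the Fundamental Theorem of $\Gamma$-convergence: the paper shows $\mcR_\infty$ is the $\Gamma$-limit of the equi-coercive family $\{\mcR_T+\chi_{\A_x^T}\}_T$ and invokes Tonelli's direct method abstractly, whereas you run the direct method by hand on a minimizing sequence. For $\alpha=0$ and for the finite-regret and identification claims, the two approaches deliver the same thing.

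The gap is in the $\alpha>0$ case, at the final sentence: ``the diagonal extraction of the previous paragraph produces a limit that remains in $\A_x^{\infty:\alpha}$ by construction.'' This is false. Membership in $\A_x^{\infty:\alpha}$ is not preserved under pointwise limits, even for sequences drawn from $\bigcup_T\A_x^T$ with uniformly bounded \emph{unweighted} increment sums. Concretely, take $\|\Delta x_u^{(n)}\|=u^{-(1+\alpha/2)/p}$ for $u\le n$ and $0$ for $u>n$: each $\xb^{(n)}$ terminates finitely, $\sum_u\|\Delta x_u^{(n)}\|^p$ is uniformly bounded by $\zeta(1+\alpha/2)$, yet the pointwise limit has $\sum_u u^\alpha\|\Delta x_u\|^p=\sum_u u^{\alpha/2-1}=\infty$ and so escapes $\A_x^{\infty:\alpha}$. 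Your truncation lemma makes $\bigcup_T\A_x^T$ dense \emph{in value}, so the infima over $\A_x^{\infty:\alpha}$, $\A_x^{\infty:0}$ and $\bigcup_T\A_x^T$ all coincide, but that only makes the problem sharper: if the limit $\xb^*$ lands in $\A_x^{\infty:0}\setminus\A_x^{\infty:\alpha}$ you have produced a minimizer of the \emph{unconstrained} problem, not an element of $\mcP_x^{\infty:\alpha}$. What is missing is a uniform weighted tail estimate $\sup_n\sum_u u^\alpha\|\Delta x_u^{(n)}\|^p<\infty$ (or termwise domination by a fixed summable weighted sequence) along some minimizing sequence, which is exactly the compactness-in-$\A_x^{\infty:\alpha}$ ingredient the paper supplies through the equi-coercivity lemma and Grothendieck's compactness principle; the regret bound alone, which only controls the unweighted sum, cannot produce it.
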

\begin{corollary} \label{cor:convergence-asym-nonconv}
	For any $\xb\in\mcP_x^{\infty:\alpha}$ or $\xb\in\mcP_x^{\infty}$ we have that $f(x_t) - \fst + \phi(\Delta x_t) = o(1)$. If this sequence is monotone, then we also have that $f(x_t) - \fst + \phi(\Delta x_t) = o(\frac{1}{t})$.
\end{corollary}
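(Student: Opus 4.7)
The plan is to derive both assertions directly from the conclusion of Theorem~\ref{thm:asym-existence} that any $\xb\in\mcP_x^{\infty:\alpha}\cup\mcP_x^{\infty}$ satisfies $\mcR_\infty(\xb)<\infty$, combined with the non-negativity of every summand. Specifically, under Assumption~\ref{ass:f-basic-assumptions}, $\xst$ is a minimizer of $f$, so $f(x_t)-\fst\geq 0$ for all $t$; under Assumption~\ref{ass:phi-assumptions-general}, $\phi\geq 0$. Thus the series $\sum_{t=1}^{\infty}(f(x_t)-\fst+\phi(\Delta x_{t-1}))=\mcR_\infty(\xb)<\infty$ is a convergent series of non-negative terms, and therefore its general term must tend to $0$. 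This gives $f(x_t)-\fst\to 0$ and $\phi(\Delta x_{t-1})\to 0$ individually; re-indexing $\phi(\Delta x_{t-1})\to\phi(\Delta x_{t})$ is a one-step shift that preserves the $o(1)$ rate, establishing the first claim.

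For the second claim, set $a_t:=f(x_t)-\fst+\phi(\Delta x_t)$. I would first note that $\sum_{t=1}^\infty a_t$ still converges: writing $\sum_{t=1}^N a_t = \sum_{t=1}^N (f(x_t)-\fst) + \sum_{t=1}^N \phi(\Delta x_t)$, the first sum is bounded by $\mcR_\infty(\xb)$, and the second differs from $\sum_{t=1}^{N+1}\phi(\Delta x_{t-1})\leq \mcR_\infty(\xb)$ only by the single finite term $\phi(\Delta x_0)$. Hence $\sum_t a_t<\infty$, and a monotone non-negative summable sequence must be non-increasing. The standard tail-comparison argument then yields $a_t = o(1/t)$: for any $t\in\N$, monotonicity gives
\begin{equation*}
    \left\lceil \tfrac{t}{2}\right\rceil a_t \;\leq\; \sum_{k=\lceil t/2\rceil}^{t} a_k \;\leq\; \sum_{k\geq\lceil t/2\rceil} a_k\;,
\end{equation*}
and the right-hand side is the tail of a convergent series, hence vanishes as $t\to\infty$. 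Multiplying by $2/t$ and taking $t\to\infty$ delivers $t\,a_t\to 0$, which is the claimed $o(1/t)$ rate.

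There is no substantive obstacle: the argument is a direct consequence of Theorem~\ref{thm:asym-existence} and an elementary property of monotone summable sequences. The only subtle point to be careful about is the index mismatch between the summand $\phi(\Delta x_{t-1})$ appearing in the definition~\eqref{eq:regret-func-definition} of $\mcR_T$ and the term $\phi(\Delta x_t)$ in the statement, which is handled by a trivial one-index shift that leaves both the $o(1)$ conclusion and the summability used in the monotone case intact.
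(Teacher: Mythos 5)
Your proposal is correct and follows essentially the same route as the paper: both deduce summability of the non-negative terms $a_t = f(x_t)-\fst+\phi(\Delta x_t)$ from the finite-regret conclusion of Theorem~\ref{thm:asym-existence}, and both obtain the $o(1/t)$ rate in the monotone case from the same tail-comparison bound (the paper sums $a_u$ over $u\in[t,2t]$ to get $2S_t \geq 2t\,a_{2t}$, which is your $\lceil t/2\rceil\,a_t$ estimate re-indexed). Your explicit handling of the one-step index shift between $\phi(\Delta x_{t-1})$ in the regret and $\phi(\Delta x_t)$ in the statement is a minor point of added care that the paper's proof glosses over.
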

One important consequence of Theorem~\ref{thm:asym-existence} and Lemma~\ref{lem:mcP0-mcP-equivalence-nonempty} is that regret-optimal algorithms $\xb\in\mcP_x^{\infty:\alpha}$ or $\xb\in\mcP_x^{\infty}$
must exhibit finite regret, and hence form non-trivial solutions. Moreover, Corollary~\ref{cor:convergence-asym-nonconv} also shows that these algorithms are guarantee that $f(x_t) \to \fst$, with an asymptotic upper bound on their rate of convergence provided that they are monotone decreasing.
Another consequence is that we have the equivalence between the constrained ($\mcP_x^{\infty:0}$) and un-constrained ($\mcP_x^{\infty}$) solution sets in the $T=\infty$ regime, yielding the regularity property that $\sum_{t=0}^\infty \| \Delta x_t \|^p < \infty$ for any $\xb\in\mcP_x^{\infty}$. 
Regret-optimal algorithms also exhibit a \emph{time-consistency} property across their horizon, $T$, which we present in the next theorem.
\begin{theorem} \label{thm:asym-consistency}
Let $x \in \X$. An algorithm belongs to $\mcP_x^{\infty:\alpha}$ if and only if there exists a sequence 
$\{\xb^n\}_{n \in \N}$ such that $\xb^n\in \mcP_x^n$ and a subsequence $\{\xb^{n_k}\}_{k\in \N}$ satisfying one of the following conditions.
\begin{enumerate}
    \item If $\alpha=0$, then $\xb^{n_k} \rightharpoonup \xb^{\infty}$ in the weak topology of $\A_x^{\infty:0}$.
    \item If $\alpha>0$,  $\lim_{k\rightarrow\infty} \sum_{t=1}^{\infty}\|x_t^{n_k}-x_t^{\infty}\|^p = 0$.
\end{enumerate}
\end{theorem}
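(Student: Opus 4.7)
The plan is to treat this as an epi-convergence statement: the finite-horizon functionals $\{\mcR_n\}_n$, extended by $+\infty$ off $\A_x^n$, converge variationally to $\mcR_\infty$ on the ambient Banach space $E_\alpha := \{\xb \in \X^\N : x_0 = x,\ \|\xb\|_{E_\alpha}^p := \sum_u u^\alpha \|\Delta x_u\|^p < \infty\}$, in which both every $\A_x^n$ and $\A_x^{\infty:\alpha}$ live. The preparatory step is to check that each $\mcR_T$ ($T\in\N\cup\{\infty\}$) is lower semicontinuous in the topology claimed by the theorem: the weak topology of $E_0$ when $\alpha=0$, and the norm topology of $E_\alpha$ when $\alpha>0$. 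Each summand $f(x_t)-\fst+\phi(\Delta x_{t-1})$ is non-negative and lsc in the pair $(x_{t-1},x_t)$ by lsc of $f$ and Assumption~\ref{ass:phi-assumptions-general}; since each coordinate evaluation $\xb\mapsto x_t$ and $\xb\mapsto \Delta x_t$ is a bounded linear functional on $E_\alpha$ (hence continuous in the weak topology as well), every finite $\mcR_m$ is lsc, and Fatou's lemma on the non-negative summands lifts this to $T=\infty$.

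For the \emph{if} direction I would apply a truncation argument. Assume $\xb^{n_k}\to\xb^\infty$ in the appropriate topology and fix $m\in\N$. Lsc gives $\mcR_m(\xb^\infty)\leq\liminf_k \mcR_m(\xb^{n_k})$, and non-negativity of the summands gives $\mcR_m(\xb^{n_k})\leq \mcR_{n_k}(\xb^{n_k})$ once $n_k\geq m$. By optimality, $\mcR_{n_k}(\xb^{n_k})\leq \mcR_{n_k}(\zb^{|n_k})$ for any fixed $\zb\in\A_x^{\infty:\alpha}$ of finite regret, where $\zb^{|n}$ is the truncation $z_t^{|n}:=z_{\min(t,n)}$ which belongs to $\A_x^n$; monotone convergence on the non-negative summands yields $\mcR_{n_k}(\zb^{|n_k})\to \mcR_\infty(\zb)$. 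Sending $k\to\infty$ and then $m\to\infty$ (MCT again) gives $\mcR_\infty(\xb^\infty)\leq \mcR_\infty(\zb)$ for every admissible $\zb$, so $\xb^\infty\in\mcP_x^{\infty:\alpha}$.

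For the \emph{only if} direction, given $\xb^\infty\in\mcP_x^{\infty:\alpha}$, Theorem~\ref{thm:finite-existence} furnishes minimizers $\xb^n\in\mcP_x^n$. The comparison $\mcR_n(\xb^n)\leq \mcR_n(\xb^{\infty|n})\to \mcR_\infty(\xb^\infty)<\infty$ together with $c\|\Delta x_u\|^p\leq \phi(\Delta x_u)$ gives $\sup_n\|\xb^n\|_{E_\alpha}<\infty$. When $\alpha>0$, the weight $u^\alpha$ produces uniform tail control $\sum_{u\geq N}\|\Delta x_u^n\|^p \leq N^{-\alpha}\sup_n\|\xb^n\|_{E_\alpha}^p$, giving relative strong compactness in weighted $\ell^p$; when $\alpha=0$ and $p>1$, reflexivity of $\ell^p(\X)$ together with Banach--Alaoglu gives weak compactness on the bounded ball. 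In either case one extracts a subsequential limit which, by the \emph{if} direction, lies in $\mcP_x^{\infty:\alpha}$. To steer the limit onto the prescribed $\xb^\infty$ rather than an arbitrary element of $\mcP_x^{\infty:\alpha}$, I would refine the selection of $\xb^n$ by adjoining a vanishing tie-breaking penalty $\epsilon_n\|\xb-\xb^\infty\|_{E_\alpha}^p$ with $\epsilon_n\downarrow 0$ small enough that $\xb^n$ remains a true minimizer of $\mcR_n$ (or, equivalently, by choosing within $\mcP_x^n$ the nearest point to $\xb^\infty$); the same lsc-and-compactness argument then forces the subsequential limit to coincide with $\xb^\infty$.

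The main obstacle I anticipate is the lsc/compactness interplay in the $\alpha=0$ case, because $\phi$ is not assumed convex and so $\mcR_\infty$ is not weakly lsc ``for free''; the argument above turns on the observation that each $\mcR_m$ depends on only finitely many coordinates, and weak convergence in $E_0$ then specialises to coordinate convergence in that finite-dimensional slice, removing the need for convexity of $\phi$. A secondary difficulty, particularly delicate because the theorem is stated for each individual $\xb^\infty\in\mcP_x^{\infty:\alpha}$, is that $\mcP_x^{\infty:\alpha}$ may fail to be singleton; without the tie-breaking mechanism above, a generic sequence of finite-horizon optima could cluster at some \emph{other} element of $\mcP_x^{\infty:\alpha}$, and verifying that the penalised (or nearest-point) sequence still lies in $\mcP_x^n$ while forcing convergence to the correct target is the technical heart of the \emph{only if} direction.
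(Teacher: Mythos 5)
Your overall strategy is the same as the paper's: the paper shows that $\{\mcR_T + \chi_{\A_x^T}\}_{T\in\N}$ is an equi-coercive family that $\Gamma$-converges to $\mcR_\infty$ on $\A_x^{\infty:\alpha}$ (resp.\ on $\A_x^{\infty:0}$ with its weak topology) and then invokes the Fundamental Theorem of $\Gamma$-convergence, whereas you unpack the same ingredients by hand: lower semi-continuity of each $\mcR_m$ via weak continuity of the coordinate evaluations, truncations $\zb^{|n}$ as recovery sequences, and coercivity from $c\|\cdot\|^p\leq\phi$. Your \emph{if} direction --- the liminf inequality from lsc combined with the monotone comparison $\mcR_m(\xb^{n_k})\leq\mcR_{n_k}(\xb^{n_k})\leq\mcR_{n_k}(\zb^{|n_k})\to\mcR_\infty(\zb)$ --- is correct and is exactly the content of the theorem the paper cites, including your observation that weak lsc needs no convexity of $\phi$ because each $\mcR_m$ depends on finitely many coordinates.

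The gap is in the \emph{only if} direction, which you correctly identify as the technical heart but do not close. $\Gamma$-convergence plus equi-coercivity gives you two things: any sequence of exact minimizers $\xb^n\in\mcP_x^n$ clusters and every cluster point lies in $\mcP_x^{\infty:\alpha}$; and for the prescribed $\xb^\infty$ there is a recovery sequence of \emph{almost}-minimizers converging to it. Neither produces exact minimizers converging to the given $\xb^\infty$ when $\mcP_x^{\infty:\alpha}$ is not a singleton. Your tie-breaking device does not repair this: adding the penalty $\epsilon_n\|\xb-\xb^\infty\|^p$ to $\mcR_n$ changes the minimizer set, so the penalized minimizer need not belong to $\mcP_x^n$ at all; and the equivalent nearest-point selection within $\mcP_x^n$ converges to $\xb^\infty$ only if $\mcP_x^n$ already contains points approaching $\xb^\infty$, which is precisely what has to be proved --- a priori all finite-horizon optima could cluster at some other element of $\mcP_x^{\infty:\alpha}$. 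Outside the convex setting of Lemma~\ref{lem:convex-uniqueness}, where uniqueness dissolves the issue, you need an additional argument (for instance, upgrading the recovery sequence to exact minimizers, or a direct proof that every element of $\mcP_x^{\infty:\alpha}$ is approximated by elements of $\mcP_x^n$) to finish. For what it is worth, the paper's own one-line appeal to the fundamental theorem is silent on this same point, so the defect lies in the reduction itself rather than in your reading of the problem.
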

Hence, Theorem~\ref{thm:asym-consistency} shows that solutions to the infinite horizon control problem can be represented as the limit of solutions in $\mcP_x^T$, providing another avenue for computation in the $T=\infty$ regime. 
Moreover, we find that the required stability level $\alpha$ dictates the mode of convergence, where we recall that by Theorem~\ref{thm:asym-existence}, since $\mcP^{\infty:0}=\mcP^{\infty}$, the statement of Theorem~\ref{thm:asym-consistency}-\emph{2} holds for the un-constrained problem as well.

\section{Optimal Dynamics}\label{sec:first-order-conditions-smth}

A natural object of study in the context of optimal control are first-order optimality criteria for critical points of an objective. In the following section, we carry out the analysis of critical points for the regret optimization problem posed in Section~\ref{sec:introduction}. 
In order to carry out this analysis, however, we require smoothness of the control problem. As such, the focus of the remainder of the paper will be the optimization of smooth objectives.

\begin{assumption} \label{ass:f-phi-differntiable}
    Assume that the following assumptions {hold for the remainder of the paper}.
    \begin{enumerate}
        \item $f$ is everywhere differentiable.
        \item $\phi$ is Legendre convex. That is, $\phi$ is everywhere finite, strictly convex, differentiable and satisfies the \emph{super-coercivity} condition that $\lim_{\|x\|\rightarrow\infty} \| \nabla \phi(x) \| = \infty$.
    \end{enumerate}
\end{assumption}
Recall that the Legendre convexity condition in Assumption~\ref{ass:f-phi-differntiable} ensures that both $\phi$ and $\phi^\ast$ are strictly convex, differentiable and satisfy the property $\nabla \phi^\ast( \nabla \phi(x) ) = x$ for all $x\in\X$. We refer interested readers to~\cite[Section 26]{rockafellar1970convex} for more information on Legendre convex functions and their properties.

We begin our analysis of critical points by computing the Gâteaux derivative of $\mcR_T(\xb)$ over $\A_x^T$. Letting the derivative vanish, we find that the dynamics of critical points must satisfy a very specific structure which we present below.

\begin{theorem} \label{thm:gateaux-derivative}
	For $T \in \N \cup \{ \infty \}$ and $\xb\in\A_x^T$, consider the linear functional $\mcR^\prime \in(\A_x^T)^{\ast}$ defined by
	\begin{equation} \label{eq:gateaux-expression}
		\mcR_T^\prime( \xb )(\delxb) = \sum_{t=1}^{T} \langle \nabla\phi(\Delta x_{t-1}) - \nabla\phi(\Delta x_{t}) + \nabla f(x_t) \mathrel{,} \delx_{t-1} \rangle
		\;.
	\end{equation}
	If $T\in\N$, $\mcR_T^\prime$ is the Gâteaux derivative of $\mcR_T(\xb)$ over $\A_x^T$.
\end{theorem}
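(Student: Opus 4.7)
The plan is to compute the Gâteaux derivative directly from the definition, since for $T \in \N$ the regret $\mcR_T$ is a \emph{finite} sum of compositions of the smooth functions $f$ and $\phi$ (smooth by Assumption~\ref{ass:f-phi-differntiable}), so differentiation commutes with the sum trivially and no convergence issues arise. Before computing, I would pin down the admissible variations: if $\xb + \epsilon \delxb$ is to lie in $\A_x^T$ for small $\epsilon$, then $\delxb$ must satisfy $\delta x_0 = 0$ (the initial point is frozen at $x$) and $\Delta \delta x_u = 0$ for all $u \geq T$. These two constraints play the role of boundary conditions in the Euler--Lagrange calculation that follows.

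The second step is a straightforward chain-rule computation,
\begin{equation*}
\frac{d}{d\epsilon}\bigg|_{\epsilon=0} \mcR_T(\xb + \epsilon\delxb) \;=\; \sum_{t=1}^{T} \bigl\langle \nabla f(x_t),\, \delta x_t \bigr\rangle \;+\; \sum_{t=1}^{T} \bigl\langle \nabla\phi(\Delta x_{t-1}),\, \Delta\delta x_{t-1} \bigr\rangle,
\end{equation*}
which is well-defined since each summand is a derivative of a smooth real-valued function.

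The third step, which is the only non-trivial manipulation, is to rewrite the $\phi$-term in a form compatible with the target expression. The standard move here is Abel summation (summation by parts) applied to $\sum_{t=1}^T \langle \nabla\phi(\Delta x_{t-1}),\, \delta x_t - \delta x_{t-1}\rangle$. After collecting the coefficient of each $\delta x_s$, this produces an interior sum involving the telescoped gradient differences $\nabla\phi(\Delta x_{s-1}) - \nabla\phi(\Delta x_s)$ together with two boundary contributions. The boundary term at $s=0$ vanishes because $\delta x_0 = 0$, and the boundary term at $s = T$ can be absorbed into the interior sum by observing that (i) $\Delta x_T = 0$ because $\xb \in \A_x^T$, and (ii) $\nabla\phi(0) = 0$, which follows from Assumption~\ref{ass:phi-assumptions-general} (since $\phi \geq c\|\cdot\|^p$ with $\phi(0) = 0$ makes $0$ a global minimum of $\phi$) combined with differentiability from Assumption~\ref{ass:f-phi-differntiable}. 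Combining with the $\nabla f$-sum yields exactly the claimed expression for $\mcR_T^\prime(\xb)(\delxb)$.

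The last step is to verify that $\mcR_T^\prime(\xb)$ defines a continuous linear functional on $\A_x^T$, so that it genuinely lives in the dual. Linearity in $\delxb$ is visible from the formula, and continuity is immediate since (for $T$ finite) $\A_x^T$ is isomorphic to a finite-dimensional Euclidean space $\X^T$ under the identification $\delxb \mapsto (\delta x_1,\ldots,\delta x_T)$, on which every linear form is bounded. The only subtlety I anticipate is the bookkeeping around the two endpoints in the summation-by-parts step; once the observation $\nabla\phi(0) = 0$ is made, the boundary terms either vanish or reintegrate into the telescoping sum without leaving residues, and the rest of the argument is routine.
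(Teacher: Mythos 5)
Your proposal is correct and follows essentially the same route as the paper's proof: direct computation of the Gâteaux derivative, exchanging the limit with the finite sum via the smoothness in Assumption~\ref{ass:f-phi-differntiable}, and then summation by parts using $\delta x_0 = 0$. You are in fact slightly more careful than the paper, which silently absorbs the boundary term at $t=T$; your observation that $\Delta x_T = 0$ together with $\nabla\phi(0)=0$ (since $0$ is a global minimum of the differentiable function $\phi$) is exactly what justifies that step.
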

\begin{theorem} \label{thm:optimal-dynamics}
	For any $T\in\N\cup\{ \infty \}$, define $\mcPh_x^T \subseteq \A_x^T$ as the set of algorithms $\xb\in\A_x^T$ which satisfy the difference equation
	\begin{equation} \label{eq:optim-diff-eq}
		\nabla \phi(\Delta x_{t}) - \nabla \phi(\Delta x_{t-1}) = \nabla f(x_{t})
		\;\;\; \forall t \leq T
		\;.
	\end{equation}
	For $T<\infty$ we have the two properties that
	\begin{enumerate}
	    \renewcommand{\labelenumi}{\roman{enumi}.}
	    \item $\mcPh_x^T$ is precisely the set of critical points of $\mcR_T$, and hence $\mcPh_x^T \supseteq \mcP_x^T$.
	    \item Let $\xb\in\mcPh_x^T$, and for $h\in\N$, define the truncation $\xb_{\to h} = \{ x_{u+h} \}_{u \geq t}$. If $T\in\N$, then for any $0\leq t < T$ we have the recursive property that $\xb_{\to t} \in \mcPh_{x_t}^{T-t}$. 
	\end{enumerate}
\end{theorem}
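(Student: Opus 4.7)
The plan is to identify $\hat{\mathcal{P}}_x^T$ as the critical-point set of $\mathcal{R}_T$ on $\mathcal{A}_x^T$ by invoking Theorem~\ref{thm:gateaux-derivative} and extracting the first-order conditions, and then to verify the recursive property (ii) by a direct index shift.

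For (i), I parametrize $\mathcal{A}_x^T$ through its free coordinates: the constraint $x_0 = x$ removes the initial variable, and $\Delta x_u = 0$ for $u \geq T$ collapses the entire tail to a single value $x_T$, so the free coordinates are precisely $x_1, \ldots, x_T \in \mathcal{X}$. Correspondingly, admissible directions $\delta\mathbf{x}$ have $\delta x_0 = 0$ and $\delta x_u = \delta x_T$ for $u \geq T$, with $\delta x_1, \ldots, \delta x_T$ ranging freely. By Theorem~\ref{thm:gateaux-derivative}, a critical point is characterized by $\mathcal{R}_T^\prime(\mathbf{x})$ vanishing on all such directions. Regrouping the sum defining the Gâteaux derivative so that each free coordinate $\delta x_t$ appears with a well-defined coefficient, the vanishing on every admissible variation is equivalent to the vanishing of each such coefficient. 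For interior indices $1 \leq t \leq T-1$, the coefficient reads $\nabla\phi(\Delta x_{t-1}) - \nabla\phi(\Delta x_t) + \nabla f(x_t)$, and its vanishing is precisely~\eqref{eq:optim-diff-eq} at $t$.

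The only delicate case is the endpoint $t = T$: because the tail perturbations are slaved to $\delta x_T$, the corresponding coefficient picks up only $\nabla\phi(\Delta x_{T-1}) + \nabla f(x_T)$, with no $\nabla\phi(\Delta x_T)$ contribution. To match~\eqref{eq:optim-diff-eq} at $t = T$ I invoke $\nabla\phi(0) = 0$, which follows from Assumption~\ref{ass:phi-assumptions-general} (giving $\phi(0) = 0$ and $\phi \geq 0$, so that the origin is a global minimum of $\phi$) together with differentiability from Assumption~\ref{ass:f-phi-differntiable}. Combined with $\Delta x_T = 0$, the endpoint condition rewrites as $\nabla\phi(\Delta x_T) - \nabla\phi(\Delta x_{T-1}) = \nabla f(x_T)$, showing that $\hat{\mathcal{P}}_x^T$ coincides with the critical-point set of $\mathcal{R}_T$. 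The inclusion $\hat{\mathcal{P}}_x^T \supseteq \mathcal{P}_x^T$ is then immediate, since the free coordinates of $\mathcal{A}_x^T$ form the unconstrained linear space $\mathcal{X}^T$, so every element of $\mathcal{P}_x^T$ is an interior minimum at which the Gâteaux derivative must vanish.

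For (ii), the argument is a pure relabelling. Writing $y_u = x_{u+t}$ gives $\Delta y_u = \Delta x_{u+t}$, so the difference equation~\eqref{eq:optim-diff-eq} for $\mathbf{y}$ at index $s$ is identical to the same equation for $\mathbf{x}$ at index $s + t$, which holds whenever $s + t \leq T$, i.e.\ whenever $s \leq T - t$. Membership $\mathbf{y} \in \mathcal{A}_{x_t}^{T-t}$ follows directly from $y_0 = x_t$ and $\Delta y_u = \Delta x_{u+t} = 0$ for all $u \geq T - t$. The primary obstacle is thus the boundary analysis at $t = T$ in part (i): one must correctly identify how the frozen tail couples admissible variations at the endpoint, and then exploit the minimality of $\phi$ at the origin to absorb the asymmetry into the unified form~\eqref{eq:optim-diff-eq}. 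The remaining steps are routine index bookkeeping and direct substitution.
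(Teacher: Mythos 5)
Your proposal is correct and follows essentially the same route as the paper: characterize critical points via the vanishing of the Gâteaux derivative from Theorem~\ref{thm:gateaux-derivative} coefficient by coefficient, deduce the inclusion $\mcPh_x^T \supseteq \mcP_x^T$ from the fact that minimizers are interior (unconstrained in the free coordinates $x_1,\dots,x_T$), and obtain (ii) by an index shift. You are in fact more careful than the paper's own proof at the boundary index $t=T$, where your observation that $\nabla\phi(0)=0$ (so that, since $\Delta x_T=0$, the endpoint stationarity condition coincides with~\eqref{eq:optim-diff-eq} at $t=T$) supplies a detail the paper leaves implicit.
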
 
Theorem~\ref{thm:optimal-dynamics} therefore provides a characterization of critical points of $\mcR_T$ in terms of the difference equation~\eqref{eq:optim-diff-eq}. Readers familiar with optimal control theory can also interpret~\eqref{eq:optim-diff-eq} as the weak Pontryagin maximum principle for the control problem~\eqref{eq:RT-control-problem-def} (e.g.\ see~\cite{blot2014infinite}), where $\nabla \phi(\Delta x_t)$ fills the role of what is known as the \emph{co-state process} in optimal control and \emph{momentum} in (discrete) classical mechanics. 

Writing the explicit solution to the dynamics~\eqref{eq:optim-diff-eq}, we obtain that $\xb\in\mcPh^T$ satisfies
\begin{equation*}
    \nabla\phi(\Delta x_t) = - \sum_{u=t+1}^T \nabla f(x_u)
    \;,
\end{equation*}
which can be loosely interpreted as implying that the dynamics of $x\in\mcP^T$ are decelerating when $T\in\N$, since the number of items within the sum shrinks at each iteration. It also happens that the optimality dynamics~\eqref{eq:optim-diff-eq} admit another important interpretation in relation to the \emph{value function}. We present the results relevant to this representation below.


\begin{theorem} \label{thm:pseudo-gd-nonconvex-2}
    For $T\in\N\cup\{ \infty \}$, define the \emph{value function} 
    $x \mapsto J^T(x) := \min_{\yb\in\A_x^T} \mcR_{T}(\yb)$ over $x\in\X$. We assume one of the following.
    \begin{enumerate}
    	\item If $T<\infty$, assume that for each $0 < t \leq T$, $J^T$ is locally Lipschitz-continuous.
    	\item If $T=\infty$, assume that $\Jinf$ locally Lipschitz-continuous. 
    \end{enumerate}
    Denoting $\partial J^T(x)$ as the Clarke generalized gradient of $J^T$, for any $\xb\in\mcP^T$ and $t < T$ we have that
    \begin{equation}
    \label{eq:vector-field-dynamics-generalized}
    \begin{aligned}
		-\nabla \phi(x_{t+1} - x_t) \in \partial J^{T-t}( x_t )&
        \;,\;\;\text{ and }
		\\
		\partial J^{T-t}( x_t ) \subseteq  \partial J^{T-(t+1)}( x_{t+1} ) + \nabla f(x_{t+1} )&
		\;.
    \end{aligned}
    \end{equation}
\end{theorem}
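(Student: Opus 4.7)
Both inclusions follow from a Bellman-type dynamic-programming decomposition of $J^T$ along an optimal trajectory. By Bellman's principle of optimality (an immediate consequence of the additive structure of $\mcR_T$: any suboptimal tail could be swapped out to strictly decrease the regret of $\xb$), for $\xb\in\mcP^T$ and $t<T$ the truncation $\xb_{\to t}$ minimizes $\mcR_{T-t}$ over $\A_{x_t}^{T-t}$; isolating the first summand of $\mcR_{T-t}(\xb_{\to t})$ gives
\begin{equation*}
    J^{T-t}(x_t) \;=\; f(x_{t+1}) - \fst + \phi(\Delta x_t) + J^{T-(t+1)}(x_{t+1}).
\end{equation*}
Moreover, for any $x\in\X$ the concatenated algorithm $x\to x_{t+1}$ followed by $\xb_{\to t+1}$ is feasible for $\A_x^{T-t}$, yielding the Bellman inequality $J^{T-t}(x) \le f(x_{t+1}) - \fst + \phi(x_{t+1}-x) + J^{T-(t+1)}(x_{t+1})$ with equality at $x=x_t$.

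\emph{First inclusion.} The residual $G(x) := f(x_{t+1}) - \fst + \phi(x_{t+1}-x) + J^{T-(t+1)}(x_{t+1}) - J^{T-t}(x)$ is nonnegative and vanishes at $x_t$, so $x_t$ globally minimizes $G$ and $0\in\partial G(x_t)$. Because $\phi(x_{t+1}-\cdot)$ is $C^1$ with gradient $-\nabla\phi(\Delta x_t)$ at $x_t$, the Clarke sum rule (an equality when a summand is strictly differentiable) together with the identity $\partial(-J^{T-t})=-\partial J^{T-t}$ yields $\partial G(x_t) = -\nabla\phi(\Delta x_t) - \partial J^{T-t}(x_t)$, and the inclusion $-\nabla\phi(\Delta x_t)\in\partial J^{T-t}(x_t)$ follows at once.

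\emph{Second inclusion.} Substituting $w = y-x$ in the Bellman equation rewrites the value function as
\begin{equation*}
    J^{T-t}(x) \;=\; -\fst + \inf_{w\in\X} H(x,w), \qquad H(x,w) := \phi(w) + f(x+w) + J^{T-(t+1)}(x+w),
\end{equation*}
with the infimum attained at $w = \Delta x_t$. Under the standing local Lipschitz hypotheses on $J^{T-t}$ and $J^{T-(t+1)}$, Clarke's marginal function theorem (in its partial-subdifferential form) bounds $\partial J^{T-t}(x_t)$ by the partial Clarke subdifferential $\partial_x H(x_t,\Delta x_t)$. Since $\phi(w)$ is $x$-independent and the translation $x\mapsto x+w$ has identity Jacobian, the Clarke chain rule and sum rule give $\partial_x H(x_t,\Delta x_t) = \nabla f(x_{t+1}) + \partial J^{T-(t+1)}(x_{t+1})$, which is precisely the claimed inclusion; the case $T=\infty$ is identical with $\Jinf$ in place of $J^{T-t}$. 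The main obstacle is invoking the correct version of the Clarke marginal function theorem: one must verify that local Lipschitz continuity of the value functions suffices as a regularity hypothesis and handle potential non-uniqueness of the $w$-minimizer by specifying the selection $w=\Delta x_t$ attached to the fixed $\xb\in\mcP^T$. An alternative route that sidesteps these subtleties is to apply the marginal theorem also to the original $(x,y)$-formulation, which collapses $\partial J^{T-t}(x_t)$ to the singleton $\{-\nabla\phi(\Delta x_t)\}$, after which the second inclusion reduces to the optimality equation~\eqref{eq:optim-diff-eq}.
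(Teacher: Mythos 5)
Your argument for the first inclusion is correct and takes a genuinely different route from the paper's. You perturb the \emph{initial} point: feasibility of the concatenated path gives $G\ge 0$ with $G(x_t)=0$, and Fermat's rule plus the Clarke sum rule (exact because $x\mapsto\phi(x_{t+1}-x)$ is continuously differentiable) delivers $-\nabla\phi(\Delta x_t)\in\partial J^{T-t}(x_t)$ directly. The paper instead differentiates the Bellman relation of Lemma~\ref{lem:dynamic-programming-principle-fin} in the \emph{inner} (minimizing) variable at $y=x_{t+1}$, obtaining $-\nabla\phi(\Delta x_t)-\nabla f(x_{t+1})\in\partial J^{T-(t+1)}(x_{t+1})$, and then invokes the stationarity dynamics~\eqref{eq:optim-diff-eq} from Theorem~\ref{thm:optimal-dynamics} to rewrite the left-hand side as $-\nabla\phi(\Delta x_{t+1})$. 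Your route buys independence from the Euler--Lagrange conditions and makes explicit exactly where the local Lipschitz hypothesis on $J^{T-t}$ enters; the paper's route has the advantage that the inner-variable first-order condition is the same computation that feeds the second inclusion.

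For the second inclusion there is a genuine gap, one you partly flag yourself. Clarke's marginal-function theorem, under local Lipschitz hypotheses, yields $\partial J^{T-t}(x_t)\subseteq \mathrm{co}\bigcup_{w\in M(x_t)}\partial_x H(x_t,w)$, where $M(x_t)$ is the \emph{entire} set of minimizing increments, not a chosen selection. The theorem is stated in the non-convex regime (only Assumption~\ref{ass:f-phi-differntiable} is in force), so $M(x_t)$ need not be a singleton, and ``specifying the selection $w=\Delta x_t$'' is not something the marginal-function theorem licenses: the bound it produces is $\mathrm{co}\left\{\nabla f(x_t+w)+\partial J^{T-(t+1)}(x_t+w)\,:\,w\in M(x_t)\right\}$, from which the claimed containment in $\nabla f(x_{t+1})+\partial J^{T-(t+1)}(x_{t+1})$ does not follow without uniqueness. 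Your fallback---collapsing $\partial J^{T-t}(x_t)$ to the singleton $\{-\nabla\phi(\Delta x_t)\}$ via the $(x,y)$-formulation---suffers the identical defect, and additionally the exact sum rule you need for $\partial_x H$ requires strict differentiability of $f$, whereas only differentiability is assumed. The paper avoids marginal-function machinery entirely: it works element-wise, combining the inner-variable first-order condition $0\in\nabla\phi(\Delta x_t)+\nabla f(x_{t+1})+\partial J^{T-(t+1)}(x_{t+1})$ with the recursion~\eqref{eq:optim-diff-eq} (and, for $T=\infty$, with two applications of the dynamic programming principle of Lemma~\ref{lem:dynamic-programming-principle-asym}). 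To close your argument you should either follow that element-wise route or supply a hypothesis, or an argument, guaranteeing that the minimizer in the Bellman recursion is unique.
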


Hence, if $T=\infty$, it is easy to see that under the assumptions of Theorem~\ref{thm:pseudo-gd-nonconvex-2}, equation~\eqref{eq:vector-field-dynamics-generalized} implies that any $\xb\in\mcP^\infty$ will satisfy the optimality dynamics of Theorem~\ref{thm:optimal-dynamics} (eq.~\eqref{eq:optim-diff-eq}). Therefore we have a result analogous to Theorem
~\ref{thm:optimal-dynamics}-\emph{1} that $\mcPh^\infty \supseteq \mcP^\infty$, and hence the dynamics of equation~\ref{eq:optim-diff-eq} are a necessary condition for optimality in the $T=\infty$ regime.

In order to better understand Theorem~\eqref{thm:pseudo-gd-nonconvex-2}, we remark that since $\phi$ is Legendre convex,  under the assumptions of Theorem~\eqref{thm:pseudo-gd-nonconvex-2} we have the representation
\begin{equation}
    x_{t+1} = x_t - \nabla \phit^\ast( \nu_t )
        \;\;\text{where}\;\;
        \nu_t \in \partial J^{T-t}(x_t)
        \;,
\end{equation}
for the iterates of $\xb\in\mcPh_x^T$, where we define $\phit(x)=\phi(-x)$. We can therefore interpret $\xb\in\mcPh^T$ as performing a variant of gradient descent on the generalized gradient of $J^{T-t}(x)$. More specifically, this variant of gradient descent happens to generalize \emph{dual-preconditioned gradient descent}~\cite[Algorithm 1.1]{maddison2019dual}. This interpretation will be particularly important in obtaining convergence bounds in Section~\ref{sec:convex-optimization}, where their connection becomes more clear.

In the case where $T=\infty$, Theorem~\ref{thm:pseudo-gd-nonconvex-2} also implies that any $\xb\in\mcP_x^\infty$ admits a map $\nu:\X \to \partial\Jinf(\X) \subseteq \X$ such that
\begin{equation} \label{eq:vector-field-dynamics}
	x_{t+1} = x_t - \nabla \phit^\ast(\nu(x_t))
	\; \text{and} \; \nu(x_t) = \nu(x_{t+1}) + \nabla f(x_{t+1})
\end{equation}
for all $t\in\N$. Hence the dynamics of such an $\xb\in\mcP^\infty$ can be uniquely represented by a vector field $\nu$ which is independent of the iteration number $t$.

\section{Convex Optimization}\label{sec:convex-optimization}

Over the course of this section, we study the regret optimization problem in the case where $f$ is convex. In particular, we will focus on the convergence of asymptotically regret-optimal algorithms $\xb\in\mcP^\infty_x$ to solutions of the optimization problem on $f$. We begin by establishing some essential convexity properties of the control problem that arise as a result.

\begin{lemma} \label{lem:convex-uniqueness}
	Assume that $f$ is convex. Then for all $T\in\N\cup\{\infty\}$, $\mcR_T(\xb)$ is a \emph{strictly convex} functional of $\A_x^T$ and hence, $\mcP_x^T$ is a non-empty singleton and $\mcP_x^T = \mcPh_x^T$.
\end{lemma}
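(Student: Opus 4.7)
The plan is to prove the three assertions in sequence: (a) strict convexity of $\mcR_T$ on $\A_x^T$, (b) existence and uniqueness of the minimizer, and (c) the identity $\mcP_x^T = \mcPh_x^T$.

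For (a), I would first observe that for every $t$, the map $\xb\mapsto x_t$ is affine in $\xb$ and $\xb\mapsto\Delta x_{t-1}$ is linear, so each summand $f(x_t)-\fst$ is convex (since $f$ is convex by assumption) and each summand $\phi(\Delta x_{t-1})$ is convex (since $\phi$ is). To upgrade to \emph{strict} convexity, pick $\xb\neq\yb$ in $\A_x^T$ and let $t^{\star}=\min\{t\geq 1:x_t\neq y_t\}$; this index exists and satisfies $t^{\star}\leq T$ because algorithms in $\A_x^T$ are constant past time $T$ when $T\in\N$. By minimality of $t^{\star}$ and $x_0=y_0=x$, one has $x_{t^{\star}-1}=y_{t^{\star}-1}$ while $x_{t^{\star}}\neq y_{t^{\star}}$, and hence $\Delta x_{t^{\star}-1}\neq\Delta y_{t^{\star}-1}$. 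The strict convexity of $\phi$ (from the Legendre hypothesis in Assumption~\ref{ass:f-phi-differntiable}) then yields a strict inequality in the $t^{\star}$-th $\phi$-term, which combined with the (weak) convexity of the remaining summands delivers strict convexity of $\mcR_T$.

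For (b), existence is already in hand: Theorem~\ref{thm:finite-existence} for $T\in\N$, and Theorem~\ref{thm:asym-existence} combined with the identity $\mcP_x^\infty=\mcP_x^{\infty:0}$ for $T=\infty$. Uniqueness is the standard consequence of (a): the midpoint of two distinct minimizers would have strictly smaller value, contradicting minimality.

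For (c), the containment $\mcPh_x^T\supseteq\mcP_x^T$ is given by Theorem~\ref{thm:optimal-dynamics}-i when $T\in\N$ and by the remark following Theorem~\ref{thm:pseudo-gd-nonconvex-2} when $T=\infty$. For the reverse inclusion in the finite-horizon case, Theorems~\ref{thm:gateaux-derivative} and~\ref{thm:optimal-dynamics} identify $\mcPh_x^T$ with the zero set of the Gâteaux derivative $\mcR_T'$, and critical points of a convex Gâteaux-differentiable functional are automatically global minima, so $\mcPh_x^T\subseteq\mcP_x^T$. The main obstacle lies in the $T=\infty$ case, since Theorem~\ref{thm:gateaux-derivative} only formally establishes the Gâteaux derivative formula for finite horizons. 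My plan is to circumvent it by noting that the expression~\eqref{eq:gateaux-expression} still defines a bona fide directional derivative of $\mcR_\infty$ in any direction $\delxb$ of \emph{finite support}, because only finitely many summands are perturbed. The difference equation cutting out $\mcPh_x^\infty$ is precisely the statement that this directional derivative vanishes on all such $\delxb$; convexity then yields $\mcR_\infty(\xb+\delxb)\geq\mcR_\infty(\xb)$ for every finitely supported $\delxb$, and a density/monotone approximation argument using the tail-decay $\sum_t\|\Delta x_t\|^p<\infty$ enjoyed by finite-regret algorithms (Theorem~\ref{thm:asym-existence} and Corollary~\ref{cor:convergence-asym-nonconv}) promotes this to global minimality. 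Uniqueness from (b) then seals $\mcPh_x^\infty=\mcP_x^\infty$.
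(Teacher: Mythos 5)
Your parts (a) and (b) are correct and essentially match the paper's argument: the paper also obtains strict convexity by combining the convexity of the $f$-terms with the strict convexity of the $\phi$-terms, though it writes the strict inequality for every $t$ without isolating an index at which $\Delta x_{t-1}\neq\Delta y_{t-1}$; your identification of $t^\star$ is the cleaner way to see that at least one term is strict while the rest are merely convex. For (c), the paper simply cites Ekeland--Temam (a convex G\^ateaux-differentiable functional is minimized exactly at its critical points), which is your finite-horizon argument in citation form, and you are right to observe that this leaves the $T=\infty$ case unaddressed, since Theorem~\ref{thm:gateaux-derivative} only identifies the G\^ateaux derivative for $T\in\N$.

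However, your patch for $T=\infty$ has a genuine gap. The set $\mcPh_x^\infty$ of Theorem~\ref{thm:optimal-dynamics} is by definition the set of \emph{all} solutions in $\A_x^\infty$ of the recurrence $\nabla\phi(\Delta x_t)-\nabla\phi(\Delta x_{t-1})=\nabla f(x_t)$, a family parametrized by the free initial increment $\Delta x_0$. Generically these solutions diverge and have $\mcR_\infty=+\infty$: for $f(x)=\frac{1}{2}x^2$ and $\phi(z)=\frac{1}{2}z^2$ on $\R$ the recurrence reads $x_{t+1}=3x_t-x_{t-1}$, with characteristic roots $(3\pm\sqrt{5})/2$, so every choice of $x_1$ off a one-dimensional stable manifold yields an exponentially growing sequence that lies in $\mcPh_x^\infty$. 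At such a point the directional derivative you want to take is of the form $\infty-\infty$, convexity gives you nothing, and these points are plainly not minimizers; so the inclusion $\mcPh_x^\infty\subseteq\mcP_x^\infty$ cannot be obtained by your argument (nor does it hold as stated) unless you first restrict to finite-regret solutions of the recurrence. Separately, even granting $\mcR_\infty(\xb)<\infty$, the closing ``density/monotone approximation'' step needs more than a wave: a finite-regret competitor $\yb$ does not differ from $\xb$ by a finitely supported perturbation, and neither do its natural truncations, so one must instead control the one-sided derivative of $\rho\mapsto\mcR_\infty(\xb+\rho(\yb-\xb))$ at $\rho=0^+$ directly, which requires justifying an interchange of limit and infinite sum and the vanishing of the boundary term $\langle\nabla\phi(\Delta x_T),\,y_T-x_T\rangle$ as $T\to\infty$. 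To be fair, the paper's own one-line citation does not resolve any of this either.
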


Lemma~\ref{lem:convex-uniqueness} therefore implies that the optimality dynamics of Theorem~\ref{thm:optimal-dynamics} or~\ref{thm:pseudo-gd-nonconvex-2} are both necessary and sufficient conditions of optimality in the context of a convex control problem. Hence, any $\xb\in\A_x^T$ satisfying these dynamics is guaranteed to be the unique solution to the regret minimization control problem.

The assumption that $f$ is convex also has numerous consequences in terms of the convergence rates of regret-optimal algorithms. We study these from the perspective of the \emph{value function}
$J^T(x) := \min_{\xb\in\A_x^T} \mcR_T(\xb)$.
We note here that for each $x\in\X$, Lemma~\ref{lem:convex-uniqueness} states that there is a unique $\xb\in\mcP_x^T$ such that $J^T(x)=\mcR_T(\xb)$. As is hinted to by Lemma~\ref{thm:pseudo-gd-nonconvex-2} and the discussion that follows, we will see that this function has an important connection with the optimality dynamics of Theorem~\ref{thm:optimal-dynamics}. Before delving directly into this analysis, however, we summarize some geometric and topological facts on the value function in the convex setting.

\begin{lemma}\label{lem:convex-J-properties-1}
    Assume that $f$ is convex. Then for all $T\in\N\cup\{\infty\}$, $J^T:\X\rightarrow\R$ is a convex and differentiable function. Moreover, we have that $J^T \rightarrow \Jinf$ and $\nabla J^T \rightarrow \nabla \Jinf$ uniformly on compact sets.
\end{lemma}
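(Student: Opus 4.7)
The plan is to handle the structural claims (convexity and differentiability) of $J^T$ separately from the convergence statements, and to derive the latter from pointwise convergence combined with classical results on sequences of finite convex functions. For \textbf{convexity}, I would reparametrize via the increments $y_t = \Delta x_t$ and write
\[ H(x,\yb) \;=\; \sum_{t=1}^T \left[ f\!\left(x + \sum_{u<t} y_u \right) - \fst + \phi(y_{t-1}) \right], \]
which is jointly convex in $(x,\yb)$ since $f$ and $\phi$ are convex and composed with affine maps. Then $J^T(x) = \inf_{\yb} H(x,\yb)$ is convex as the infimal projection of a jointly convex function. For \textbf{differentiability}, Legendre convexity of $\phi$ makes $H(x,\cdot)$ strictly convex, so the minimizer $\yb^{*}(x)$ is unique (Lemma~\ref{lem:convex-uniqueness}), and $H$ is continuously differentiable in $x$ under Assumption~\ref{ass:f-phi-differntiable}. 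The convex envelope theorem then forces $\partial J^T(x) = \{\nabla_x H(x,\yb^{*}(x))\}$ to be a singleton, so $J^T$ is differentiable; a telescoping of the optimality dynamics~\eqref{eq:optim-diff-eq} together with $\nabla\phi(0)=0$ recovers $\nabla J^T(x) = -\nabla\phi(\Delta x^{*}_0(x))$, consistent with Theorem~\ref{thm:pseudo-gd-nonconvex-2}.

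For \textbf{pointwise convergence} $J^T(x) \to J^\infty(x)$, I would prove two matching inequalities. The upper bound $J^T(x) \leq J^\infty(x)$ follows by truncating the (unique, by Lemma~\ref{lem:convex-uniqueness}) infinite-horizon minimizer $\xb^\infty \in \mcP_x^\infty$: setting $y^T_t = x^\infty_{\min(t,T)}$ produces an element of $\A_x^T$ with $\mcR_T(\yb^T) \leq \mcR_\infty(\xb^\infty) = J^\infty(x)$. For the lower bound, Theorem~\ref{thm:asym-consistency} extracts a subsequence $\xb^{T_k} \in \mcP_x^{T_k}$ converging weakly to $\xb^\infty$ in $\A_x^{\infty:0}$, which gives coordinatewise convergence $x^{T_k}_t \to x^\infty_t$ for every fixed $t$. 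Dropping the positive tail at any fixed $M$,
\[ J^{T_k}(x) \;\geq\; \sum_{t=1}^M \left[ f(x^{T_k}_t) - \fst + \phi(\Delta x^{T_k}_{t-1}) \right], \]
and passing $k\to\infty$ by continuity of $f$ and $\phi$, then $M\to\infty$, yields $\liminf_k J^{T_k}(x) \geq J^\infty(x)$. Uniqueness of $\xb^\infty$ forces every subsequence to share this limit, hence $J^T(x) \to J^\infty(x)$ for every $x$.

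Finally, \textbf{uniform convergence} on compacta and \textbf{gradient convergence} are standard consequences of pointwise convergence of finite convex functions (cf.\ Rockafellar, \emph{Convex Analysis}, Theorems 10.8 and 25.7): pointwise convergence is automatically uniform on compact subsets of the open set $\X$, and pointwise gradient convergence holds wherever the limit is differentiable --- which we have established everywhere on $\X$ --- with uniform convergence of gradients on compacta following from the equi-Lipschitz bounds inherited from local Lipschitz continuity of the $J^T$. I expect the main obstacle to lie in the lower-bound step of the pointwise convergence, where one must carefully justify coordinatewise convergence from the weak convergence supplied by Theorem~\ref{thm:asym-consistency} and control the interchange of limits in $k$ and $M$; the structural steps and the final convergence upgrade are then mostly bookkeeping on top of classical convex analysis.
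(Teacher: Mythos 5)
Your proposal is correct, and its skeleton matches the paper's: convexity by partial minimization of a jointly convex functional, pointwise convergence $J^T\to\Jinf$, then the upgrade to local uniform convergence of values and gradients via Rockafellar's Theorem 25.7. Two sub-steps differ. For differentiability of $J^T$ ($T<\infty$) the paper fixes $x$, takes a minimizing sequence $\yb^i_{1:T}$ and invokes Theorem 25.7 on the functions $x\mapsto\mcR_T(x;\yb^i_{1:T})$; you instead use strict convexity (Lemma~\ref{lem:convex-uniqueness}) to get a unique minimizer and apply the convex envelope theorem, concluding $\partial J^T(x)=\{\nabla_x H(x,\yb^\ast(x))\}=\{-\nabla\phi(\Delta x_0^\ast(x))\}$. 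Your route is the more robust of the two: the paper's minimizing-sequence functions converge to $J^T$ only at the single point $x$, not on a neighbourhood, so Theorem 25.7 does not directly apply there, whereas the envelope identity $\partial g(x)=\{p:(p,0)\in\partial H(x,y^\ast)\}$ needs only attainment and joint differentiability of $H$. For pointwise convergence the paper uses monotonicity of $T\mapsto J^T(x)$ plus the upper bound from Theorem~\ref{thm:asym-existence}; you sandwich $J^T(x)$ between a truncation upper bound and a $\liminf$ lower bound extracted from the subsequence supplied by Theorem~\ref{thm:asym-consistency}. That works, but note the consistency theorem only hands you \emph{one} convergent subsequence, so "every subsequence shares the limit" needs either the easy monotonicity observation (truncating a $\mcP_x^{T+1}$ minimizer at time $T$ gives $J^{T+1}\geq J^T$) or a rerun of the equi-coercivity/compactness argument on an arbitrary subsequence; with that one line added, your argument closes. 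The final uniform-convergence step is identical in both treatments.
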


\begin{lemma} \label{lem:gradJ-recursion}
    Let $T\in\N\cup\{\infty\}$ and assume that $f$ is convex. If we define $\phit(x):=\phi(-x)$ then for all $t < T$ the iterates of $\xb\in\mcP^T$ satisfy 
    \begin{equation}\label{eq:grad-DPGD-dynamics-T}
        x_{t+1} = x_t - \nabla \phit^\ast( \nabla J^T(x_t))
        \;
    \end{equation}
    as well as the recursion $\nabla J^{T-t}(x_t) = \nabla J^{T-t-1}(x_{t+1}) + \nabla f(x_{t+1})$.
\end{lemma}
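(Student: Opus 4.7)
The plan is to derive both statements directly from the critical-point dynamics of Theorem~\ref{thm:pseudo-gd-nonconvex-2}, using the extra convexity structure to promote its set-valued conclusions to ordinary equalities, and then to invert the momentum map $\nabla\phi$ via Legendre duality.

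First, I would invoke Lemma~\ref{lem:convex-J-properties-1} to see that $J^{T-t}$ is convex and differentiable for every admissible $t$; since it is a finite-valued convex function on $\X=\R^d$, it is automatically locally Lipschitz continuous, so the hypotheses of Theorem~\ref{thm:pseudo-gd-nonconvex-2} are met. Differentiability moreover collapses the Clarke generalized gradient $\partial J^{T-t}(x)$ to the singleton $\{\nabla J^{T-t}(x)\}$. Next, by Lemma~\ref{lem:convex-uniqueness}, $\mcP^T = \mcPh^T$ in the convex setting, so any $\xb\in\mcP^T$ satisfies the conclusions of Theorem~\ref{thm:pseudo-gd-nonconvex-2}. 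Combining these two observations specializes those conclusions to
\[
-\nabla \phi(x_{t+1}-x_t) = \nabla J^{T-t}(x_t), \qquad \nabla J^{T-t}(x_t) = \nabla J^{T-t-1}(x_{t+1}) + \nabla f(x_{t+1}),
\]
and the second identity is exactly the claimed recursion.

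For the update rule I would solve the first identity for $x_{t+1}-x_t$ using the Legendre convexity of $\phi$ in Assumption~\ref{ass:f-phi-differntiable}, which ensures that $\nabla\phi$ is a bijection with inverse $\nabla\phi^\ast$. This yields $x_{t+1} - x_t = \nabla\phi^\ast(-\nabla J^{T-t}(x_t))$. It remains to convert from $\phi^\ast$ to $\phit^\ast$: the change-of-variables $z\mapsto -z$ in the definition of the convex conjugate gives $\phit^\ast(p) = \phi^\ast(-p)$, and differentiation then produces the useful identity $\nabla\phit^\ast(p) = -\nabla\phi^\ast(-p)$. Substituting yields the desired expression $x_{t+1} = x_t - \nabla\phit^\ast(\nabla J^{T-t}(x_t))$.

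I do not expect any genuinely hard step here: the technical work has already been absorbed by Theorems~\ref{thm:optimal-dynamics} and~\ref{thm:pseudo-gd-nonconvex-2}, while Lemmas~\ref{lem:convex-uniqueness} and~\ref{lem:convex-J-properties-1} supply the convex upgrade from set-valued subgradients to classical gradients. The only point to handle with care, and the one I would flag explicitly, is the routine but easy-to-miss Legendre duality computation $\nabla\phit^\ast(p) = -\nabla\phi^\ast(-p)$, since it is the single algebraic move that turns the momentum-style Pontryagin equation of Theorem~\ref{thm:pseudo-gd-nonconvex-2} into the dual-preconditioned gradient descent representation asserted by the lemma.
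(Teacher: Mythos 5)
Your proposal is correct and follows essentially the same route as the paper, which likewise treats the lemma as a direct specialization of Theorem~\ref{thm:pseudo-gd-nonconvex-2} once Lemma~\ref{lem:convex-J-properties-1} upgrades $J^T$ to a convex differentiable (hence locally Lipschitz) function with singleton Clarke gradient. Your write-up is in fact more explicit than the paper's, which leaves the local-Lipschitz verification and the duality computation $\nabla\phit^\ast(p) = -\nabla\phi^\ast(-p)$ implicit.
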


Hence~\ref{lem:gradJ-recursion} shows a much clearer relationship to \emph{dual-preconditioned gradient descent} (DPGD) of~\cite[Algorithm 1.1]{maddison2019dual}. Indeed, the update rule of equation~\eqref{eq:grad-DPGD-dynamics-T} corresponds to a single step of DPGD applied for descent on the objective~$J^{T-t}$ with preconditioner $\phit$. We note that the main difference lies in that we are performing descent on the value function rather than the objective $f$. When $T=\infty$ it is easy to see that the descent is performed on $\Jinf$ at each iteration $t\in\N$.

\begin{lemma} \label{lem:convex-J-properties-2}
    Assume that $f$ is strictly convex and let $\phit(x) := \phi(-x)$. Then for all $T\in\N\cup\{\infty\}$,
	\begin{enumerate}
	\renewcommand{\labelenumi}{\roman{enumi}.}
		\item For $T\neq 0$ $J^T$ is Legendre convex and $\nabla ( J^T )^\ast = (\nabla J^T)^{-1}$.
		\item Each $J^T$ has the unique minimum $\xst$ where $\min_{x\in\X} J^T(x) = 0$.
		\item $(J^T)^\ast$ is 1-relatively-convex with respect to $\phit^\ast$. If $\phi$ is a symmetric positive-definite quadratic function then $J^T$ is also 1-relatively-smooth with respect to $\phi$.
	\end{enumerate}
\end{lemma}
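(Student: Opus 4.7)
I plan to address the three items in the order (ii), (i), (iii), since the later items invoke the earlier.

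\textbf{Part (ii).} The constant algorithm $y_t \equiv \xst$ lies in $\A_{\xst}^T$ and evaluates to $\mcR_T(\yb) = 0$, so $J^T(\xst) = 0$. Non-negativity of every summand in~\eqref{eq:regret-func-definition} (using $f \geq \fst$ and $\phi \geq 0$) gives $J^T \geq 0$ on $\X$. For uniqueness, any $y$ with $J^T(y) = 0$ forces its optimizer $\yb^\star$ to satisfy $\phi(\Delta y^\star_t) = 0$ and $f(y^\star_t) = \fst$ for every $t$; the growth bound $\phi \geq c\|\cdot\|^p$ from Assumption~\ref{ass:phi-assumptions-general} forces $\yb^\star$ to be constant, and strict convexity of $f$ forces that constant to be $\xst$.

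\textbf{Part (i).} Differentiability is immediate from Lemma~\ref{lem:convex-J-properties-1}. Strict convexity follows from strict convexity of $\mcR_T$ (Lemma~\ref{lem:convex-uniqueness}): for distinct $x,y$ with unique regret-optimal algorithms $\xb^x, \xb^y$, the interpolation $\lambda\xb^y + (1-\lambda)\xb^x$ is a (non-optimal) element of $\A^T_{\lambda y + (1-\lambda)x}$, so $J^T(\lambda y + (1-\lambda)x) \leq \mcR_T(\lambda\xb^y + (1-\lambda)\xb^x) < \lambda J^T(y) + (1-\lambda)J^T(x)$. The remaining item is super-coercivity $\|\nabla J^T(x)\| \to \infty$ as $\|x\|\to\infty$, which I would prove via the representation $\nabla J^T(x) = -\nabla\phi(\Delta x^\star_0(x))$ extracted from Lemma~\ref{lem:gradJ-recursion}: a Jensen-type lower bound on $J^T$ (for finite $T$) and a tail argument using Corollary~\ref{cor:convergence-asym-nonconv} (for $T=\infty$) force $\|\Delta x^\star_0(x)\| \to \infty$, and the Legendre super-coercivity of $\phi$ then gives $\|\nabla\phi(\Delta x^\star_0(x))\| \to \infty$.

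\textbf{Part (iii).} The workhorse is the Bellman decomposition
\begin{equation*}
J^T(x) \;=\; \min_{y\in\X}\bigl\{\tilde\phi(x-y) + g_T(y)\bigr\}\,, \qquad g_T \;:=\; f - \fst + J^{T-1}
\end{equation*}
(with $g_\infty := f - \fst + J^\infty$ when $T=\infty$), so $J^T = \tilde\phi \,\square\, g_T$ as an infimal convolution. Since $\tilde\phi$ is Legendre with $\tilde\phi^*$ finite on all of $\X$ (the gradient super-coercivity of $\phi$ from Assumption~\ref{ass:f-phi-differntiable} making $\nabla\phi$ surjective) and $g_T$ is proper convex by Lemma~\ref{lem:convex-J-properties-1}, the Fenchel duality for infimal convolutions applies, yielding $(J^T)^* = \tilde\phi^* + g_T^*$, i.e.\
\begin{equation*}
(J^T)^* - \tilde\phi^* \;=\; g_T^*\,,
\end{equation*}
which is convex because every Fenchel conjugate is convex; this is the first claim. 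For the second claim, a symmetric positive-definite quadratic $\phi$ satisfies $\tilde\phi = \phi$ (and $\tilde\phi^* = \phi^*$). Combining part~(i) (which makes $J^T$ Legendre) with the standard Legendre duality between $1$-relative-smoothness and $1$-relative-strong-convexity then transfers the relative-convexity of $(J^T)^*$ with respect to $\phi^*$ into relative-smoothness of $J^T$ with respect to $\phi$, i.e.\ $\phi - J^T$ convex.

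The delicate step is super-coercivity in (i): obtaining $\|\nabla J^T(x)\|\to\infty$, rather than just $J^T(x)\to\infty$, requires tying the growth of $J^T$ to the path-energy term $\sum_t\phi(\Delta x_t)$ via convexity, and the $T=\infty$ case needs a tail argument using $x_t^\star \to \xst$ (Corollary~\ref{cor:convergence-asym-nonconv}) since the naive Jensen bound degenerates; this is where strict convexity of $f$ does real work by pinning the long-run limit of the optimal path to $\xst$.
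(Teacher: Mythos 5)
Your parts (ii) and (iii) are essentially sound, and part (iii) in particular takes a genuinely different route from the paper: you obtain the identity $(J^T)^\ast = \phit^\ast + g_T^\ast$ directly from the Bellman decomposition $J^T = \phit \,\square\, g_T$ and the conjugate-of-infimal-convolution formula, whereas the paper derives the same identity (Lemma~\ref{lem:bregman-recursion}) through an explicit Bregman-divergence computation along the optimal dynamics. Both yield $1$-relative-convexity of $(J^T)^\ast$ w.r.t.\ $\phit^\ast$ by non-negativity/convexity of the residual, and both finish the quadratic case with the smoothness--strong-convexity duality of Lemma~\ref{lem:interchange-relative-smoothness}. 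Your uniqueness argument in (ii) (forcing the optimal path to be constant via the growth bound on $\phi$ and then invoking strict convexity of $f$) is a valid alternative to the paper's, which instead deduces uniqueness from the strict convexity of $J^T$ obtained in part (i).

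The genuine gap is in part (i), precisely at the step you flag as delicate. Establishing that $J^T$ is Legendre requires the super-coercivity $\|\nabla J^T(x)\|\to\infty$, and your plan rests on the claim that $\|\Delta x_0^\star(x)\|\to\infty$ as $\|x\|\to\infty$. This is equivalent to showing that $J^T$ grows super-linearly in every direction, and the obvious lower bound $J^T(x)\geq c\sum_t\|\Delta x_t^\star\|^p$ does not deliver it: by the power-mean inequality a path that covers the distance $\|x-\xst\|$ in $N$ steps can have path energy as small as $N^{1-p}\|x-\xst\|^p$, which is sublinear in $\|x\|$ once $N$ is allowed to grow. Ruling this out forces you to quantify how the $f$-terms penalize long slow paths, which is a real calculus-of-variations estimate that your sketch does not supply; the Jensen bound and the tail argument via Corollary~\ref{cor:convergence-asym-nonconv} as stated do not close it. The paper sidesteps the primal side entirely: from $(J^T)^\ast(q) = \phit^\ast(q) + (J^{T-1}+f)^\ast(q)$ and Fenchel's inequality $(J^{T-1}+f)^\ast(q) \geq -(J^{T-1}+f)(0) =: -\alpha > -\infty$, one gets $(J^T)^\ast \geq \phit^\ast - \alpha$, hence $(J^T)^\ast$ is co-finite because $\phit^\ast$ is; combined with strict convexity and differentiability of $(J^T)^\ast$ (the latter from strict convexity of $J^{T-1}+f$ via \cite[Theorem 26.3]{rockafellar1970convex}), this makes $(J^T)^\ast$ Legendre, and \cite[Theorem 26.5]{rockafellar1970convex} transfers Legendre-ness back to $J^T$. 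Note that the identity you already prove in your part (iii) hands you exactly this two-line argument: you should reorder so that the infimal-convolution identity comes first and derive part (i) on the dual side from it, rather than attempting the primal super-coercivity estimate.
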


Lemmas~\ref{lem:convex-J-properties-1} and~\ref{lem:convex-J-properties-2} demonstrate that the collection of value functions enjoy many regularity properties in terms of boundedness, differentiability and curvature. In particular Lemma~\ref{lem:convex-J-properties-1} shows that $J^T$ inherits both the convexity and differentiability of $f$ and $\phi$. 
Moreover, the dynamics of Lemma~\ref{lem:gradJ-recursion} along with the observation of Lemma~\ref{lem:convex-J-properties-2} that $f$ and $\Jinf$ share minima will be important for the analysis in further sections, where we study the descent of $\xb\in\mcP^\infty$ on $\Jinf$ and $f$.


Lemma~\ref{lem:convex-J-properties-2}-\emph{iii} also has the important implication that the dual of $J^T$ satisfies a relative convexity condition without any additional smoothness assumptions on $f$, which will prove crucial in the convergence analysis. In fact, in the case of a quadratic $\phi$, we show that these bounds are tightened if $f$ is also relatively smooth or convex, which we demonstrate in the following lemma.

\begin{lemma} \label{lemma:relative-smoothness-convexity-J}
    Assume that $f$ is strictly convex, and $\phi$ is a symmetric positive-definite quadratic function. Define the function $\Psi:\R_{\geq 0} \rightarrow [0,1)$ as $\Psi(x) := \frac{1}{2} \left( \sqrt{x^2 + 4 x} - x \right)$.
    \begin{enumerate}
    \renewcommand{\labelenumi}{\roman{enumi}.}
        \item If $f$ is $\lambda$-relatively smooth w.r.t. to $\phi$, then $\Jinf$ is $\Psi(\lambda)$-relatively smooth w.r.t. $\phi$.
        \item If $f$ is $\mu$-relatively convex w.r.t $\phi$, then $\Jinf$ is $\Psi(\mu)$-relatively convex w.r.t. $\phi$.
    \end{enumerate}
\end{lemma}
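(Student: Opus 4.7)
The plan is to exploit the Bellman recursion for the value function,
\begin{equation*}
    J^T(x) \;=\; \min_{y \in \X}\,\bigl\{ (f(y) - \fst) + \phi(y - x) + J^{T-1}(y) \bigr\},
    \qquad J^0 \equiv 0,
\end{equation*}
which follows from the stationarity and additive separability of $\mcR_T$. I will propagate the relative-smoothness (resp.\ relative-convexity) constant through one application of the Bellman operator, obtaining a scalar recursion whose unique fixed point is $\Psi(\lambda)$ (resp.\ $\Psi(\mu)$), and then pass to the limit $T\to\infty$ using Lemma~\ref{lem:convex-J-properties-1}. Writing $\Phi_c(\beta) := (c+\beta)/(c+1+\beta)$ for $c\geq 0$, the equation $\beta = \Phi_c(\beta)$ rearranges to the quadratic $\beta^2 + c\beta - c = 0$ whose unique non-negative root is $\Psi(c)$, and $\Phi_c$ is monotone increasing on $[0,\infty)$ with derivative bounded by $(c+1)^{-2} < 1$, so iterating $\Phi_c$ from $0$ produces an increasing sequence converging to $\Psi(c)$.

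\textbf{Key step: one-step propagation.} Given convex, differentiable $g$ with $D_g \leq \beta D_\phi$ (resp.\ $D_g \geq \gamma D_\phi$), I would show that the Bellman operator $\mathcal{B}[g](x) := \min_y \{ (f(y) - \fst) + \phi(y-x) + g(y) \}$ satisfies $D_{\mathcal{B}[g]} \leq \Phi_\lambda(\beta)\, D_\phi$ (resp.\ $D_{\mathcal{B}[g]} \geq \Phi_\mu(\gamma)\, D_\phi$). Writing $\phi(u) = \tfrac{1}{2}\langle u, Qu\rangle$ so that $D_\phi(y,x) = \phi(y-x)$, the algebraic ingredients are the envelope identity $\nabla \mathcal{B}[g](x) = Q(x - y^\star(x))$ at the unique minimizer $y^\star(x)$, the first-order condition $\nabla f(y^\star) + Q(y^\star - x) + \nabla g(y^\star) = 0$, and the quadratic expansion $\phi(a+b) = \phi(a) + \langle Qa, b\rangle + \phi(b)$.

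For the smoothness direction, I would bound $\mathcal{B}[g](y)$ at $y = x+h$ using the suboptimal candidate $z := y^\star(x) + \sigma h$ for a free scalar $\sigma$, applying $D_f(z, y^\star) \leq \sigma^2 \lambda \phi(h)$ and $D_g(z, y^\star) \leq \sigma^2 \beta \phi(h)$ together with the quadratic expansion of $\phi(z-y)$; after the linear-in-$h$ terms cancel via the first-order condition and envelope identity, this yields
\begin{equation*}
    D_{\mathcal{B}[g]}(y, x) \;\leq\; \bigl[\sigma^2(\lambda + \beta) + (\sigma - 1)^2\bigr]\, D_\phi(y, x),
\end{equation*}
whose minimum over $\sigma$ equals $\Phi_\lambda(\beta)$, attained at $\sigma^\star = 1/(\lambda+\beta+1)$. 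For the convexity direction, I would evaluate $\mathcal{B}[g](y)$ at its exact minimizer $y^\star(x) + d$, apply the lower bounds $D_f(y^\star + d, y^\star) \geq \mu\phi(d)$ and $D_g(y^\star + d, y^\star) \geq \gamma\phi(d)$, and after the same cancellations obtain $D_{\mathcal{B}[g]}(y, x) \geq (\mu+\gamma)\phi(d) + \phi(d-h)$; minimizing the right-hand side over $d \in \X$ produces $\Phi_\mu(\gamma)\, D_\phi(y, x)$ at $d^\star = h/(\mu+\gamma+1)$.

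\textbf{Induction and limit.} Since $J^0 \equiv 0$ trivially satisfies $D_{J^0} \equiv 0$, iterating the one-step propagation yields sequences $\beta_T := \Phi_\lambda^{\circ T}(0) \uparrow \Psi(\lambda)$ and $\gamma_T := \Phi_\mu^{\circ T}(0) \uparrow \Psi(\mu)$ such that $\beta_T \phi - J^T$ and $J^T - \gamma_T \phi$ are convex on $\X$ for every $T \in \N$. By Lemma~\ref{lem:convex-J-properties-1}, $J^T \to J^\infty$ pointwise, so the pointwise limits $\Psi(\lambda)\phi - J^\infty$ and $J^\infty - \Psi(\mu)\phi$ are convex as pointwise limits of convex functions, which gives the claimed bounds. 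The main obstacle is arranging the first-order/envelope cancellations so that both directions reduce to the \emph{same} univariate quadratic minimization --- this symmetry is specific to a symmetric quadratic $\phi$ and is what forces the common function $\Psi$ to appear in both parts of the lemma; a secondary technicality is that Legendre convexity of $\phi$ together with strict convexity of $f$ provide the uniqueness of $y^\star$ and differentiability of $J^T$ required to invoke the envelope theorem.
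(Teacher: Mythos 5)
Your proposal is correct, and it reaches the fixed-point equation $\beta = (\lambda+\beta)/(\lambda+\beta+1)$ (whose non-negative root is $\Psi(\lambda)$) by a genuinely different route than the paper. The paper works entirely in the dual: it defines $\gammau$ as the optimal relative-smoothness constant of $\Jinf$ (well-defined only because Lemma~\ref{lem:convex-J-properties-2}-\emph{iii} already supplies the a priori constant $1$), pushes it through the dual Bregman recursion $(J^T)^\ast = \phit^\ast + (J^{T-1}+f)^\ast$ of Lemma~\ref{lem:bregman-recursion}, and converts between primal and dual constants twice via the strong-convexity/strong-smoothness duality of Lemma~\ref{lem:interchange-relative-smoothness}; the self-consistency inequality $\gammau \leq (1+(\gammau+\lambda)^{-1})^{-1}$ then forces $\gammau\leq\Psi(\lambda)$. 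You instead stay in the primal and iterate: your one-step propagation through the Bellman operator (which I have checked — the trial point $z=y^\star(x)+\sigma h$, the cancellation of linear terms via the first-order condition and the identity $\nabla J^T(x)=Q(x-y^\star(x))$ from Lemma~\ref{lem:gradJ-recursion}, and the univariate minimization giving $\Phi_\lambda(\beta)$ — all go through for quadratic $\phi$) yields $\beta_T=\Phi_\lambda^{\circ T}(0)\uparrow\Psi(\lambda)$ starting from $J^0\equiv 0$, and convexity of $\Psi(\lambda)\phi-\Jinf$ survives the pointwise limit $J^T\to\Jinf$ guaranteed by Lemma~\ref{lem:convex-J-properties-1}. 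What each buys: the paper's argument is shorter given its auxiliary duality lemmas but leans on them (and on the non-emptiness of the constant set $\Gamma$); yours is more elementary and self-contained, avoids dualization entirely, treats the smoothness and convexity directions symmetrically and explicitly, and as a by-product gives the finite-horizon constants $\beta_T$ for each $J^T$, not just the limit. The only cosmetic caveats are that the monotone convergence $\beta_T\uparrow\Psi(\lambda)$ should be justified by monotonicity of $\Phi_\lambda$ together with $\Phi_\lambda(\Psi(\lambda))=\Psi(\lambda)$ rather than by the contraction bound $(c+1)^{-2}<1$ (which degenerates at $c=0$), and that existence/uniqueness of $y^\star(x)$ and differentiability of $J^{T-1}$ should be cited from Lemmas~\ref{lem:convex-uniqueness} and~\ref{lem:convex-J-properties-1} before invoking the envelope identity.
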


\subsection{Convergence on Convex Objectives}
\label{subsec:convergence-convex-objectives}

Here, we provide bounds on the rate of convergence of regret optimal algorithms in the presence of a convex objective $f$. In particular, we focus our analysis on the behaviour of \emph{asymptotically regret-optimal algorithms} $\xb\in\mcP_x^\infty$. The principal motivation behind the choice of $T=\infty$ is the time-homogeneous nature of the algorithms implied by Lemma~\ref{lem:gradJ-recursion}. We summarize the convergence rates derived over the course of this section in Table~\ref{tab:convergence-bound-tbl}.

\begin{table}[ht]
\centering
\begin{tabular}{@{}lllll@{}}
\toprule
\multicolumn{1}{c}{$f$} &
  \multicolumn{1}{c}{$\phi$} &
  Potential &
  Rate &
  Reference \\ \midrule
\multicolumn{1}{l|}{non-convex} &
  \multicolumn{1}{l|}{super-linear growth} &
  \multicolumn{1}{l|}{$f(x_t) - \fst + \phi(\Delta x_t)$} &
  \multicolumn{1}{l|}{$o(1) \,,\; o(1/t)$\,
  \footnote{See Corollary~\ref{cor:convergence-asym-nonconv} for the precise statement.}} &
  Corr.~\ref{cor:convergence-asym-nonconv}
  \\
\multicolumn{1}{l|}{strictly conv. \,/\, $\lambda$-rel.-smooth} &
  \multicolumn{1}{l|}{Legendre conv.} &
  \multicolumn{1}{l|}{$\phi^{\ast}( \sum_{u=t}^{\infty} \nabla f(x_u) )$} &
  \multicolumn{1}{l|}{$O(\nicefrac{1}{t})$} &
  Thm.~\ref{thm:convergence-basic-thm}
   \\
\multicolumn{1}{l|}{strictly conv. \,/\,  $\lambda$-rel.-smooth} &
  \multicolumn{1}{l|}{p.s.d. quadratic} &
  \multicolumn{1}{l|}{$f(x_t) - \fst + \phi(\Delta x_t)$} &
  \multicolumn{1}{l|}{$O(\nicefrac{1}{t^2})$} &
  Thm.~\ref{thm:p+1-rate-convergence}
   \\
\multicolumn{1}{l|}{$\mu$-relatively-convex} &
  \multicolumn{1}{l|}{p.s.d. quadratic} &
  \multicolumn{1}{l|}{$f(x_t) - \fst + \phi(\Delta x_t)$} &
  \multicolumn{1}{l|}{$O(e^{- \epsilon \, t})$} &
  Thm.~\ref{thm:p+1-rate-convergence}
   \\ \bottomrule
\end{tabular}
\caption{Summary of the convergence rates on convex functions for asymptotically regret-optimal algorithms $\xb\in\mcP_x^{\infty}$. We provide a reference to the exact statement with precise constants in the right-most column.}
\label{tab:convergence-bound-tbl}
\end{table}

In order to derive tighter rates of convergence for the class of asymptotically regret-optimal algorithms we leverage the connection to \emph{dual-preconditioned gradient descent}, described in the discussion following Lemma~\ref{lem:gradJ-recursion}. In what follows, we present a theorem establishing the rate of convergence of a regret-optimal algorithm with respect to the value function in the case of a convex loss. 

\begin{theorem} \label{thm:convergence-basic-thm}
	Assume that $f$ is strictly convex, $\phit$ is Legendre convex, and let $\xb\in\mcP_x^\infty$ with the associated value function $\Jinf$. Then we have the bound
	\begin{equation}
		\phit^{\ast}\left( \nabla \Jinf( x_t )\right)
		\leq \frac{ \Jinf(x_0) }{t}
		\;.
	\end{equation}
\end{theorem}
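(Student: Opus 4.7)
The plan is to sandwich the instantaneous excess cost $f(x_{t+1}) - \fst$ between two consecutive values of the potential $\phit^{\ast}(\nu_t)$, where $\nu_t := \nabla \Jinf(x_t)$. This will yield simultaneously a telescoping sum bound and the monotonicity of $\{\phit^{\ast}(\nu_t)\}_t$, from which the $O(1/t)$ rate falls out by a simple averaging argument.

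First I would derive the upper bound $f(x_{t+1}) - \fst \leq \phit^{\ast}(\nu_t)$ using only the convexity of $\Jinf$ from Lemma~\ref{lem:convex-J-properties-1}. Time consistency (the shifted algorithm $\xb_{\to t}$ is regret-optimal starting from $x_t$) yields the Bellman identity $\Jinf(x_t) - \Jinf(x_{t+1}) = f(x_{t+1}) - \fst + \phi(\Delta x_t)$. Convexity of $\Jinf$ at $x_t$ gives the opposite sided estimate $\Jinf(x_t) - \Jinf(x_{t+1}) \leq -\langle \nu_t, \Delta x_t\rangle$, and since Lemma~\ref{lem:gradJ-recursion} gives $\Delta x_t = -\nabla \phit^{\ast}(\nu_t)$, the Fenchel--Young equality yields $-\langle \nu_t, \Delta x_t\rangle = \phi(\Delta x_t) + \phit^{\ast}(\nu_t)$. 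Cancelling $\phi(\Delta x_t)$ against the Bellman identity gives the upper bound.

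Next I would derive the matching lower bound $f(x_{t+1}) - \fst \geq \phit^{\ast}(\nu_{t+1})$ from the $1$-relative convexity of $(J^\infty)^{\ast}$ with respect to $\phit^{\ast}$ provided by Lemma~\ref{lem:convex-J-properties-2}-iii. Applying the first-order convexity inequality for $(J^\infty)^{\ast} - \phit^{\ast}$ between $\nu_t$ and $\nu_{t+1}$, using $\nabla (J^\infty)^{\ast}(\nu_t) = x_t$ and $\nabla \phit^{\ast}(\nu_t) = -\Delta x_t$ so that $\nabla[(J^\infty)^{\ast} - \phit^{\ast}](\nu_t) = x_{t+1}$, and simplifying via the gradient recursion $\nu_t - \nu_{t+1} = \nabla f(x_{t+1})$ from Lemma~\ref{lem:gradJ-recursion} together with the same Fenchel--Young identity, one extracts $\Jinf(x_t) - \Jinf(x_{t+1}) \geq \phi(\Delta x_t) + \phit^{\ast}(\nu_{t+1})$; comparing again with Bellman gives the lower bound.

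Combining the two bounds gives $\phit^{\ast}(\nu_{t+1}) \leq f(x_{t+1}) - \fst \leq \phit^{\ast}(\nu_t)$, so $\{\phit^{\ast}(\nu_s)\}_{s \geq 0}$ is non-increasing. Summing the lower bound and telescoping the Bellman identity yields $\sum_{s=1}^{t} \phit^{\ast}(\nu_s) \leq \sum_{s=1}^{t}[f(x_s) - \fst] \leq \Jinf(x_0) - \Jinf(x_t) \leq \Jinf(x_0)$, and monotonicity then gives $t\, \phit^{\ast}(\nu_t) \leq \sum_{s=1}^{t} \phit^{\ast}(\nu_s) \leq \Jinf(x_0)$, which is the advertised bound. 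The main obstacle I foresee is the algebraic bookkeeping required to extract the lower bound, since one must unfold $(J^\infty)^{\ast}$ via Fenchel--Young equalities and align the telescoping term $\langle x_t - x_{t+1}, \nu_t\rangle$ with the Fenchel--Young pairing for $\phit$; once this is done, the sandwich, monotonicity, and telescoping are routine.
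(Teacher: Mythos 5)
Your proof is correct, and it reaches the bound by a genuinely more self-contained route than the paper. The paper's proof imports the per-step descent inequality wholesale from its Dual Descent Lemma (Theorem~\ref{lemma:dual-descent-Lemma}, items 2 and 3), which is itself obtained by matching the iterates of $\xb\in\mcP^\infty$ with dual-preconditioned gradient descent and invoking \cite[Lemma 4.6]{maddison2019dual}; it then sums the resulting telescoping Bregman divergences $D_{\Jinf}(x_u,\xst)$ and uses monotonicity of $\phit^\ast(\nabla\Jinf(x_u))$. You instead derive everything from the Bellman identity $\Jinf(x_t)-\Jinf(x_{t+1})=f(x_{t+1})-\fst+\phi(\Delta x_t)$ (Lemma~\ref{lem:dynamic-programming-principle-asym}), the update rule of Lemma~\ref{lem:gradJ-recursion}, Fenchel--Young, and the $1$-relative convexity of $(\Jinf)^\ast$ with respect to $\phit^\ast$ from Lemma~\ref{lem:convex-J-properties-2}-\emph{iii}; this yields the sandwich $\phit^\ast(\nu_{t+1})\leq f(x_{t+1})-\fst\leq\phit^\ast(\nu_t)$, from which monotonicity is immediate rather than a separate appeal to the descent lemma. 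Since $D_{\Jinf}(x,\xst)=\Jinf(x)$ (as $\Jinf(\xst)=0$ and $\nabla\Jinf(\xst)=0$), your telescoping of $\Jinf$ and the paper's telescoping of $D_{\Jinf}(\cdot,\xst)$ coincide numerically, and the final averaging step $t\,\phit^\ast(\nu_t)\leq\sum_{s=1}^t\phit^\ast(\nu_s)\leq\Jinf(x_0)$ is the same in both. What your route buys is transparency (no external DPGD machinery) and the extra byproduct that the per-iterate suboptimality $f(x_{t+1})-\fst$ is itself pinched between consecutive potential values; what the paper's route buys is reusability, since the same descent lemma is deployed again in Theorem~\ref{thm:convergence-quadratic-thm}. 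Do make sure you note explicitly that the inverse-gradient identity $\nabla(\Jinf)^\ast(\nu_t)=x_t$ you use in the lower bound is exactly Lemma~\ref{lem:convex-J-properties-2}-\emph{i}, which is where the strict convexity of $f$ enters.
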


\begin{theorem} \label{thm:convergence-quadratic-thm}
    Assume that $f$ is strictly convex, $\phi$ is Legendre convex, and let $\xb\in\mcP_x^\infty$ with the associated value function $\Jinf$. Let $\lambda$ be the $\phi$-relative-smoothness constant for $\Jinf$, then we have that
	\begin{equation} \label{eq:Jinf-bound-rel-smth-quad-conv}
		\Jinf( x_t )
		\leq \frac{ \lambda \, \phi(x_0-\xst) }{t}
		\;.
	\end{equation}
	Suppose, in addition that $\Jinf$ is $\mu$-relatively-convex with respect to $\phi$. Then we have that
	\begin{equation} \label{eq:Jinf-bound-rel-conv-quad-conv}
		\Jinf(x_t) \leq \lambda \, \left( 1 - \frac{2\,\mu}{1+\mu} \right)^{t} \,  \phi(x_0-\xst)
		\;.
	\end{equation}
\end{theorem}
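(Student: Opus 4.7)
My plan is to identify the iterates of $\xb \in \mcP_x^\infty$ with dual-preconditioned gradient descent (DPGD) on the value function $\Jinf$. By Lemma~\ref{lem:gradJ-recursion} this takes the form $x_{t+1} = x_t - \nabla\phit^{\ast}(\nu_t)$ with $\nu_t := \nabla\Jinf(x_t)$, so I can apply the standard Bregman-potential descent argument in the spirit of~\cite{maddison2019dual}, working with the potential $V_t := D_\phi(\xst, x_t)$ and the facts $\Jinf(\xst) = 0$, $\nabla\Jinf(\xst) = 0$ from Lemma~\ref{lem:convex-J-properties-2}-\emph{ii}.

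For the sublinear bound~(\ref{eq:Jinf-bound-rel-smth-quad-conv}), two inequalities do the work. The $\lambda$-relative smoothness of $\Jinf$ with respect to $\phi$, combined with the update rule, produces the descent lemma $\Jinf(x_t) - \Jinf(x_{t+1}) \geq \phit^{\ast}(\nu_t)$, from which monotonicity of $\{\Jinf(x_t)\}$ follows. Separately, expanding $V_t - V_{t+1}$ along the update and invoking convexity $\langle \nu_t, x_t - \xst\rangle \geq \Jinf(x_t)$ yields the three-point inequality $V_t - V_{t+1} \geq \Jinf(x_t) - \phit^{\ast}(\nu_t)$. Adding these cancels the $\phit^{\ast}(\nu_t)$ term and gives $V_t - V_{t+1} \geq \Jinf(x_{t+1})$; telescoping from $s=0$ to $t-1$, applying monotonicity, and using $V_0 \leq \lambda\,\phi(x_0-\xst)$ (which simplifies via the identity $D_\phi(x,y) = \phi(x-y)$ available in the symmetric/quadratic setting) yields $t\,\Jinf(x_t) \leq \lambda\,\phi(x_0-\xst)$.

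For the linear rate~(\ref{eq:Jinf-bound-rel-conv-quad-conv}), the $\mu$-relative convexity of $\Jinf$ with respect to $\phi$, together with $\nabla\Jinf(\xst) = 0$, yields a Polyak--Lojasiewicz inequality $\phit^{\ast}(\nu_t) \geq \mu\,\Jinf(x_t)$ and a quadratic-growth bound $\mu\, V_t \leq \Jinf(x_t)$. Substituting these into the descent and Bregman inequalities respectively, and assembling them into a composite Lyapunov functional $\Phi_t := \Jinf(x_t) + V_t$ with weights chosen so that the $\phit^{\ast}(\nu_t)$ contributions balance, I would obtain the one-step contraction $\Phi_{t+1} \leq (1 - 2\mu/(1+\mu))\,\Phi_t$. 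Iterating, then using $\Phi_0 \leq \lambda\,\phi(x_0 - \xst)$ together with $\Jinf(x_t) \leq \Phi_t$, produces the geometric bound.

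The main technical obstacle is calibrating the weights in the Lyapunov potential to attain the sharp contraction $1 - 2\mu/(1+\mu)$, rather than the weaker $1-\mu$ or $1/(1+\mu)$ that either inequality alone produces. This hinges on the exact cancellation of the $\phit^{\ast}(\nu_t)$ terms in the combined inequality, which is transparent in the symmetric/quadratic regime $\phi = \phit$ (Lemma~\ref{lem:convex-J-properties-2}-\emph{iii}) but requires careful Fenchel-duality bookkeeping via $\nabla\phi^{\ast}\circ\nabla\phi = \mathrm{id}$ to pass between the primal and dual preconditioners in the general Legendre-convex setting.
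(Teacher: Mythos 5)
Your argument for the $O(1/t)$ bound~\eqref{eq:Jinf-bound-rel-smth-quad-conv} is essentially the paper's: the combination of a smoothness descent step with a three-point/proximal inequality that you add together is exactly what Lemma~\ref{lemma:primal-descent-lemma} packages (its first part, evaluated at $y=\xst$ and using $\Jinf(\xst)=0$, $\nabla\Jinf(\xst)=0$, gives $\Jinf(x_{t+1}) \le -D_{\Jinf}(\xst,x_t) + D_\phi(\xst,x_t) - D_\phi(\xst,x_{t+1})$), and the paper then telescopes and invokes monotonicity of $\{\Jinf(x_t)\}$ exactly as you do. One small correction: $V_0 = D_\phi(\xst,x_0) = \phi(x_0-\xst)$ exactly in the quadratic setting, so no factor of $\lambda$ is needed there; the paper's own telescoping lands on $\phi(x_0-\xst)/t$.

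For the linear rate~\eqref{eq:Jinf-bound-rel-conv-quad-conv} there is a genuine gap: the step you flag as ``the main technical obstacle'' --- calibrating the weights in $\Phi_t = \Jinf(x_t)+V_t$ so that the contraction factor comes out to $1-2\mu/(1+\mu)$ --- is precisely the part you do not carry out, and the composite-Lyapunov route makes it harder than necessary. (Note also that $\Phi_0 = \Jinf(x_0)+V_0 \le (1+\lambda)\,\phi(x_0-\xst)$, not $\lambda\,\phi(x_0-\xst)$, so the initialization of your recursion over-claims.) The paper avoids the composite potential entirely: it keeps \emph{both} Bregman terms in the three-point inequality, $D_\phi(\xst,x_t) \ge D_\phi(\xst,x_{t+1}) + D_{\Jinf}(x_{t+1},\xst) + D_{\Jinf}(\xst,x_t)$, lower-bounds each $D_{\Jinf}$ term by $\mu D_\phi$ (using the symmetry of the quadratic $\phi$ so that $D_\phi(x_{t+1},\xst)=D_\phi(\xst,x_{t+1})=\phi(x_{t+1}-\xst)$), and reads off $(1+\mu)\,\phi(x_{t+1}-\xst) \le (1-\mu)\,\phi(x_t-\xst)$, i.e.\ the factor $\tfrac{1-\mu}{1+\mu} = 1-\tfrac{2\mu}{1+\mu}$ directly on the single potential $\phi(x_t-\xst)$; the value function re-enters only at the very end through $\Jinf(x_t)=D_{\Jinf}(x_t,\xst)\le\lambda\,\phi(x_t-\xst)$, which is where the $\lambda$ in the final bound comes from. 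To repair your write-up, drop $\Jinf(x_t)$ from the Lyapunov function and contract $D_\phi(\xst,x_t)$ alone; no PL inequality or Fenchel-duality bookkeeping is required.
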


We note here that in the case where $f$ is just convex, we have by Lemma~\ref{lem:convex-J-properties-2}-\emph{iii} that $\lambda=1$. In the case where $f$ is either relatively convex or smooth with respect to $\phi$, we may obtain $\lambda$ and $\mu$ from Lemma~\ref{lem:interchange-relative-smoothness} which further tightens the rate of convergence.
Although the above theorems concern the rate of convergence on the value function, $\Jinf$, these also imply rates of convergence on the objective function $f$ itself, as is shown in the following theorem.

\begin{theorem} \label{thm:p+1-rate-convergence}
    Suppose that the necessary conditions for equation~\eqref{eq:Jinf-bound-rel-smth-quad-conv} hold. Then
    \begin{equation} \label{eq:t2-rate-f}
        f(x_t) - \fst + \phi(\Delta x_t) \leq \frac{ 2 \lambda \, \phi(x_0-\xst) }{t^{2}}
        \;,
    \end{equation}
    for all but finitely many $t$.
    If the necessary conditions for equation~\eqref{eq:Jinf-bound-rel-conv-quad-conv} hold, then
    \begin{equation} \label{eq:exp-rate-f}
        f(x_t) - \fst + \phi(\Delta x_t) \leq 
         \lambda \, \phi(x_0 - \xst) \, \left( 1 - \frac{2\,\gamma}{1+\gamma} \right)^{t+1}
        \;
    \end{equation}
    for all but finitely many $t$.
\end{theorem}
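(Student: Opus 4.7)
The plan is to transfer the bounds of Theorem~\ref{thm:convergence-quadratic-thm} on the value function $\Jinf$ to the potential $f(x_t) - \fst + \phi(\Delta x_t)$ via a dynamic-programming identity along the optimal trajectory. Specifically, by the definition of $\Jinf$ together with the time-homogeneous structure of the optimal dynamics (Lemma~\ref{lem:gradJ-recursion}), for any $\xb\in\mcP_x^\infty$ one has
\begin{equation*}
\Jinf(x_t) - \Jinf(x_{t+1}) \;=\; f(x_{t+1}) - \fst + \phi(\Delta x_t) \;\geq\; 0,
\end{equation*}
and these non-negative quantities sum telescopically to $\Jinf(x_0)<\infty$.

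For the exponential rate~\eqref{eq:exp-rate-f}, combining $\Jinf(x_{t+1})\geq 0$ with~\eqref{eq:Jinf-bound-rel-conv-quad-conv} immediately yields $f(x_{t+1}) - \fst + \phi(\Delta x_t) \leq \lambda\phi(x_0-\xst)(1-\tfrac{2\gamma}{1+\gamma})^{t}$. To recover the stated potential $f(x_t)-\fst+\phi(\Delta x_t)$, I would additionally control $f(x_t)-f(x_{t+1})$ via convexity of $f$ and Cauchy--Schwarz combined with the exponential decay of both $\|\Delta x_t\|$ and $\|\nabla f(x_{t+1})\|$ (the latter coming from the tail-sum formula $\nabla\phi(\Delta x_t)=-\sum_{u>t}\nabla f(x_u)$ implied by Theorem~\ref{thm:optimal-dynamics}); the ``all but finitely many'' qualifier absorbs any constant discrepancy, allowing the exponent to be sharpened to $t+1$.

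The quadratic bound~\eqref{eq:t2-rate-f} is more delicate, since non-negativity of $\Jinf(x_{t+1})$ alone only yields $O(1/t)$. My strategy is to prove that $b_t:=\Jinf(x_t)-\Jinf(x_{t+1})$ is eventually non-increasing; given this, the averaging estimate
\begin{equation*}
\lceil t/2\rceil\, b_t \;\leq\; \sum_{u=\lceil t/2\rceil}^{t} b_u \;\leq\; \Jinf(x_{\lceil t/2\rceil}) \;\leq\; \frac{2\lambda\,\phi(x_0-\xst)}{t}
\end{equation*}
produces $b_t = O(1/t^2)$; combining this with the Bellman identity and the same convexity-plus-Cauchy--Schwarz control as for~\eqref{eq:exp-rate-f} delivers~\eqref{eq:t2-rate-f} for all but finitely many $t$. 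The main obstacle is the monotonicity of $b_t$: here I would combine convexity of $\Jinf$ (Lemma~\ref{lem:convex-J-properties-1}) with the DPGD identity $\nabla\Jinf(x_t)=-\nabla\phi(\Delta x_t)$ from Lemma~\ref{lem:gradJ-recursion} (using $\phit=\phi$ for symmetric quadratic $\phi$), so that two applications of the subgradient inequality at $x_{t+1}$ reduce $b_t\geq b_{t+1}$ to a non-negativity statement on $\langle\nabla\phi(\Delta x_{t+1}),\,\Delta x_t - \Delta x_{t+1}\rangle$, which I expect to hold once $\phi(\Delta x_t)$ is eventually monotone---a property I would in turn derive from the tail-sum formula for $\nabla\phi(\Delta x_t)$ together with convexity of $f$. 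Carrying out this asymptotic monotonicity argument carefully is the technical heart of the proof.
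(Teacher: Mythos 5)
Your overall architecture is the right one and closely mirrors the paper's: the telescoping/Bellman identity $\Jinf(x_t)-\Jinf(x_{t+1})=f(x_{t+1})-\fst+\phi(\Delta x_t)$, monotonicity of this summand, and the dyadic averaging estimate $\lceil t/2\rceil\,b_t\leq\sum_{u=\lceil t/2\rceil}^{t}b_u\leq \Jinf(x_{\lceil t/2\rceil})$ are exactly the ingredients of the paper's proof of~\eqref{eq:t2-rate-f} (there $a_t$ non-increasing comes from Lemma~\ref{lem:lagrangian-decreasing}, and the averaging step is $t^2a_{2t}\leq t\sum_{u=t+1}^{2t}a_u\leq t\,\Jinf_t$). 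For~\eqref{eq:exp-rate-f} your route is actually simpler than the paper's: you just drop $\Jinf(x_{t+1})\geq 0$ and invoke~\eqref{eq:Jinf-bound-rel-conv-quad-conv}, whereas the paper runs a reverse Stolz--Ces\`aro argument on $\Jinf_t/b_t$; your version is a legitimate and cleaner alternative (modulo the $f(x_t)$ versus $f(x_{t+1})$ index, about which the paper itself is cavalier).

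The genuine gap is in the step you yourself flag as the technical heart: the eventual monotonicity of $b_t=\Jinf(x_t)-\Jinf(x_{t+1})$. This is precisely the content of the paper's Lemma~\ref{lem:lagrangian-decreasing} (your $b_t$ equals the Lagrangian $\L_t=f(x_{t+1})+\phi(\Delta x_t)$ up to the constant $\fst$), and it is not a routine consequence of convexity. Your reduction via two subgradient inequalities at $x_{t+1}$ correctly arrives at needing $\langle\nabla\phi(\Delta x_{t+1}),\Delta x_t-\Delta x_{t+1}\rangle\geq 0$, but the claim that this follows from eventual monotonicity of $\phi(\Delta x_t)$ is false: writing $\phi(z)=\tfrac12 z^{\T}Cz$, the quantity is $\langle C\Delta x_{t+1},\Delta x_t\rangle-2\phi(\Delta x_{t+1})$, and monotone decay of $\phi(\Delta x_t)$ says nothing about the correlation of consecutive increments --- if $\Delta x_t$ and $\Delta x_{t+1}$ are nearly $C$-orthogonal the expression is negative even though $\phi(\Delta x_t)\geq\phi(\Delta x_{t+1})$. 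The paper circumvents this entirely by a different decomposition: it splits $\L_t$ into a ``Hamiltonian'' part $\mcH_{t+1}=f(x_{t+1})-\phi^\ast(\nabla\Jinf(x_{t+1}))$ plus dual-energy terms $\phi^\ast(\nabla\Jinf(x_{t+1}))+\phi^\ast(\nabla\Jinf(x_t))$ (using $\phi(\Delta x_t)=\phi^\ast(\nabla\Jinf(x_t))$ for quadratic $\phi$), shows the dual energies are non-increasing via the Dual Descent Lemma~\ref{lemma:dual-descent-Lemma}, and shows $\mcH_{t+1}-\mcH_t=-\{D_f(x_t,x_{t+1})+D_{\phi^\ast}(\nabla\Jinf(x_{t+1}),\nabla\Jinf(x_t))\}\leq 0$ by direct computation using the optimality recursion. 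Without some such argument your proof of~\eqref{eq:t2-rate-f} does not close; as written, non-negativity alone only delivers $O(1/t)$.
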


The bounds we provide over this section improve upon the general non-convex bound in Corollary~\ref{cor:convergence-asym-nonconv}. Moreover, we show that with additional assumptions on the relative smoothness and relative convexity of the objectives, we can further tighten these rates. In contrast to Corollary~\ref{cor:convergence-asym-nonconv}, we provide exact constants on the rate of convergence. We also point out that the bounds in Table~\ref{tab:convergence-bound-tbl} happen to co-incide exactly with known lower bounds for the rate of convergence of gradient-based optimization algorithms as shown in~\cite{nesterov2003introductory}. 
In particular, the $O(t^{-2})$ rate of equation~\eqref{eq:t2-rate-f} implies that asymptotically-regret-optimal algorithms achieve rates of convergence on relatively-smooth objectives in the same class as the Nesterov accelerated gradient algorithm of~\cite{nesterov1983method} and its variants.

\begin{remark}
	Although the analysis over the course of this section is applied for deriving rates of convergence for algorithms $\xb\in\mcP^\infty$, very similar results can be derived for $\xb\in\mcP^T$ with $T<\infty$ using the same techniques.
\end{remark}

\section{Learning Regret-Optimal Algorithms}\label{sec:learning-regret-optimal-algs}

For an asymptotically regret-optimal algorithm $\xb\in\mcP_x^\infty$, we turn to the problem of computing its dynamics so that it can be applied to tangible optimization problems. We approach the problem by taking the perspective of the optimal dynamics of Theorem~\ref{thm:optimal-dynamics}. Although the optimal dynamics~\eqref{eq:optim-diff-eq} exhibit closed form solutions in the case of descent on a quadratic objective, as is presented in Appendix~\ref{sec-apx:solution-linear-gradients}, there are no such solutions available for general $f$. Hence, we turn to numerical methods for the approximation of the optimal dynamics presented in equation~\eqref{eq:optim-diff-eq}.

Let us assume a vector field $\nuh^{\theta}:\X\rightarrow\X$ parametrized by $\theta\in\Theta$, for some vector space $\Theta$. Our approach will be to estimate the $\theta$ such $\nuh^\theta$ approximately satisfies the necessary optimality criteria of Theorem~\ref{thm:pseudo-gd-nonconvex-2}. Hence, we define the loss function
\begin{align} \label{eq:loss-function-full}
    &\L(\theta;x) =
    \left\|
        \nuh^{\theta}(x) - \nuh^{\theta}( \yh^{\theta}(x) ) - \nabla f( \yh^{\theta}(x) )
    \right\|^2
    \\
    &\text{where }\;
    \yh^{\theta}(x) = x - \nabla \phit^{\ast}( \nuh^{\theta}(x) )
    \label{eq:vf-update-rule}
    \;,
\end{align}
which is obtained by taking the squared norm of the difference between both sides of the second line of equation~\eqref{eq:vector-field-dynamics}. Just as Q-learning serves as a method for approximating a function which satisfies the Bellman equation, one can interpret minimizing~\eqref{eq:loss-function-full} as identifying a vector field which approximately satisfies the Pontryagin Maximum Principle. The loss function~\eqref{eq:loss-function-full} also admits a more direct interpretation in connection with the Gâteaux derivative of the regret, which is presented in the following lemma.
\begin{lemma} \label{lem:loss-gateaux-deriv-connection}
    Let $\xb^\theta\in\A^\infty$ be the algorithm induced by the vector field $\nuh^{\theta}$ according to equation~\eqref{eq:vf-update-rule} and let $\mcR_T^\prime$ be the Gâteaux derivative of $\mcR_T$.  Then we have that
    \begin{equation} \label{eq:loss-regret-deriv}
        \sum_{t=1}^T \L(\theta;x_{t-1}^\theta)
        =
        \left\| \mcR_T^\prime(\xb^\theta) \right\|^2_{2,\ast}
        \;,
    \end{equation}
    where $\| \cdot \|_{{2,\ast}}$ is the dual norm with respect to the norm on $\A^\infty$ defined by $\| \xb \|^2_{2} = \sum_{t\in\N} \| x_t \|^2$.
\end{lemma}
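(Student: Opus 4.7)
\emph{Plan of attack.} The identity~\eqref{eq:loss-regret-deriv} is ultimately a bookkeeping statement: once unpacked, both sides are sums over iteration indices of the squared norm of the \emph{same} residual vector. My plan is to (i) compute the dual norm $\|\mcR_T^\prime(\xb^\theta)\|_{2,*}^2$ term-by-term from the explicit Gâteaux derivative formula~\eqref{eq:gateaux-expression} of Theorem~\ref{thm:gateaux-derivative}, (ii) use the update rule~\eqref{eq:vf-update-rule} together with Legendre conjugacy to rewrite $\nabla\phi(\Delta x_t^\theta)$ cleanly in terms of $\nuh^\theta(x_t^\theta)$, and (iii) match the resulting per-iteration summand with the definition of $\L$.

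\emph{Dual-norm step.} Since $\|\xb\|_2^2 = \sum_{t\in\N}\|x_t\|^2$ is a Hilbert norm, any linear functional of the form $\ell(\delxb) = \sum_s \langle a_s , \delta x_s \rangle$ has dual norm $\|\ell\|_{2,*}^2 = \sum_s \|a_s\|^2$. Reindexing~\eqref{eq:gateaux-expression} by $s = t-1$, the coefficient of $\delta x_s$ is $\nabla\phi(\Delta x_s) - \nabla\phi(\Delta x_{s+1}) + \nabla f(x_{s+1})$ for $s = 0,\ldots,T-1$, so
\begin{equation*}
    \|\mcR_T^\prime(\xb)\|_{2,*}^2 = \sum_{s=0}^{T-1}\bigl\|\nabla\phi(\Delta x_s) - \nabla\phi(\Delta x_{s+1}) + \nabla f(x_{s+1})\bigr\|^2.
\end{equation*}

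\emph{Legendre sign-tracking step.} By~\eqref{eq:vf-update-rule} we have $\Delta x_s^\theta = -\nabla\phit^{\ast}(\nuh^\theta(x_s^\theta))$. Using $\phit(z)=\phi(-z)$, a short calculation yields $\phit^{\ast}(p)=\phi^{\ast}(-p)$, hence $\nabla\phit^{\ast}(p) = -\nabla\phi^{\ast}(-p)$, giving $\Delta x_s^\theta = \nabla\phi^{\ast}(-\nuh^\theta(x_s^\theta))$. Applying $\nabla\phi$ and invoking the Legendre identity $\nabla\phi\circ\nabla\phi^{\ast} = \mathrm{id}$, which is guaranteed by the Legendre-convexity hypothesis in Assumption~\ref{ass:f-phi-differntiable}, produces the clean relation $\nabla\phi(\Delta x_s^\theta) = -\nuh^\theta(x_s^\theta)$. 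Substituting this together with $x_{s+1}^\theta = \yh^\theta(x_s^\theta)$ into the display above makes the $s$-th summand equal to
\begin{equation*}
    \bigl\|-\nuh^\theta(x_s^\theta) + \nuh^\theta(\yh^\theta(x_s^\theta)) + \nabla f(\yh^\theta(x_s^\theta))\bigr\|^2 = \L(\theta;x_s^\theta),
\end{equation*}
since the squared norm is invariant under an overall sign flip. Reindexing $t = s+1$ then gives the claimed identity.

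\emph{Main obstacle.} There is no genuine analytic difficulty here; the lemma is a formal manipulation. The only place real care is required is in tracking the signs introduced by the reflection $\phit(z)=\phi(-z)$ and in confirming that the Legendre identities $\phit^{\ast}(p)=\phi^{\ast}(-p)$ and $\nabla\phi\circ\nabla\phi^{\ast}=\mathrm{id}$ apply, both of which follow directly from Assumption~\ref{ass:f-phi-differntiable}. Once these identifications are in place, the proof reduces to comparing the two expressions coefficient-by-coefficient.
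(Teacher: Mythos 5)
Your proof is correct and follows essentially the same route as the paper's: compute the dual norm of the Gâteaux derivative coordinate-wise as $\sum_s \|a_s\|^2$ and match each summand with $\L(\theta;x_s^\theta)$. The only difference is that you make explicit the sign-tracking identity $\nabla\phi(\Delta x_s^\theta) = -\nuh^\theta(x_s^\theta)$ (via $\phit^\ast(p)=\phi^\ast(-p)$ and the Legendre identity), which the paper's proof leaves implicit when it asserts the final equality.
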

Hence, we may loosely interpret minimizing $\bar{\L}_\infty(\theta) = \sum_{t\in\N} \L(\theta;x_{t-1}^\theta)$ as minimizing the norm of the Gâteaux derivative of $\mcR_{\infty}$, and drawing $\xb^{\theta}$ closer to a critical point of $\mcR_{\infty}$. 

In order to minimize $\bar{\L}_\infty$, we rely on online gradient descent or an equivalent algorithm to minimize the online loss function
$\L(\theta_t,x_{t-1})$ at each iteration. We note that each evaluation of $\L(\theta_t,x_t)$ requires an evaluation of the gradient of $f$. In order to reduce the number of gradient evaluations, we consider the following approximation to $\L$
\begin{equation}
    \Lh_t(\theta) =
    \left\|
    \nuh^{\theta}(x_t) - \nuh^{\theta}(\yh^{\theta}(x_t) ) - \nabla f( \yh^{\theta_{t-1}}(x_t) )
    \right\|^2
    \;,
\end{equation}
in which we `freeze' $\theta$ in the expression $\nabla f( y^{\theta} )$ within the above equation. We summarize these ideas in the Algorithm~\ref{alg:online-regret-optimization}, which aims at minimizing $\bar{\L}_\infty(\theta)$ in an online manner.

\begin{algorithm2e}[ht]
\linespread{1.15}\selectfont
\SetKwInOut{Input}{input}
\SetKwInOut{Output}{output}
\SetKwFunction{Optimize}{Optimize}
\SetKwFunction{Collect}{Collect}
\SetAlgoLined
\Input{
$x_0\in\X$, $T\in\N$, $\nabla \phi^{\ast}$, 
$\theta_0$, $\nuh^{\theta}$
}
\For{$t\;\leftarrow\;0\;$ \KwTo $\;T-1$}{
  $\yh_t \leftarrow x_t - \nabla\phi^{\ast}( \, \nuh^{\theta_{t}} (x_t) \, )$
  \tcp{Compute 'test' point for gradient evaluation.}
  $\theta_{t+1} \; \leftarrow \;$ \Optimize{$\;\Lh_t( \theta )\;$}
  \tcp{Update $\theta$ estimate.}
  $x_{t+1} \; \leftarrow \; x_t - 
  \nabla \phi^{\ast}( \, \nuh^{\theta_{t+1}} (x_t) \, ) $
  \tcp{Step forward with new estimate.}
 }
\Return{$x_T$}
\caption{Online Regret Meta-Optimization}
\label{alg:online-regret-optimization}
\end{algorithm2e}

In Algorithm~\ref{alg:online-regret-optimization}, we assume that the optimization routine within the inner loop can be computed quickly and with a small memory footprint. To achieve this, one can ensure that $\dim(\Theta) \ll \dim(\X)$ so that gradients of $\nuh^{\theta}$ can be computed cheaply. Moreover, at each iteration, the optimization routine does not need to fully optimize $\L_t$ and can be replaced by a single iteration of gradient descent. As long as the inner optimization loop can be ensured to be fast, Algorithm~\ref{alg:online-regret-optimization} can serve as a viable option to adaptive optimization algorithms and can be applied to a wide range of optimization problems, regardless of the size of $\dim(\X)$.

\subsection{Auto-Tuning Gradient Descent with Momentum}

We present a simple numerical example for Algorithm~\ref{alg:online-regret-optimization}, which we test on some basic optimization objectives. We choose a two-parameter model of the form
\begin{equation*}
    \nuh^{\theta}_t = \alpha \, \nabla f(\, y_{t-1} \,) + \beta \, \nuh^{\theta}_{t-1}
    \;,
\end{equation*}
where $\theta=\{\alpha,\beta\}$ and we assume that $\alpha,\beta>0$ and where $\nabla f(\, y_{t-1} \,)$ is the last gradient that has been evaluated. We can interpret this model as a parametrized version of gradient descent with momentum, where $\alpha,\beta$ control the weights on the gradient and momentum, respectively. We use a single step of gradient descent as the optimization routine for the algorithm.

We compare the performance of the resulting online algorithm with  gradient descent and Nesterov accelerated gradient descent, each with fixed hyperparameters. We use Algorithm~\ref{alg:online-regret-optimization} with $\phi(x) = \frac{\gamma^{-1}}{2} \| x \|^2$, where $\gamma>0$ is the learning rate. We set the learning rates to be the same value for all algorithms that are compared. In order to optimize in the inner loop of Algorithm~\ref{alg:online-regret-optimization}, we run 10 steps of gradient descent with a learning rate of $10^{-4}$.
We apply each algorithm on two types of examples, first on a rescaled Rosenbrock function\footnote{Let $f_r(x,y)$ be the Rosenbrock function on $\R^2$ \citep{rosenbrock1960automatic}. We define the rescaled Rosenbrock function as $(x,y) \mapsto 0.1 \,\sqrt{f_r(0.5 \,x, 4.5\, y)}$ which has the property that it is relatively smooth w.r.t. $\phi$.} on $\R^2$, with results displayed in Figure~\ref{fig:2D-optimization} and second on a randomly generated symmetric positive-definite quadratic objective on $\R^{2^{12}}$ with results displayed in Figure~\ref{fig:20kD-optimization}. These examples serve as a proof of concept and show that Algorithm~\ref{alg:online-regret-optimization} works comparatively well to two other well-known optimization algorithms on toy problems. 


\begin{figure}[ht]
\centering
\begin{minipage}[b]{0.32\textwidth}
    \includegraphics[width=0.99\textwidth]{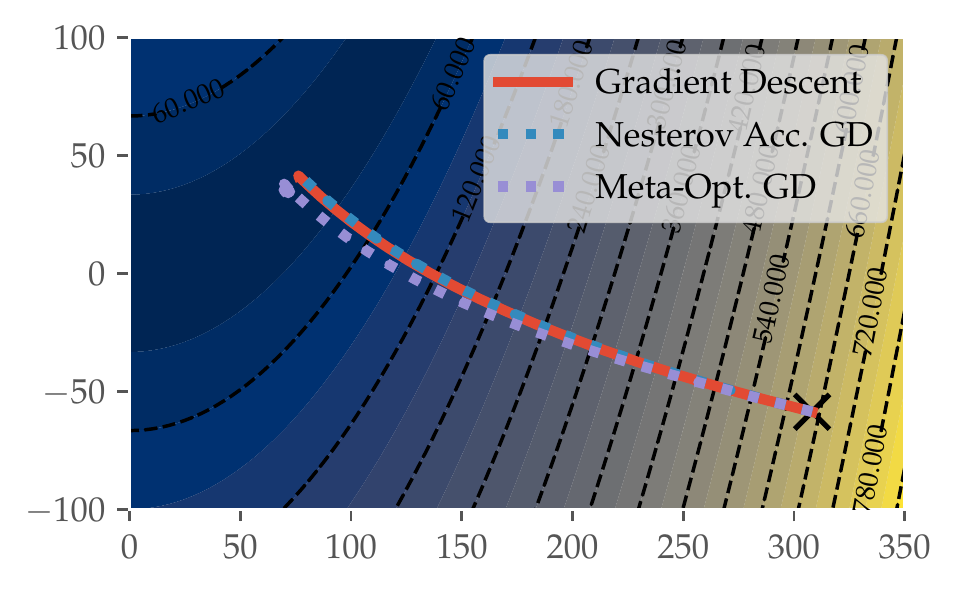}
\end{minipage}
\begin{minipage}[b]{0.32\textwidth}
    \includegraphics[width=0.99\textwidth]{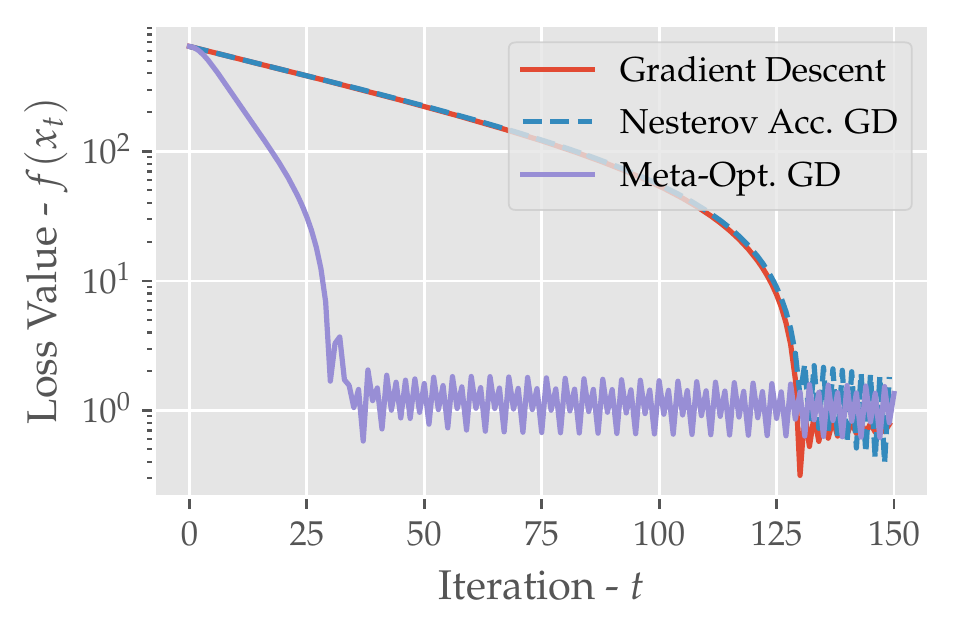}
\end{minipage}
\begin{minipage}[b]{0.32\textwidth}
    \includegraphics[width=0.99\textwidth]{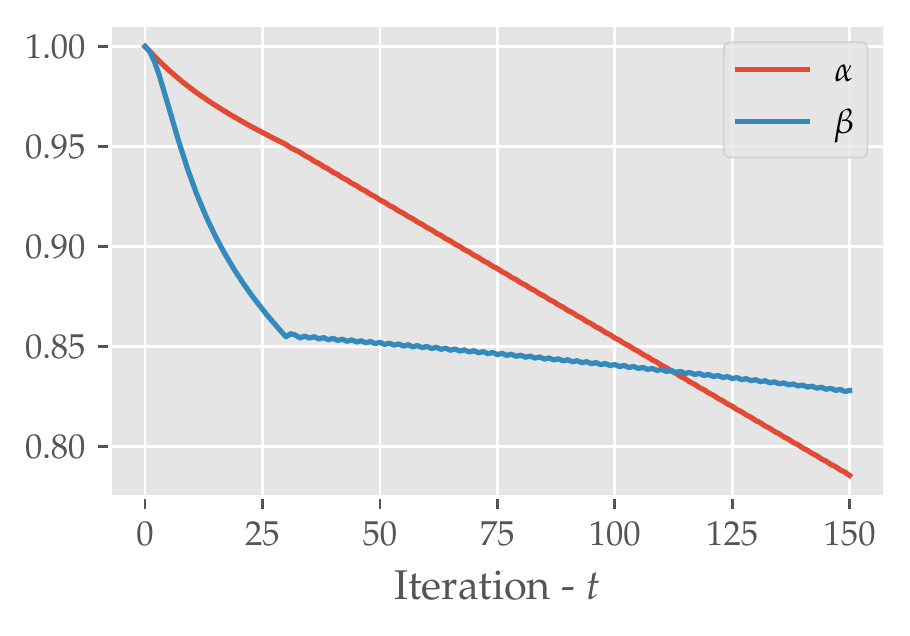}
\end{minipage}
\caption{ \textbf{Rescaled Rosenbrock Objective.} We include (left to right) a contour plot with the paths of each algorithm in $\X$, a plot of the loss function value over each iteration and the evolution of the hyperparameters, $\theta=\{\alpha,\beta\}$, of Algorithm~\ref{alg:online-regret-optimization}.}
\label{fig:2D-optimization}
\end{figure}

\begin{figure}[ht]
\centering
\begin{minipage}[b]{0.32\textwidth}
    \includegraphics[width=0.99\textwidth]{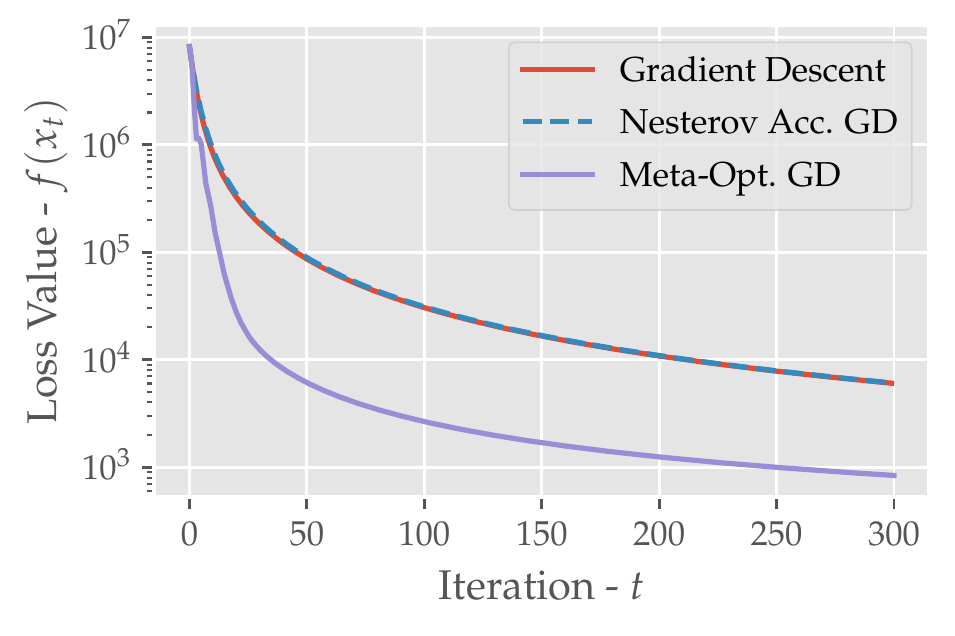}
\end{minipage}
\begin{minipage}[b]{0.32\textwidth}
    \includegraphics[width=0.99\textwidth]{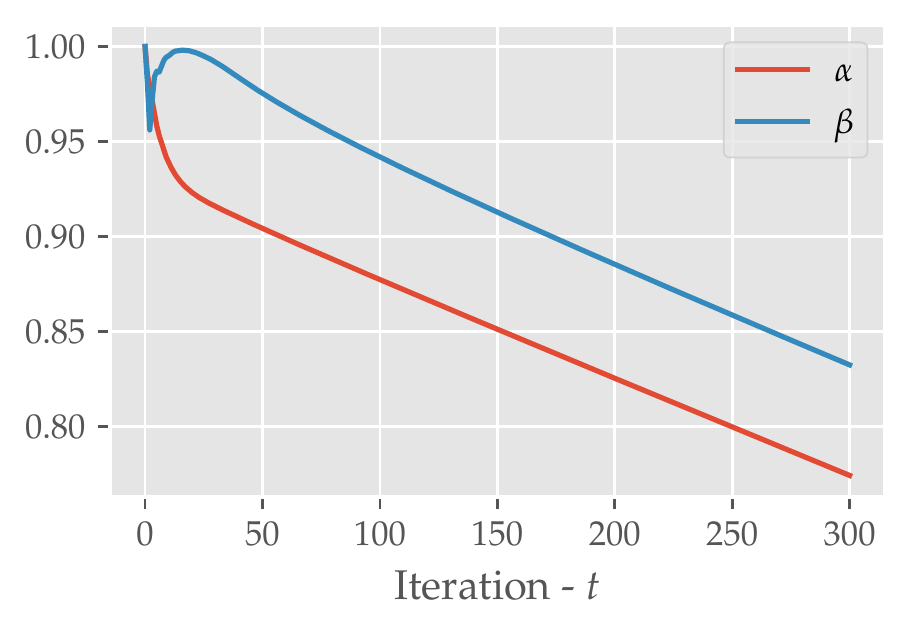}
\end{minipage}
\caption{ \textbf{$\mathbf{2^{12}}$-Dimensional Quadratic Objective.} We include (left to right) a plot of the loss value over each iteration and the evolution of the algorithm hyperparameters, $\theta$. }\label{fig:20kD-optimization}
\end{figure}

\section{Discussion}

Over the course of this paper, we characterize the existence and properties of regret optimal algorithms in a wide range of common optimization settings. One shortcoming of our approach, however, is that we do not restrict the measurability of algorithms in the minimization of regret. In light of this fact, it is interesting that we recover in Table~\ref{tab:convergence-bound-tbl} bounds that look quite similar to optimal convergence bounds for gradient-based optimization algorithms, in particular the $O(t^{-2})$ bound that is known to hold for `accelerated' algorithms. A more in depth analysis of these rates and comparison to known lower bounds would also be very interesting.

This paper presents new perspectives on optimization which deserve to be further explored. An interesting potential avenue of research would be the extension of this framework towards stochastic optimization. Another direction would be to see how commonly used optimization algorithms fall within this framework, and to determine whether they satisfy regret optimality in an exact or approximate sense.

\bibliography{bibfiles/optimopt_refs}

\newpage
\appendix

\section{Regret-Optimal Dynamics on Quadratic Objectives} \label{sec-apx:solution-linear-gradients}

\begin{lemma}[Descent on Quadratic Functions]
Assume that there exist $A,C\in S^d_{++}$ and $b\in\R^d$ such that
\begin{equation*}
  \nabla f(x) = A \, ( x - b )
  \;\;\text{ and }\;\;
  \phi(z) = \frac{1}{2} z^{\T} C z
  \;.
\end{equation*}
If there exists a matrix $\Phit \in S^d_{++}$ such that 
\begin{equation*}
    \Phit^{-1} = C^{-1} + (A + \Phit)^{-1}
    \;,
\end{equation*}
then we have that
\begin{equation*}
  \Delta x_{t} = - C^{-1}  \Phi_1 \left( x_t - b \right)
  \;,
\end{equation*}
for all $t\in\N$.
\end{lemma}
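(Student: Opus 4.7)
The plan is to verify that the claimed feedback law satisfies the Euler--Lagrange difference equation of Theorem~\ref{thm:optimal-dynamics} and then invoke the uniqueness afforded by convexity. Since $A \in S^d_{++}$ the objective $f$ is strictly convex, so by Lemma~\ref{lem:convex-uniqueness} the recursion $\nabla\phi(\Delta x_t) - \nabla\phi(\Delta x_{t-1}) = \nabla f(x_t)$ is both necessary and sufficient for regret-optimality, and $\mcP_x^\infty$ is a singleton. Any admissible trajectory satisfying this recursion is thus automatically \emph{the} unique regret-optimal algorithm, so the proof reduces entirely to checking the equation on the ansatz.

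Substituting $\nabla\phi(z) = Cz$ and $\nabla f(x) = A(x-b)$ collapses the recursion to the linear difference equation $C\Delta x_t - C\Delta x_{t-1} = A(x_t - b)$. Under the feedback ansatz $\Delta x_t = -M(x_t-b)$ with $M := C^{-1}\Phit$, one immediately has $x_{t+1} - b = (I-M)(x_t-b)$, so every term in the recursion is a linear function of $(x_t - b)$. Factoring out this arbitrary vector turns the entire infinite system of equations, for every $t\in\N$, into the single matrix identity $(A + \Phit)\,M = A$, equivalently $M = (A+\Phit)^{-1} A$.

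The remaining step is purely algebraic: show that the hypothesis $\Phit^{-1} = C^{-1} + (A+\Phit)^{-1}$ is precisely the condition that forces $C^{-1}\Phit = (A+\Phit)^{-1} A$. Applying the standard matrix identity $B^{-1} - D^{-1} = D^{-1}(D-B)B^{-1}$ with $B = \Phit$ and $D = A + \Phit$ yields $\Phit^{-1} - (A+\Phit)^{-1} = (A+\Phit)^{-1}A\Phit^{-1}$, so the Riccati-type hypothesis rearranges directly into $C^{-1} = (A+\Phit)^{-1}A\Phit^{-1}$, which is exactly $(A+\Phit)^{-1}A = C^{-1}\Phit = M$.

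Finally, to confirm the trajectory lies in $\mcP_x^\infty$ rather than being a mere formal fixed point of the necessary conditions, I would verify finite regret by checking that $I - M$ is a strict contraction. The Riccati relation forces $\Phit^{-1} \succ C^{-1}$, so $\Phit \prec C$; conjugating by $C^{-1/2}$ shows that $M = C^{-1}\Phit$ is similar to the symmetric positive-definite matrix $C^{-1/2}\Phit C^{-1/2} \prec I$, which places its spectrum in $(0,1)$. Hence $x_t \to b = \xst$ geometrically and both $f(x_t) - \fst$ and $\phi(\Delta x_t)$ decay at a geometric rate, so the regret is summable and the candidate genuinely attains the minimum. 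The only substantive step is the algebraic identification of the Riccati equation with the self-consistency condition for the linear ansatz; the contraction argument and the appeal to Lemma~\ref{lem:convex-uniqueness} are short.
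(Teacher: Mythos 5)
Your proof is correct, but it takes a genuinely different route from the paper's. The paper argues on the dual side: it invokes Lemma~\ref{lem:bregman-recursion} to get the value-function recursion $(\Jinf)^\ast(p) = \phi^\ast(p) + (f+\Jinf)^\ast(p)$, posits the linear ansatz $\nabla\Jinf(x) = \Phit\,(x-b)$, and differentiates the recursion to read off $\Phit^{-1} = C^{-1} + (A+\Phit)^{-1}$, after which the update rule follows from $\Delta x_t = -\nabla\phi^\ast(\nabla\Jinf(x_t)) = -C^{-1}\Phit\,(x_t-b)$ as in Lemma~\ref{lem:gradJ-recursion}. You instead work on the primal side: you substitute the feedback ansatz $\Delta x_t = -C^{-1}\Phit\,(x_t-b)$ directly into the Euler--Lagrange recursion of Theorem~\ref{thm:optimal-dynamics}, reduce the whole system to the single matrix identity $(A+\Phit)\,C^{-1}\Phit = A$, and correctly match this to the stated Riccati-type hypothesis via $\Phit^{-1}-(A+\Phit)^{-1}=(A+\Phit)^{-1}A\,\Phit^{-1}$. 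Your version buys two things the paper's does not: it avoids the unverified assumption that $\nabla\Jinf$ is linear (the paper's proof is explicitly conditional on that ansatz), and your closing contraction argument ($\Phit \prec C$, so the spectrum of $C^{-1}\Phit$ lies in $(0,1)$) certifies admissibility --- geometric convergence to $\xst = b$ and finite regret --- which, together with the strict convexity and uniqueness of Lemma~\ref{lem:convex-uniqueness}, is what entitles you to identify the trajectory with the element of $\mcP_x^\infty$ rather than with a mere formal solution of the necessary conditions. What the paper's route buys in exchange is an explicit handle on the value function itself, consistent with how $\Jinf$ is used elsewhere in Section~\ref{sec:convex-optimization}. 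Both arguments terminate in the same matrix identity.
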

\begin{proof}
    First note that under the assumptions of the theorem, we have that there exists a constant $c$ such that $f(x) = c + \frac{1}{2}(x-b)^\T A (x-b)$, demonstrating that $f$ convex function. Moreover, if $\xb\in\mcP^\infty$, we also have by Lemma~\ref{lem:bregman-recursion} that
    \begin{equation} \label{eq:proof-linear-dynamics-1}
        (\Jinf)^\ast(p) = \phi^\ast(p) + (f + \Jinf)^\ast(p)
        \;,
    \end{equation}
    where by a simple computation, we have that $\phi^\ast(p) = \frac{1}{2} p^\T C^{-1} p$. Moreover, we note that without loss of generality, we may consider the case where $b=0$ since we may simply use the domain transformation $x \mapsto x+b$.

    Now, assume that there exists $\Phit \in S^d_{++}$ such that $\nabla \Jinf(x) = \Phit x$. Differentiating equation~\eqref{eq:proof-linear-dynamics-1} and noting that the linearity of the gradients of $\Jinf$ and $f$ imply that $\nabla (f + \Jinf)^\ast (p) = (A + \Phit)^{-1} p$, we get that
    \begin{equation*}
        \Phit^{-1} p = C^{-1} p + (A + \Phit)^{-1} p
        \;,
    \end{equation*}
    which must hold for all $p\in\X$. Hence, if there exists $\Phi\in S^d_{++}$ such that
    \begin{equation*}
        \Phit^{-1} = C^{-1} + (A + \Phit)^{-1}
        \;,
    \end{equation*}
    noting that $\Delta x_t = - C^{-1} \nabla \Jinf(x)$, we find that the statement of the theorem must hold.
\end{proof}

\newpage

\section{Auxiliary Results} 
\label{sec:auxiliary-results}

\subsection{Dynamic Programming Principles}

\begin{lemma}[Dynamic Programming Principle, $T<\infty$]
	\label{lem:dynamic-programming-principle-fin}
	Suppose that Assumptions~\ref{ass:f-basic-assumptions} and~\ref{ass:phi-assumptions-general} hold. If for any $T\in\N$ we define the value function~$J^T(x)=\min_{\xb\in\A_x^T} \mcR_T(\xb)$, then for each $t,T\in\N$, the iterates of $\xb$ satisfy
	\begin{equation*} \label{eq:statement-bellman-principle-fin}
		J^T(x_t) = \min_{y\in\X} \{ \phi(y-x_t) + f(y) - \fst + J^{T-1}(y) \}
		\;,
	\end{equation*}
	where $x_{t+1}\in\argmin_{y\in\X} \{ \phi(y-x_t) + f(y) - \fst + J^{T-1}(y) \}$. Moreover, for each $0<h<T$, defining the shifted sequence $\xb_{\to h}$ such that $\xb_{\to h}=\{ y_{t+h} \}_{t=0}^\infty$, we have that $ \xb_{\to h}\in\mcP^{T-h}_{x_h}$.
\end{lemma}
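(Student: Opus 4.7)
The plan is to prove the Bellman equation first by a principle-of-optimality decomposition of the regret functional, and then to deduce the tail-optimality claim as a corollary via a cut-and-paste argument. Fix $x \in \X$ (which we identify with the iterate $x_t$). Any $\yb \in \A_x^T$ is determined by its first increment $\Delta y_0 = y_1 - x$ together with the shifted tail $\yb_{\to 1} = \{y_{s+1}\}_{s \geq 0} \in \A_{y_1}^{T-1}$. Splitting the regret sum at $s = 1$ gives the clean decomposition
\begin{equation*}
\mcR_T(\yb) = \bigl[f(y_1) - \fst + \phi(y_1 - x)\bigr] + \mcR_{T-1}(\yb_{\to 1})\,.
\end{equation*}
Minimizing over $\yb_{\to 1} \in \A_{y_1}^{T-1}$ with $y_1$ held fixed replaces the second summand by $J^{T-1}(y_1)$, and then minimizing over $y_1 \eqqcolon y$ yields the Bellman equation. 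The interchange of the two minimizations is legitimate provided both infima are attained, which I would justify by noting that Theorem~\ref{thm:finite-existence} guarantees $J^{T-1}(y)$ is attained for every $y$, and that the outer composite $y \mapsto \phi(y - x) + f(y) - \fst + J^{T-1}(y)$ is lower semi-continuous and coercive (the $\phi$-growth in Assumption~\ref{ass:phi-assumptions-general} forces the sum to $+\infty$ as $\|y\| \to \infty$, while $f \geq \fst$ and $J^{T-1} \geq 0$), so a standard Weierstrass argument applies.

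To identify $x_{t+1}$ with the argmin, I would take any $\xb \in \mcP_x^T$ and chain the two inequalities implied by the decomposition:
\begin{equation*}
J^T(x) = \mcR_T(\xb) = \bigl[f(x_{t+1}) - \fst + \phi(\Delta x_t)\bigr] + \mcR_{T-1}(\xb_{\to 1}) \geq \bigl[f(x_{t+1}) - \fst + \phi(\Delta x_t)\bigr] + J^{T-1}(x_{t+1}) \geq J^T(x)\,.
\end{equation*}
Equality must hold throughout, which simultaneously certifies the Bellman equation and shows that $x_{t+1}$ attains the outer minimum and that $\xb_{\to 1} \in \mcP_{x_{t+1}}^{T-1}$.

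The tail-optimality claim $\xb_{\to h} \in \mcP_{x_h}^{T-h}$ then follows either by iterating the above one-step consistency $h$ times, or directly by contradiction: were there some $\yb \in \A_{x_h}^{T-h}$ with $\mcR_{T-h}(\yb) < \mcR_{T-h}(\xb_{\to h})$, then concatenating the head $(x_0, \dots, x_h)$ with $\yb$ would produce an element of $\A_x^T$ with strictly smaller regret than $\xb$, contradicting $\xb \in \mcP_x^T$. The main obstacle I anticipate is the rigorous justification of the interchange of minimizations together with attainment of the outer minimum; once that Weierstrass-type step is in place, the rest of the proof is an essentially bookkeeping split of the regret sum, and the tail statement drops out without additional work.
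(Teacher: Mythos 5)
Your proposal is correct and follows essentially the same route as the paper's proof: the same one-step decomposition $\mcR_T(\yb) = f(y_1) - \fst + \phi(y_1 - x) + \mcR_{T-1}(\yb_{\to 1})$, reliance on Theorem~\ref{thm:finite-existence} for attainment, and identification of $x_{t+1}$ with the argmin via optimality of $\xb$. The only cosmetic difference is that your single sandwich chain of inequalities delivers both the Bellman equation and tail-optimality at once, where the paper runs two separate contradiction arguments; the substance is identical.
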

\begin{proof}
	We assume without loss of generality that $\fst=0$. We first note that by Theorem~\ref{thm:finite-existence}, we have that $\mcP_x^T\neq \emptyset$ for all $T\in\N$ and $x\in\X$. Moreover, by the definition of $\mcP_x^T$, we also have that $J^{T}(x) = \mcR_T(\xb) < \infty$ for any $\xb\in\mcP_x^T$.

	Recursively expanding $\mcR_{T}$, we find that for any $\yb\in\A^T$ such that $\mcR_{T}(\yb)<\infty$ and $0\leq h \leq T$, we have that
	\begin{equation*}
		\mcR_{T}(\yb) = \mcR_{h}(\yb) + \mcR_{T-h}(\yb_{\to h})
		\;,
	\end{equation*}
	where $\yb_{\to h}$ is the shifted sequence $\yb_{\to h}=\{ y_{t+h} \}_{t=0}^h$. The above recursion also includes the special case that
	\begin{equation} \label{eq:proof-bellman-principle-fin-1}
		\mcR_{T}(\yb) = \phi(y_1-y_0) + f(y_1) + \mcR_{T-1}(\yb_{\to 1})
		\;.
	\end{equation}

	Now we show that $\xb\in\mcP^{T}_x \; \Rightarrow \; \xb_{\to 1}\in\mcP^{T-1}_{x_1} $. Assume the converse that $\xb_{\to 1}\notin\mcP^{T-1}_{x_1}$ (i.e.\ that $\mcR_{T-1}(\xb_{\to 1}) > J^{T-1}(x_1)$ ). Applying \eqref{eq:proof-bellman-principle-fin-1} we have that
	\begin{align*}
		J^{T}(x) &= \mcR_T(\xb)
		\\ &= \phi(x_1 - x_0) + f(x_1) + \mcR_{T-1}(\xb_{\to 1})
		\\ &> \phi(x_1 - x_0) + f(x_1) + J^{T-1}(x_1)
		\;.
	\end{align*}
	But, defining a control $\yb\in\A_x^{T}$ such that $y_0 = x_1$ and $\yb_{\to 1} \in \mcP_x^{T-1}$, we have that by the definition of $\xb\in\mcP_x^{T-1}$,
	\begin{align*}
		J^{T}(x) &\leq \mcR_T(\yb)
		\\ &= \phi(x_1 - x_0) + f(x_1) + \mcR_{T-1}(\yb_{\to 1})
		\\ &= \phi(x_1 - x_0) + f(x_1) + J^{T-1}(x_1)
		\;,
	\end{align*}
	which is a contradiction. Hence, we find that $\xb_{\to 1}\in\mcP^{T-1}_{x_1}$ and
	\begin{equation} \label{eq:proof-bellman-principle-fin-2}
		J^{T}(x_0) = \phi(x_1 - x_0) + f(x_1) + J^{T-1}(x_1)
		\;,
	\end{equation}
	for all $\xb\in\mcP_x^{T}$ and $x\in\X$. By induction on $h$, we also note that $\xb_{\to 1}\in\mcP^{T-1}_{x_1}$ also implies that $\xb_{\to h}\in\mcP^{T-h}_{x_h}$ for $0<h<T$.

	Now suppose that for $\xb\in\mcP^T_x$, we have that $x_1 \notin B_x = \argmin_{y\in\X} \{ \phi(y-x_0) + f(y) + J^{T-1}(y) \}$. Then by~\eqref{eq:proof-bellman-principle-fin-2}, there exists $y\in\X$ such that
	\begin{equation*}
		\phi(y-x_0) + f(y) + J^{T-1}(y) <  \phi(x_1 - x_0) + f(x_1) + J^{T-1}(x_1) = J^T(x_0)
		\;.
	\end{equation*}
	Conversely, defining $\yb\in\A_x^T$ such that $\yb_{\to 1} \in \mcP_{y}^{T-1}$, we have that by the definition of $J^\infty$ that
	\begin{align*}
		J^T(x) &\leq \mcR(\yb)
		\\ &= \phi(y - x) + f(y) + \mcR_{T-1}(\yb_{\to 1})
		\\ &= \phi(y - x) + f(y) + J^{T-1}(y)
		\;,
	\end{align*}
	which is again a contradiction, and hence $x_1 \in\argmin_{y\in\X} \{ \phi(y-x_0) + f(y) + J^{T-1}(y) \}$. Combining this result with~\eqref{eq:proof-bellman-principle-fin-2}, and noting that $t$ is arbitrary we therefore conclude the proof.
\end{proof}

\begin{lemma}[Dynamic Programming Principle, $T=\infty$]
	\label{lem:dynamic-programming-principle-asym}
	Suppose that $\mcP_x^\infty$ is non-empty $\forall x\in\X$ and that Assumptions~\ref{ass:f-basic-assumptions} and~\ref{ass:phi-assumptions-general} hold. If we define the value function~$J^\infty(x)=\min_{\xb\in\A_x^\infty} \mcR_{\infty}(\xb)$, then for each $t$, the iterates of $\xb$ satisfy
	\begin{equation*} \label{eq:statement-bellman-principle-asym}
		J^\infty(x_t) = \min_{y\in\X} \{ \phi(y-x_t) + f(y) - \fst + J^\infty(y) \}
		\;,
	\end{equation*}
	where $x_{t+1}\in\argmin_{y\in\X} \{ \phi(y-x_t) + f(y) - \fst + J^\infty(y) \}$. Moreover, for each $T\in\N$, defining the shifted sequence $\xb_{\to h}$ such that $\xb_{\to h}=\{ y_{t+h} \}_{t=0}^\infty$, we have that $ \xb_{\to h}\in\mcP^\infty_{x_h}$.
\end{lemma}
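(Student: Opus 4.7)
The plan is to adapt the argument used in the finite-horizon case (Lemma~\ref{lem:dynamic-programming-principle-fin}), since the two proofs share the same structural backbone: additivity of the regret functional along shifts, plus a standard swapping argument. As a first reduction I would assume without loss of generality that $\fst = 0$, and recall that the hypothesis $\mcP_x^{\infty}\ne\emptyset$ for all $x$ (which is the content of Theorem~\ref{thm:asym-existence}) together with the finiteness claim in the same theorem guarantees $J^{\infty}(x)<\infty$ everywhere. The first observation to exploit is that for any $\yb\in\A_x^{\infty}$ with $\mcR_{\infty}(\yb)<\infty$ and any $h\in\N$, one has the pathwise decomposition
\begin{equation*}
\mcR_{\infty}(\yb) = \sum_{t=1}^{h}\!\bigl[f(y_t) + \phi(\Delta y_{t-1})\bigr] + \mcR_{\infty}(\yb_{\to h}),
\end{equation*}
which is just a rearrangement of the defining series and, crucially, implies $\mcR_{\infty}(\yb_{\to h})<\infty$ whenever the left side is finite.

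Next I would establish the one-step propagation $\xb\in\mcP_x^{\infty}\Rightarrow \xb_{\to 1}\in\mcP_{x_1}^{\infty}$ by contradiction. If $\mcR_{\infty}(\xb_{\to 1})>J^{\infty}(x_1)$, I would pick a control $\zb\in\mcP_{x_1}^{\infty}$ (which exists by hypothesis and has finite regret by Theorem~\ref{thm:asym-existence}) and splice it onto $(x_0,x_1)$ to build an admissible $\yb\in\A_x^{\infty}$ with strictly smaller regret than $\xb$, contradicting $\xb\in\mcP_x^{\infty}$. The same decomposition then yields $J^{\infty}(x_0) = \phi(x_1-x_0) + f(x_1) + J^{\infty}(x_1)$.

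With the recursion in hand, the Bellman equation itself follows from a two-sided comparison. The inequality $J^{\infty}(x_0)\le \inf_{y\in\X}\{\phi(y-x_0)+f(y)+J^{\infty}(y)\}$ is obtained by taking an arbitrary $y\in\X$, selecting $\zb\in\mcP_y^{\infty}$, and considering $\yb = (x_0, y, z_1, z_2, \ldots)\in\A_{x_0}^{\infty}$; finiteness of $\mcR_\infty(\yb)$ again relies on Theorem~\ref{thm:asym-existence}. The reverse inequality is immediate from the previous step, which already exhibits $y=x_1$ achieving the value. The same contradiction template shows $x_1\in\argmin_y\{\phi(y-x_0)+f(y)+J^{\infty}(y)\}$: if some $y^\ast$ gave a strictly smaller value, the spliced sequence would beat $\xb$. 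Finally, the claim $\xb_{\to h}\in\mcP_{x_h}^{\infty}$ for general $h$ follows by induction on $h$ using the base case already proved.

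The main subtlety, and the one genuine point of departure from the finite-horizon proof, is making sure every surgery between admissible controls preserves finiteness of $\mcR_{\infty}$. In the finite case this is automatic because $\mcR_T$ is a finite sum, but here we need to invoke the non-triviality assertion in Theorem~\ref{thm:asym-existence} (every minimizer has finite regret) each time we splice in an optimal tail. Once this bookkeeping is handled, no new analytic tool is required beyond what already appears in the proof of Lemma~\ref{lem:dynamic-programming-principle-fin}.
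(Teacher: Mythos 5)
Your proposal is correct and follows essentially the same route as the paper's proof: the additive decomposition $\mcR_{\infty}(\yb)=\mcR_h(\yb)+\mcR_{\infty}(\yb_{\to h})$, the splice-and-contradict argument for both the shift-optimality and the argmin characterization, and induction on $h$, with Theorem~\ref{thm:asym-existence} supplying finiteness of $J^{\infty}$. The only difference is that you make the finiteness bookkeeping for spliced controls explicit, which the paper leaves implicit.
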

\begin{proof}
	We assume without loss of generality that $\fst=0$. We first note that by Theorem~\ref{thm:asym-existence}, we have that $0\leq J^\infty(x) < \infty$ for all $x\in\X$. Moreover, by the definition of $\mcP_x^\infty$, we also have that $J^{\infty}(x) = \mcR_\infty(\xb)$ for any $\xb\in\mcP_x^\infty$. The remainder of the proof resembles closely the proof of Lemma~\ref{lem:dynamic-programming-principle-fin}.

	Recursively expanding $\mcR_{\infty}$, we find that for any $\yb\in\A^\infty$ such that $\mcR_{\infty}(\yb)<\infty$ and $h>0$, we have that
	\begin{equation*}
		\mcR_{\infty}(\yb) = \mcR_{h}(\yb) + \mcR_{\infty}(\yb_{\to h})
		\;,
	\end{equation*}
	where $\yb_{\to h}$ is the shifted sequence $\yb_{\to h}=\{ y_{t+T} \}_{t=0}^h$. The above recursion also includes the special case that
	\begin{equation} \label{eq:proof-bellman-principle-1}
		\mcR_{\infty}(\yb) = \phi(y_1-y_0) + f(y_1) + \mcR_{\infty}(\yb_{\to 1})
		\;.
	\end{equation}

	Now we show that $\xb\in\mcP^\infty_x \; \Rightarrow \; \xb_{\to 1}\in\mcP^\infty_{x_1} $. Assume the converse that $\xb_{\to 1}\notin\mcP^\infty_{x_1}$ (i.e.\ that $\mcR_\infty(\xb_{\to 1}) > \Jinf(x_1)$ ). Applying \eqref{eq:proof-bellman-principle-1} we have that
	\begin{align*}
		\Jinf(x) &= \mcR_\infty(\xb)
		\\ &= \phi(x_1 - x_0) + f(x_1) + \mcR_{\infty}(\xb_{\to 1})
		\\ &> \phi(x_1 - x_0) + f(x_1) + \Jinf(x_1)
		\;.
	\end{align*}
	But, defining a control $\yb\in\A_x^\infty$ such that $y_0 = x_1$ and $\yb_{\to 1} \in \mcP_x^\infty$, we have that by the definition of $\xb\in\mcP_x^\infty$,
	\begin{align*}
		\Jinf(x) &\leq \mcR_\infty(\yb)
		\\ &= \phi(x_1 - x_0) + f(x_1) + \mcR_{\infty}(\yb_{\to 1})
		\\ &= \phi(x_1 - x_0) + f(x_1) + \Jinf(x_1)
		\;,
	\end{align*}
	which is a contradiction. Hence, we find that $\xb_{\to 1}\in\mcP^\infty_{x_1}$ and
	\begin{equation} \label{eq:proof-bellman-principle-2}
		\Jinf(x_0) = \phi(x_1 - x_0) + f(x_1) + \Jinf(x_1)
		\;,
	\end{equation}
	for all $\xb\in\mcP_x^\infty$ and $x\in\X$. By induction on $h$, we also note that $\xb_{\to 1}\in\mcP^\infty_{x_1}$ also implies that $\xb_{\to h}\in\mcP^\infty_{x_h}$ for any $h\in\N$.

	Now suppose that for $\xb\in\mcP^\infty_x$, we have that $x_1 \notin B_x = \argmin_{y\in\X} \{ \phi(y-x_0) + f(y) + J^\infty(y) \}$. Then by~\eqref{eq:proof-bellman-principle-2}, there exists $y\in\X$ such that
	\begin{equation*}
		\phi(y-x_0) + f(y) + J^\infty(y) <  \phi(x_1 - x_0) + f(x_1) + \Jinf(x_1) = \Jinf(x_0)
		\;.
	\end{equation*}
	Conversely, defining $\yb\in\A_x^\infty$ such that $\yb_{\to 1} \in \mcP_{y}^\infty$, we have that by the definition of $J^\infty$ that
	\begin{align*}
		J^\infty(x) &\leq \mcR(\yb)
		\\ &= \phi(y - x) + f(y) + \mcR_{\infty}(\yb_{\to 1})
		\\ &= \phi(y - x) + f(y) + \Jinf(y)
		\;,
	\end{align*}
	which is again a contradiction, and hence $x_1 \in\argmin_{y\in\X} \{ \phi(y-x_0) + f(y) + J^\infty(y) \}$. Combining this result with~\eqref{eq:proof-bellman-principle-2}, and noting that $t$ is arbitrary we therefore conclude the proof.
\end{proof}

\subsection{Convex Analysis Results}
This section compiles some auxiliary results from convex analysis which are needed in the proofs of the main theorems below. In all proofs within this subsection, we assume without loss of generality that $\fst=0$, since we may simply consider the objective $\tilde{f}(x) = f(x) - \fst$ which satisfies this property.

\begin{lemma} \label{lem:bregman-recursion}
    Let Assumptions~\ref{ass:f-basic-assumptions},~\ref{ass:phi-assumptions-general} and~\ref{ass:f-phi-differntiable} hold and assume that $f$ is strictly convex. Assume that for any $x,y\in\X$ and $T\in\N\cup\{\infty\}$, define the dual points $q=\nabla J^T(x)$ and $p=\nabla J^T(y)$ as well as the function $\phit(x):=\phi(-x)$. We therefore have that
    \begin{equation}
        D_{(J^T)^\ast}( q , p ) = 
        D_{\phit^\ast}\left( q , p \right) +
        D_{(J^{T-1}+f)^{\ast}}( q , p )
        \;,
    \end{equation}
    as well as the recursion
    \begin{equation}
        (J^T)^\ast(q) = \phit^\ast(q) + (J^{T-1} + f)^\ast(q) \;.
    \end{equation}
\end{lemma}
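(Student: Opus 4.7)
The plan follows two natural steps: first establish the recursion for $(J^T)^\ast$ via Fenchel duality applied to the Dynamic Programming Principle, then derive the Bregman-divergence identity by linearity.

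\textbf{Step 1 (Recursion for $(J^T)^\ast$).} Starting from the Dynamic Programming Principle (Lemma~\ref{lem:dynamic-programming-principle-fin} for $T<\infty$ and Lemma~\ref{lem:dynamic-programming-principle-asym} for $T=\infty$), and using the identity $\phi(y-x) = \phit(x-y)$, I would rewrite
\[
J^T(x) = \inf_{y\in\X}\{\phit(x-y) + (f + J^{T-1})(y)\} = \bigl(\phit \,\square\, (f + J^{T-1})\bigr)(x),
\]
recognizing $J^T$ as the infimal convolution of $\phit$ with $f + J^{T-1}$ (with the convention $J^{T-1} = J^\infty$ when $T=\infty$). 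Next I would apply the standard Fenchel identity $(g_1\,\square\,g_2)^\ast = g_1^\ast + g_2^\ast$, which follows in one line from the change of variables $(x,y)\mapsto(x-y,y)$ inside the supremum defining the conjugate. Both $\phit$ and $f+J^{T-1}$ are proper convex (by Assumption~\ref{ass:f-phi-differntiable} and Lemma~\ref{lem:convex-J-properties-1}), so this identity yields
\[
(J^T)^\ast(q) = \phit^\ast(q) + (f + J^{T-1})^\ast(q),
\]
which is precisely the second claim.

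\textbf{Step 2 (Bregman decomposition).} Once $(J^T)^\ast$ is written as a sum of two differentiable convex functions, I would invoke the well-known linearity of the Bregman divergence in the underlying potential: if $g = g_1 + g_2$ with $g_1, g_2$ differentiable, then $\nabla g = \nabla g_1 + \nabla g_2$ and expanding the definition of $D_g$ directly gives $D_g(q,p) = D_{g_1}(q,p) + D_{g_2}(q,p)$. Applying this to the decomposition from Step~1 at the specified dual points $q = \nabla J^T(x)$ and $p = \nabla J^T(y)$ produces the claimed Bregman-divergence identity.

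\textbf{Main obstacle.} The work is essentially regulatory rather than calculational: I need to justify that $\phit^\ast$ and $(f + J^{T-1})^\ast$ are differentiable at $q$ and $p$, so both the Fenchel identity and the Bregman-divergence formula are unambiguously defined at these points. For $\phit^\ast$ this follows from the Legendre convexity of $\phi$ in Assumption~\ref{ass:f-phi-differntiable}, which gives the inverse-gradient relation $\nabla\phit^\ast\circ\nabla\phit = \mathrm{id}$. For $(J^T)^\ast$, differentiability on $\operatorname{Im}(\nabla J^T)$ follows from Lemma~\ref{lem:convex-J-properties-2}-i together with strict convexity of $f$, and differentiability of $(f+J^{T-1})^\ast$ at the relevant points is then inherited from the sum decomposition $(J^T)^\ast - \phit^\ast$ via Step~1. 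With these regularity facts in hand, both identities follow directly from the infimal-convolution duality and the linearity argument.
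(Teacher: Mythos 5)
Your proof is correct, but it runs in the opposite direction from the paper's. The paper works on the primal side first: it expands $D_{J^T}(x,y)$ directly along the optimal trajectories $\xb\in\mcP_x^T$, $\yb\in\mcP_y^T$, using the recursions $J^T(x_0)=f(x_1)+\phi(\Delta x_0)+J^{T-1}(x_1)$ and $\nabla J^T(x_0)=-\nabla\phi(\Delta x_0)=\nabla(J^{T-1}+f)(x_1)$ from Lemma~\ref{lem:gradJ-recursion} to obtain $D_{J^T}(x,y)=D_{J^{T-1}+f}(x_1,y_1)+D_\phi(\Delta x_0,\Delta y_0)$, and only then dualizes each term via $D_g(x,y)=D_{g^\ast}(\nabla g(y),\nabla g(x))$; the conjugate recursion is extracted at the end by setting $p=\nabla J^T(\xst)=0$. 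You instead go straight to the dual side: the dynamic programming principle exhibits $J^T$ as the infimal convolution $\phit\,\square\,(f+J^{T-1})$, the identity $(g_1\,\square\,g_2)^\ast=g_1^\ast+g_2^\ast$ gives the conjugate recursion immediately, and the Bregman identity then falls out of additivity of $D_{(\cdot)}$ in the potential. Your route is shorter, avoids the somewhat delicate trajectory-level bookkeeping, and yields the Bregman identity at arbitrary dual points rather than only at points of the form $\nabla J^T(x)$; what it gives up is the intermediate primal identity $D_{J^T}(x,y)=D_{J^{T-1}+f}(x_1,y_1)+D_\phi(\Delta x_0,\Delta y_0)$, which the paper's computation produces as a byproduct. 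One caveat on your regularity discussion: deriving differentiability of $(f+J^{T-1})^\ast$ by subtracting $\phit^\ast$ from $(J^T)^\ast$ leans on Lemma~\ref{lem:convex-J-properties-2}-\emph{i}, whose own proof in the paper invokes the very recursion you are proving; it is cleaner (and non-circular) to get it directly from the fact that $f+J^{T-1}$ is strictly convex and differentiable, so its conjugate is differentiable by \cite[Theorem 26.3]{rockafellar1970convex}, exactly as the paper does.
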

\begin{proof}
	Recall that Lemma~\ref{lem:convex-J-properties-1} implies that $J^T$ is convex and differentiable for all $T\in\N$ under the assumptions of the theorem.
    Recalling the recursive properties of $\nabla J^T (x)$ and $J^T(x)$ (Lemma~\ref{lem:gradJ-recursion}), for $x,y\in\X$ and $\xb\in\mcP_x^T$ and $\yb\in\mcP_y^T$, we compute
    \begin{align}
        D_{J^T}(x,y)
        &\eqhint[eq0]{=} J^T(x_0) - J^T(y_0) - \langle \nabla J^T(y) , x_0-y_0 \rangle \notag
        \\ &\eqhint[eq1]{=} \notag
        \{ J^{T-1}(x_1) + f(x_1) - J^{T-1}(y_1) - f(y_1) - \langle \nabla J^{T-1}(y_1) + \nabla f(y_1) , x_1 - y_1 \rangle \} 
        \\& \hphantom{1em}+ \{ \phi(\Delta x_0) - \phi(\Delta y_0) - \langle \nabla J^T(y) , (x_1-x_0) - (y_1 -y_0) \rangle  \}  \notag
        \\ &\eqhint[eq2]{=} \notag
        D_{J^{T-1} + f}(x_1,y_1) + \{ \phi(\Delta x_0) - \phi(\Delta y_0) + \langle \nabla J^T(y_0) , (x_1-x_0) - (y_1 -y_0) \rangle  \}
        \\ &\eqhint[eq3]{=} \notag
        D_{J^{T-1} + f}(x_1,y_1) + \{ \phi(\Delta x_0) - \phi(\Delta y_0) - \langle \nabla \phi(\Delta y_0) , (x_1-x_0) - (y_1 -y_0) \rangle  \}
        \\ &\eqhint[eq4]{=} \label{eq:bregman-recursion}
        D_{J^{T-1} + f}(x_1,y_1) + D_\phi( \Delta x_0 , \Delta y_0)
        \;, 
    \end{align}
    where~\hintref{eq0} follows from the definition of the Bregman Divergence, where~\hintref{eq1} follows from the recursive expansion $J^T(x_0) = f(x_1) + \phi(\Delta x_0) + J^{T-1}(x_1)$ and $\nabla J^{T}(x_0)=\nabla(J^{T-1}+f)(x_1)$, where~\hintref{eq2} follows from the definition of the Bregman divergence applied to the left-most curly braces, where~\hintref{eq3} follows from the identity $\nabla J^T(x_0)=-\nabla \phi(\Delta x_0)= \nabla \phit(- 
    \Delta x_0)$ obtained from Lemma~\ref{lem:gradJ-recursion}, and where~\hintref{eq4} again follows from the definition of the Bregman divergence. 
    
    Now, recall the property of the Bregman divergence that 
    \begin{equation} \label{eq:dual-bregman-property}
        D_{g}(x,y) = D_{g^\ast}(\nabla g (y) , \nabla g( x) )
    \end{equation}
    for convex and differentiable $g$ and $g^\ast$. Since $f$ is strictly convex and differentiable and $J^{T-1}$ is convex and differentiable, $J^{T-1} + f$ is strictly convex and differntiable. Hence by~\cite[Theorem 26.3]{rockafellar1970convex} $(J^{T-1} + f)^\ast$ is also differentiable and strictly convex, so we have we have
    \begin{align*}
        D_{J^{T-1}+f}( x_1 , y_1 ) 
        &=
        D_{(J^{T-1}+f)^{\ast}}( \nabla(J^{T-1}+f)(x_1) , \nabla(J^{T-1}+f)(y_1) ) 
        \\&=
        D_{(J^{T-1}+f)^{\ast}}( \nabla J^T(x_0) , \nabla J^T(y_0) ) 
        \;,
    \end{align*}
    where in the second line, we use the recursive property that $\nabla J^T (x_0) = \nabla J^{T-1} (x_1) + \nabla f (x_1)$ from Lemma~\ref{lem:gradJ-recursion}. Applying~\eqref{eq:dual-bregman-property} and that $\nabla J^T(x_0) = \nabla \phit(- 
    \Delta x_0)$, we get
    \begin{equation*}
        D_{\phi}(\Delta x_0 , \Delta y_0) = 
        D_{\phit^\ast}\left( \nabla J^T(y_0), \nabla J^T(x_0) \right)
        \;.
    \end{equation*}
    
    Combining these results with equations~\eqref{eq:bregman-recursion} and~\eqref{eq:dual-bregman-property}, we obtain
    \begin{equation*}
        D_{(J^T)^\ast}(\nabla J^T(y),\nabla J^T(y) ) = 
        D_{\phit^\ast}\left( \nabla J^T(y_0), \nabla J^T(x_0) \right) +
        D_{(J^{T-1}+f)^{\ast}}( \nabla J^T(x_0) , \nabla J^T(y_0) ) 
        \;,
    \end{equation*}
    and hence, letting $p=\nabla J^T(x)$ and $q=\nabla J^T(y)$, we have
    \begin{align*}
        D_{(J^T)^\ast}( q , p ) = 
        D_{\phit^\ast}\left( q , p \right) +
        D_{(J^{T-1}+f)^{\ast}}( q , p )
        \;.
    \end{align*}
    Letting $p=0=\nabla J^T(\xst)$ in the above, we obtain the second result.
\end{proof}

\begin{lemma}[Bregman Relative Duality] \label{lem:interchange-relative-smoothness}
    Consider a convex, differentiable function $g:\X\rightarrow\R$. If $\phi$ is a symmetric positive-definite quadratic function on $\X$, then
    \begin{enumerate}
        \renewcommand{\labelenumi}{\roman{enumi}.}
        \item If $g$ is $\lambda$-relatively-smooth with respect to $\phi$, then $g^\ast$ is $\frac{1}{\lambda}$-relatively-convex with respect to $\phi^\ast$.
        \item If $g$ is $\mu$-relatively-convex with respect to $\phi$, then $g^\ast$ is $\frac{1}{\mu}$-relatively-smooth with respect to $\phi^\ast$.
    \end{enumerate}
\end{lemma}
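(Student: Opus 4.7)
The plan is to reduce both assertions to the classical Legendre duality between Lipschitz-gradient and strong-convexity via a linear change of variables that maps $\phi$ to the standard Euclidean squared norm. Write $\phi(x) = \frac{1}{2}\langle x, A x\rangle$ for a symmetric positive-definite operator $A$, so that $\phi^\ast(p) = \frac{1}{2}\langle p, A^{-1} p\rangle$. Setting $T := A^{1/2}$ and $\tilde{g}(z) := g(T^{-1}z)$, a short calculation gives $\tilde{\phi}(z) := \phi(T^{-1}z) = \frac{1}{2}\|z\|^2$, $\tilde{g}^\ast(w) = g^\ast(Tw)$, and $\phi^\ast(Tw) = \frac{1}{2}\|w\|^2$. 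Since composition with an invertible linear map preserves convexity, one obtains the equivalences
\begin{equation*}
    \lambda\phi - g \text{ is convex} \;\iff\; \frac{\lambda}{2}\|\cdot\|^2 - \tilde{g} \text{ is convex}
\end{equation*}
and
\begin{equation*}
    g^\ast - \frac{1}{\lambda}\phi^\ast \text{ is convex} \;\iff\; \tilde{g}^\ast - \frac{1}{2\lambda}\|\cdot\|^2 \text{ is convex},
\end{equation*}
together with their analogues for relative convexity (swap signs and the roles of $g$ and $\phi$).

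Next I would invoke the classical Legendre duality in the Euclidean setting (see, e.g., Rockafellar, \emph{Convex Analysis}, Section~26, or the Baillon--Haddad theorem): a convex, differentiable function $\tilde{g}$ satisfies $\frac{L}{2}\|\cdot\|^2 - \tilde{g}$ convex if and only if $\tilde{g}^\ast - \frac{1}{2L}\|\cdot\|^2$ is convex. Applying this with $L = \lambda$ together with the two equivalences above immediately yields part~(i). Part~(ii) follows by the symmetric argument run in the opposite direction: $\mu$-relative convexity of $g$ with respect to $\phi$ translates under the same change of variables into $\mu$-strong convexity of $\tilde{g}$, hence by the classical duality into $\frac{1}{\mu}$-smoothness of $\tilde{g}^\ast$, which translates back into $\frac{1}{\mu}$-relative smoothness of $g^\ast$ with respect to $\phi^\ast$.

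The main obstacle is really just the bookkeeping inherent in the change of variables: it is important that $A^{1/2}$ is self-adjoint so that both the primal and the dual sides transform cleanly under the single map $T$, and it is crucially $\phi$ being a \emph{quadratic} form (rather than a more general relatively-smooth reference) that allows $T$ to simultaneously normalize both $\phi$ and $\phi^\ast$ to the Euclidean squared norm --- this is precisely why the hypothesis on $\phi$ cannot be relaxed. Once this is recognized, the only nontrivial input is the standard Euclidean duality between smoothness and strong convexity, which I would cite rather than re-derive.
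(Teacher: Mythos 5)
Your proof is correct, and it reaches the same conclusion by a slightly different mechanism than the paper. The paper observes that a symmetric positive-definite quadratic $\phi$ satisfies $\phi(x)=\tfrac12\|x\|_0^2$ for a norm $\|\cdot\|_0$, so that $D_\phi(x,y)=\tfrac12\|x-y\|_0^2$ and relative smoothness/convexity with respect to $\phi$ coincide with strong smoothness/convexity with respect to $\|\cdot\|_0$; it then invokes the general-norm duality theorem of Kakade, Shalev-Shwartz and Tewari (their Theorem~6) directly, with no change of variables. You instead normalize explicitly: conjugating by $T=A^{1/2}$ sends $\phi$ and $\phi^\ast$ simultaneously to $\tfrac12\|\cdot\|^2$, after which only the classical Euclidean smoothness/strong-convexity duality is needed. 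The trade-off is that your route requires the change-of-variables bookkeeping (correctly handled: $\tilde g^\ast(w)=g^\ast(Tw)$ uses the self-adjointness of $T$, and convexity is preserved under composition with an invertible linear map in both directions) but only cites the most elementary version of the duality, whereas the paper's route is shorter at the price of appealing to the duality theorem for arbitrary norms. Your remark that quadraticity of $\phi$ is exactly what lets a single $T$ normalize both $\phi$ and $\phi^\ast$ is a nice way of seeing why the hypothesis cannot be dropped; the paper makes the same point implicitly through the identity $D_\phi(x,y)=\phi(x-y)$. One cosmetic difference: you derive part~(ii) by running the duality in the reverse direction, while the paper simply notes that (i) and (ii) are the same statement under the interchange $g\leftrightarrow g^\ast$ (valid since $g^{\ast\ast}=g$ for a finite convex function on $\X$); either is fine.
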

\begin{proof}
    We first note that \emph{i} and \emph{ii} are identical statements, where we obtain the other by interchanging $g$ and $g^\ast$. Hence, we only show the proof of \emph{i}.
    
    We begin by establishing a few results regarding $\phi$. First, since $\phi$ is symmetric positive-definite and quadratic, there exists a norm $\| \cdot \|_0$ on $x$ such that $\phi(x) = \frac{1}{2} \| x \|^2_0$. This fact is easy to verify by setting $\| \cdot \|_0 = \sqrt{2 \phi(x)}$, and verifying that this satisfies the necessary conditions of a norm.
    Next, it is also easy to verify (either by simple computation or see~\cite[Example 3.27]{boyd2004convex}) that $\phi^\ast(p) = \frac{1}{2}\| p \|_{0,\ast}^2$, where $\| \cdot \|_{0,\ast}$ represents the dual norm of $\| \cdot \|_0$. 

    Since $\phi$ is quadratic, we also have that $D_\phi(x,y)=\frac{1}{2} \| x -y \|^2_0$, and therefore the definition of relative smoothness and strong smoothness with respect to $\|\cdot\|_0$ of~\cite[Definition 5]{kakade2009applications} co-incide. Hence, $g$ is $\lambda$-strongly-smooth with respect to the norm $\| \cdot \|_{0}$. Applying~\cite[Theorem 6]{kakade2009applications}, we obtain that $g^\ast$ must be $\frac{1}{\lambda}$-strongly-convex with respect to the norm $\|\cdot\|_{0,\ast}$, and hence $g^\ast$ is $\frac{1}{\lambda}$-relatively-convex with respect to $\phi^\ast$, yielding the desired result.
\end{proof}

\subsection{ Descent Lemmas }

\begin{theorem}[Primal Descent Lemma]
	\label{lemma:primal-descent-lemma}
	Let Assumptions~\ref{ass:f-basic-assumptions},~\ref{ass:phi-assumptions-general} and~\ref{ass:f-phi-differntiable} hold, and let $f$ be strictly convex. For $T\in\N\cup\{\infty\}$, let $\xb \in \mcP_{x}^T$ and assume that $\phi$ is a symmetric positive-definite quadratic function on $\X$, and that $J^T$ is 1-relatively-smooth with respect to $\phi$. For $0 \leq t \leq T-1$, we have
	\begin{enumerate}
		\item For all $y \in \D$, $J^{T}(x_{t+1}) - J^{T}(y) \leq 
		- D_{J^{T}}( y \mathrel{,} x_t) +
		D_{\phi}( y \mathrel{,} x_t)  - D_{\phi}( y  \mathrel{,} x_{t+1} )$,
	\end{enumerate}
	from which it follows that
	\begin{enumerate}
		\setcounter{enumi}{1}
	 	\item $J^{T}(x_{t+1}) - J^{T}(x_t) \leq  
	 	- D_{\phi}( x_t  \mathrel{,} x_{t+1} )$, and
	 	\item $D_{J^T}( x_{t+1} \mathrel{,} \xst ) \leq 
		- D_{J^{T}}( \xst \mathrel{,} x_t) +
		D_{\phi}( \xst \mathrel{,} x_t)  - D_{\phi}( \xst  \mathrel{,} x_{t+1} )$.
	\end{enumerate} 
\end{theorem}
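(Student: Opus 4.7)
My plan is to prove item 1 as the main descent inequality, and then recover items 2 and 3 as immediate specialisations by taking $y=x_t$ and $y=\xst$ respectively. The argument follows the classical proximal-gradient descent-lemma template, with $J^T$ playing the role of the objective and $\phi$ serving as the Bregman reference. The symmetric positive-definite quadratic hypothesis on $\phi$ is what makes the Bregman manipulations exact: it forces $\phit=\phi$, linearises $\nabla\phi$, and turns the three-point relation for $D_\phi$ into an equality.

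To establish item 1 I would combine three ingredients. First, expanding the hypothesis that $J^T$ is $1$-relatively-smooth with respect to $\phi$ (i.e.\ that $\phi-J^T$ is convex) gives $D_{J^T}(x_{t+1},x_t) \le D_\phi(x_{t+1},x_t)$, which rearranges to the standard descent upper bound
\begin{equation*}
  J^T(x_{t+1}) - J^T(x_t) \le \langle \nabla J^T(x_t),\, x_{t+1}-x_t\rangle + D_\phi(x_{t+1},x_t).
\end{equation*}
Second, the definition of the Bregman divergence applied to $J^T$ yields the exact identity $J^T(x_t) - J^T(y) = \langle \nabla J^T(x_t),\, x_t - y\rangle - D_{J^T}(y,x_t)$. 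Summing these two lines produces
\begin{equation*}
  J^T(x_{t+1}) - J^T(y) \le \langle \nabla J^T(x_t),\, x_{t+1} - y\rangle + D_\phi(x_{t+1}, x_t) - D_{J^T}(y, x_t).
\end{equation*}
Third, Lemma~\ref{lem:gradJ-recursion} together with $\phit=\phi$ and the linearity of $\nabla\phi$ give $\nabla\phi(x_t) - \nabla\phi(x_{t+1}) = \nabla J^T(x_t)$, so the standard three-point identity for $D_\phi$ rewrites the inner product above as $D_\phi(y,x_t) - D_\phi(y,x_{t+1}) - D_\phi(x_{t+1},x_t)$. The two $D_\phi(x_{t+1},x_t)$ terms cancel and item 1 drops out.

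Items 2 and 3 then cost nothing. Setting $y=x_t$ in item 1 annihilates both $D_\phi(y,x_t)$ and $D_{J^T}(y,x_t)$, giving item 2. Setting $y=\xst$ and invoking Lemma~\ref{lem:convex-J-properties-2}-\emph{ii} (that $\xst$ is the unique minimiser of each $J^T$ with $J^T(\xst)=0$ and hence $\nabla J^T(\xst)=0$) collapses the left-hand side of item 1 to $D_{J^T}(x_{t+1},\xst)$, yielding item 3.

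The only delicate point I anticipate is the sign/convention bookkeeping when substituting the update rule from Lemma~\ref{lem:gradJ-recursion}: with $\phit(x)=\phi(-x)$ and $\phi$ a symmetric positive-definite quadratic, one must verify carefully that the dynamics collapse to the clean linear identity $\nabla\phi(x_t)-\nabla\phi(x_{t+1}) = \nabla J^T(x_t)$. Once this identification is in hand, the proof reduces to a single invocation of the three-point Bregman identity together with the smoothness and convexity bounds, all of which are routine.
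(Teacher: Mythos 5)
Your proposal is correct and follows essentially the same route as the paper: the paper also combines the $1$-relative-smoothness descent bound, the convexity of $J^T$ (via the Bregman identity), and the three-point property of $D_\phi$ — the only cosmetic difference being that the paper packages the three-point step as the proximal inequality for the update $x_{t+1}=\argmin_z\{\langle\nabla J^T(x_t),z-x_t\rangle+D_\phi(z,x_t)\}$, whereas you substitute the first-order identity $\nabla\phi(x_t)-\nabla\phi(x_{t+1})=\nabla J^T(x_t)$ directly. Items 2 and 3 are obtained by the same specialisations $y=x_t$ and $y=\xst$ in both arguments.
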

\begin{proof}
	We first note that by Lemma~\ref{lem:convex-J-properties-2}, the assumptions of the theorem guarantee that $J^T$ is Legendre convex.
    We first note that since $\phi$ is quadratic, we have the properties that $\phi(x) = \phi(-x)$.
	Due to the assumed linearity of $\nabla \phi$, we find that the update rule for $\xb\in\mcP_{t,x}^T$ can be expressed as
	\begin{align*}
		\nabla \phi \left( x_{t+1} - x_t \right) 
		&= \nabla \phi \left( x_{t+1} \right) - \nabla \phi \left( x_{t+1} \right) 
		\\ &= - \nabla J^{T}(x_t)
		\;,
	\end{align*}
	where the second equality holds due to the linearity of $\nabla \phi$. It is then easy to verify that this update rule can also be expressed as the proximal update rule
	\begin{equation}
		x_{t+1} = \arg \min_{z} \left\{
		\langle \nabla J^{T}(x_t) \mathrel{,} z - x_t \rangle
		+ D_{\phi}\left( z \mathrel{,} x_t \right)
		\right\}
	\end{equation}
	which for any $z\in\D$ satisfies the proximal inequality
	\begin{equation}
		\langle \nabla J^{T}(x_t) \mathrel{,} z - x_t \rangle +
		D_{\phi}\left( z \mathrel{,} x_t \right)
		\geq
		\langle \nabla J^{T}(x_t) \mathrel{,} x_{t+1} - x_t \rangle
		+ D_{\phi}\left( x_{t+1} \mathrel{,} x_t \right)
		+ D_{\phi}\left( z \mathrel{,} x_{t+1} \right)
		\;.
	\end{equation}
	Hence, we have that
	\begin{align*}
		J^{T}(x_{t+1}) 
		&\eqhint[leq1]{\leq} 
		J^{T}(x_{t}) + \langle \nabla J^{T}(x_t) \mathrel{,} x_{t+1} - x_t  \rangle +
		D_{\phi}( x_{t+1} \mathrel{,} x_t )
		\\ &\eqhint[leq2]{\leq}
		J^{T}(x_{t}) + 
		\langle \nabla J^{T}(x_t) \mathrel{,} x - x_t  \rangle + 
		D_{\phi}(x \mathrel{,} x_t)  - D_{\phi}(x  \mathrel{,} x_{t+1} )
		\\ &\eqhint[leq3]{\leq}
		J^{T}(x) - D_{J^{T}}(x \mathrel{,} x_t) +
		D_{\phi}(x \mathrel{,} x_t)  - D_{\phi}(x  \mathrel{,} x_{t+1} )
		\;,
	\end{align*}
	where~\hintref{leq1} follows from~\ref{lem:convex-J-properties-1}-\emph{i},~\hintref{leq2} follows from the proximal inequality and~\hintref{leq3} follows from simple algebra, yielding the first result in the statement of the lemma.\reseteqhint The second and third follow by applying the special cases $y=x_t$ and $y=\xst$ to the first.
\end{proof}

\begin{theorem}[Dual Descent Lemma] \label{lemma:dual-descent-Lemma}
	Let Assumptions~\ref{ass:f-basic-assumptions},~\ref{ass:phi-assumptions-general} and~\ref{ass:f-phi-differntiable} hold, and let $f$ be strictly convex. For $T\in\N\cup\{ \infty \}$, let $\xb\in\mcP^T$ and assume that $(J^T)^\ast$ is $\lambda$-relatively-convex with respect to $\phi$. 
	\begin{enumerate}
		\item For all $y\in\D$ and $0\leq t \leq T-1$, and letting $\phit(x) := \phi(-x)$, we have
	\begin{align*} 
	\label{eq:dual-descent-lemma-0}
		\phit^{\ast}\left( 
			\nabla J^{T}( x_{t+1} )
		\right)
		-
		\phit^{\ast}\left( 
			\nabla J^{T}( y ) 
		\right)
		\leq
		- & D_{\phit^{\ast}}\left( \nabla J^T(y) \mathrel{,} \nabla J^T(x_t) \right)
		\\&\hspace{1.5em}+ 
		D_{J^T}\left( x_{t}\mathrel{,} y \right) - D_{J^T}\left( x_{t+1}\mathrel{,} y \right) 
		\;.
	\end{align*}
	\end{enumerate}
	Moreover, the above inequality implies that for all $0\leq t \leq T-1$,
	\begin{enumerate}
		\setcounter{enumi}{1}
		\item $\phit^{\ast}\left( 
			\nabla J^{T}( x_{t+1} )
		\right)
		+ D_{J^T}\left( x_{t+1} \mathrel{,} x_t \right)
		\leq
		\phit^{\ast}\left( 
			\nabla J^{T}( x_t ) 
		\right)
		$, and
		\item For any $x^{\star} \in \arg\min_{y\in\D} J^T(y)$,
		\begin{align*}
		D_{J^T}\left( x_{t+1}\mathrel{,}\xst \right) 
		+
		D_{\phi^{\ast}}\left( 
		\nabla J^{T}(x_{t+1}) 
		\mathrel{,}
		\nabla J^{T}(\xst ) \right) &
		\\+
		D_{\phi^{\ast}}\left( \nabla J^T( \xst ) \mathrel{,} \nabla J^T(x_t) \right)
		&\leq
		D_{J^T}\left( x_{t}\mathrel{,} \xst \right)
		\end{align*}
	\end{enumerate}
\end{theorem}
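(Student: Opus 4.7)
The plan is to obtain statement 1 by combining two applications of the three-point Bregman identity—one for the generator $(J^T)^\ast$ and one for $\phit^\ast$—evaluated at the common dual triple $(p, q_t, q_{t+1})$, where $q_\tau := \nabla J^T(x_\tau)$ and $p := \nabla J^T(y)$. Statements 2 and 3 will then be recovered as the specializations $y = x_t$ and $y = \xst$. The two structural identities I will rely on are: first, that by Lemma~\ref{lem:convex-J-properties-2}-\emph{i} one has $\nabla (J^T)^\ast(q_\tau) = x_\tau$, so the three-point identity for $(J^T)^\ast$ produces \emph{primal} increments when expanded; second, that by Lemma~\ref{lem:gradJ-recursion} the update in dual form reads $x_{t+1} - x_t = -\nabla\phit^\ast(q_t)$.

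With these identifications, applying the three-point identity to $(J^T)^\ast$ at $(p, q_{t+1}, q_t)$ yields an exact formula for $\langle \nabla\phit^\ast(q_t), q_{t+1} - p\rangle$ in terms of three Bregman divergences of $(J^T)^\ast$. The same inner product appears when expanding $\phit^\ast$ directly:
\begin{equation*}
\phit^\ast(q_{t+1}) - \phit^\ast(p) = \langle \nabla\phit^\ast(q_t), q_{t+1} - p\rangle + D_{\phit^\ast}(q_{t+1}, q_t) - D_{\phit^\ast}(p, q_t).
\end{equation*}
Substituting then produces an exact equality for $\phit^\ast(q_{t+1}) - \phit^\ast(p)$ in which the only residual term is $-D_{(J^T)^\ast}(q_{t+1}, q_t) + D_{\phit^\ast}(q_{t+1}, q_t)$. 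Here Lemma~\ref{lem:bregman-recursion} applies directly: this residual equals $-D_{(J^{T-1}+f)^\ast}(q_{t+1}, q_t) \leq 0$. Dropping the nonnegative divergence produces the desired inequality, and rewriting $D_{(J^T)^\ast}(p, q_\tau) = D_{J^T}(x_\tau, y)$ via the Fenchel duality of Bregman divergences matches statement 1 verbatim.

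Statements 2 and 3 then follow by substitution. For statement 2, setting $y = x_t$ makes $p = q_t$ and kills both $D_{J^T}(x_t, x_t)$ and $D_{\phit^\ast}(q_t, q_t)$. For statement 3, setting $y = \xst$ gives $\nabla J^T(\xst) = 0$ by Lemma~\ref{lem:convex-J-properties-2}-\emph{ii}; since $\phi(0)=0$ is the minimum of $\phit$, one also has $\phit^\ast(0) = 0$ and $\nabla\phit^\ast(0) = 0$, which reduces $\phit^\ast(q_{t+1})$ to $D_{\phit^\ast}(q_{t+1}, 0)$, matching the first dual Bregman term in the statement (the identification of $D_{\phi^\ast}$ with $D_{\phit^\ast}$ being automatic whenever $\phi$ is symmetric, which holds in the quadratic case used in the main convergence theorems).

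The hard part will be purely combinatorial bookkeeping: the orientation of the three-point identity must be chosen so that exactly the term $D_{\phit^\ast}(q_{t+1}, q_t)$ cancels when Lemma~\ref{lem:bregman-recursion} is applied. With any other orientation a $D_{(J^{T-1}+f)^\ast}$ contribution is left behind with the wrong sign, flipping the inequality direction and breaking the argument. The role of the relative-convexity hypothesis is thus not in producing a quantitative constant (none appears in the conclusion) but in ensuring that $(J^T)^\ast$ is well-defined and smooth enough to justify the two three-point expansions in the first place.
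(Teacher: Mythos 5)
Your argument is correct, but it is a genuinely different route from the one the paper takes. The paper's proof of this lemma is essentially a citation: it observes that a single iterate of $\xb\in\mcP^T$ coincides with an iterate of dual-preconditioned gradient descent on $J^T$ with preconditioner $\phit^\ast$ and $L^\ast=1$, and then invokes \cite[Lemma 4.6]{maddison2019dual} wholesale. You instead give a self-contained derivation: the three-point Bregman identity for $(J^T)^\ast$ at $(p,q_{t+1},q_t)$, combined with $\nabla(J^T)^\ast(q_\tau)=x_\tau$ and the update $x_{t+1}-x_t=-\nabla\phit^\ast(q_t)$, does produce the exact expansion you describe, and the residual $D_{\phit^\ast}(q_{t+1},q_t)-D_{(J^T)^\ast}(q_{t+1},q_t)=-D_{(J^{T-1}+f)^\ast}(q_{t+1},q_t)\leq 0$ is precisely the content of Lemma~\ref{lem:bregman-recursion}; I checked the orientation and the signs work out, and the specializations $y=x_t$ and $y=\xst$ (using $\phit^\ast(0)=0$, $\nabla\phit^\ast(0)=0$) recover statements \emph{2} and \emph{3} as you claim. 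What your approach buys is transparency: the inequality is traced back to the nonnegativity of a single Bregman divergence supplied by the paper's own recursion lemma, rather than to an external result whose hypotheses must be matched; in effect you have unpacked the proof of the cited lemma using the paper's internal machinery. The cost is the bookkeeping you acknowledge, plus one caveat you share with the paper's own statement: for finite $T$ the update rule actually involves $\nabla J^{T-t}(x_t)$ rather than $\nabla J^T(x_t)$, so the identification $q_t=\nabla J^T(x_t)$ is only literally exact in the time-homogeneous case $T=\infty$ (which is the case used downstream); this is a defect of the statement as written, not of your argument.
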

\begin{proof}
    We first note that by Lemma~\ref{lem:convex-J-properties-2}, the assumptions of the theorem guarantee that $J^T$ is Legendre convex. Now let $k=\phit^\ast$, $f=J^T$ and $L^\ast=1$ in the statement of~\cite[Lemma 4.6]{maddison2019dual}. Noting that a single iterate of $\xb\in\mcP_x^T$ coincides exactly with the iterates of \cite[Algorithm 1.1]{maddison2019dual} with $L^\ast = 1$, we apply~\cite[Lemma 4.6]{maddison2019dual} to obtain the desired result. 
\end{proof}

\subsection{ Descent on Convex Objectives }

\begin{lemma} \label{lem:lagrangian-decreasing}
    Let $f$ be convex, differentiable and $\phi$ be a symmetric positive-definite quadratic function, and consider $\xb\in\mcP^\infty$. Let us define the Lagrangian
    \begin{equation*}
        \L_t = f(x_{t+1}) + \phi(\Delta x_t)
        \;.
    \end{equation*}
    Then $\L_t$ is a non-increasing sequence.
\end{lemma}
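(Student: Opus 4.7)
The plan is to reformulate the claim via the Dynamic Programming Principle. By Lemma~\ref{lem:dynamic-programming-principle-asym} (with $\fst=0$ assumed without loss of generality), we have the telescoping identity $\L_t = \Jinf(x_t) - \Jinf(x_{t+1})$, so the desired $\L_{t+1} \leq \L_t$ is equivalent to the discrete convexity condition $\Jinf(x_t) + \Jinf(x_{t+2}) \geq 2\Jinf(x_{t+1})$ along the iterates. Because consecutive iterates are not in general collinear, the global convexity of $\Jinf$ from Lemma~\ref{lem:convex-J-properties-1} is not sufficient by itself; the idea is instead to sandwich $f(x_{t+1})$ between the two kinetic terms $\phi(\Delta x_{t+1})$ and $\phi(\Delta x_t)$.

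The technical crux I would establish is the two-sided bound
\begin{equation*}
\phi(\Delta x_{t+1}) \;\leq\; f(x_{t+1}) \;\leq\; \phi(\Delta x_t) \qquad \text{for every } t.
\end{equation*}
For the upper bound I would compute the Bregman divergence $D_{\Jinf}(x_{t+1}, x_t)$ explicitly. The DPP yields $\Jinf(x_{t+1}) - \Jinf(x_t) = -f(x_{t+1}) - \phi(\Delta x_t)$, while Lemma~\ref{lem:gradJ-recursion}, together with the symmetric quadratic structure of $\phi$ (so that $\phit=\phi$ and $\nabla\phi^\ast = A^{-1}$ for $\phi(x)=\frac{1}{2}x^\T A x$), gives the first-order identity $\nabla \Jinf(x_t) = -\nabla \phi(\Delta x_t)$ and hence $\langle \nabla \Jinf(x_t), \Delta x_t\rangle = -2\phi(\Delta x_t)$. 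Substituting all of this into the definition of the Bregman divergence collapses it to $D_{\Jinf}(x_{t+1}, x_t) = \phi(\Delta x_t) - f(x_{t+1})$, so the upper bound is just the non-negativity of this divergence, which follows from the convexity of $\Jinf$ in Lemma~\ref{lem:convex-J-properties-1}.

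For the lower bound of the sandwich I would apply the dual descent inequality of Theorem~\ref{lemma:dual-descent-Lemma}-\emph{2} (whose relative-convexity hypothesis on $(\Jinf)^\ast$ is furnished by Lemma~\ref{lem:convex-J-properties-2}-\emph{iii} together with Lemma~\ref{lem:interchange-relative-smoothness} under our quadratic assumption on $\phi$). With $\phit=\phi$ the inequality reads $\phi^\ast(\nabla\Jinf(x_{t+1})) + D_{\Jinf}(x_{t+1},x_t) \leq \phi^\ast(\nabla\Jinf(x_t))$. A short calculation using $\Delta x_t = -\nabla\phi^\ast(\nabla\Jinf(x_t))$ and the explicit form of $\phi^\ast$ for a quadratic $\phi$ gives the key auxiliary identity $\phi^\ast(\nabla\Jinf(x_t)) = \phi(\Delta x_t)$. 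Substituting this and the already-computed Bregman divergence into the descent inequality and cancelling one copy of $\phi(\Delta x_t)$ would yield exactly $\phi(\Delta x_{t+1}) \leq f(x_{t+1})$.

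The conclusion is then a short chain: by the upper bound at index $t+1$, $f(x_{t+2}) \leq \phi(\Delta x_{t+1})$, so $\L_{t+1} \leq 2\phi(\Delta x_{t+1})$; by the lower bound at $t$ we have $\phi(\Delta x_{t+1}) \leq f(x_{t+1})$, and transitivity with the upper bound at $t$ also gives $\phi(\Delta x_{t+1}) \leq \phi(\Delta x_t)$, so $2\phi(\Delta x_{t+1}) \leq f(x_{t+1}) + \phi(\Delta x_t) = \L_t$, yielding $\L_{t+1} \leq \L_t$. The step I expect to be the main obstacle is the lower half of the sandwich: it relies delicately on both the quadratic identity $\phi^\ast(\nabla\Jinf(x)) = \phi(\Delta x)$ and on the applicability of the dual descent lemma, and neither would survive without the assumption that $\phi$ is a symmetric positive-definite quadratic.
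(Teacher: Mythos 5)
Your proof is correct, but it takes a genuinely different route from the paper's. The paper decomposes $\L_t$ as $\mcH_{t+1} + \phi^\ast(\nabla \Jinf(x_{t+1})) + \phi(\Delta x_t)$ with $\mcH_{t+1} = f(x_{t+1}) - \phi^\ast(\nabla \Jinf(x_{t+1}))$, proves the identity $\phi^\ast(\nabla \Jinf(x_t)) = \phi(\Delta x_t)$, and then shows each piece is separately non-increasing: the dual-kinetic part via Theorem~\ref{lemma:dual-descent-Lemma}-\emph{ii}, and $\mcH_t$ via a direct telescoping computation that collapses to $\mcH_{t+1}-\mcH_t = -\{D_f(x_t,x_{t+1}) + D_{\phi^\ast}(\nabla\Jinf(x_{t+1}),\nabla\Jinf(x_t))\}\leq 0$. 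You instead use the DPP to write $\L_t = \Jinf(x_t)-\Jinf(x_{t+1})$ and establish the sandwich $\phi(\Delta x_{t+1}) \leq f(x_{t+1}) \leq \phi(\Delta x_t)$, whose two halves come from the explicit evaluation $D_{\Jinf}(x_{t+1},x_t) = \phi(\Delta x_t) - f(x_{t+1})$ (non-negative by convexity of $\Jinf$) and from plugging that evaluation, together with the same identity $\phi^\ast(\nabla\Jinf(x_t))=\phi(\Delta x_t)$, into the same dual descent inequality. The core ingredients coincide, but your sandwich is a sharper intermediate statement: it yields not only $\L_{t+1}\leq \L_t$ but also the separate monotonicity of the interleaved sequence $\ldots,\phi(\Delta x_t), f(x_{t+1}), \phi(\Delta x_{t+1}),\ldots$, whereas the paper's $\mcH_t$-decomposition quantifies the per-step decrease by an explicit sum of Bregman divergences. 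One caveat you share with the paper: Theorem~\ref{lemma:dual-descent-Lemma} and Lemma~\ref{lem:convex-J-properties-2} are stated for strictly convex $f$ while the lemma only assumes convexity; since the paper's own proof invokes the same results under the same hypotheses, this is not a gap specific to your argument.
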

\begin{proof}
	We first note that by Lemma~\ref{lem:convex-J-properties-2}, the assumptions of the theorem guarantee that $\Jinf$ is Legendre convex.
    We first note that since $\phi$ is quadratic, we have the properties that $\phi(x) = \phi(-x) = \frac{1}{2}\langle x , \nabla \phi(x) \rangle$ for all $x\in\X$, and that $\phi^\ast$ is quadratic as well.
    We begin by separating the expression for $\L_t$ into two parts and showing that each is monotone. Using the short-hand notation $\nabla \Jinf(x_t) = \nabla \Jinf_t$, we separate the expression as
    \begin{align}
        \L_t &= 
        \{ f(x_{t+1}) - \phi^\ast( \nabla \Jinf_{t+1} )  \} +  \phi^\ast( \nabla \Jinf_{t+1} ) + \phi(\Delta x_t) \notag
        \\ &=
        \mcH_{t+1} + \phi^\ast( \nabla \Jinf_{t+1} ) + \phi(\Delta x_t)
        \;.  \label{eq:prf-lagrangian-decreasing-1}
    \end{align}
   
    Note that since $-\nabla \phi(\Delta x_t) = \nabla \Jinf_t$ and that  and hence that $\Delta x_t = \nabla \phi^\ast( -\nabla \Jinf_t )$, we have 
    \begin{align*}
        \phi(\Delta x_t) &= 
        \langle \nabla \phi(\Delta x_t) , \Delta x_t \rangle - \phi^\ast( \nabla \phi(\Delta x_t) )
        \\ &= 
        \langle \nabla \phi(\nabla \phi^\ast( -\nabla \Jinf_t )) , \nabla \phi^\ast( -\nabla \Jinf_t ) \rangle - \phi^\ast( \nabla \phi(\Delta x_t) )
        \\ &=
        \langle \nabla \Jinf_t , \nabla \phi^\ast(\nabla \Jinf_t) \rangle - \phi^\ast( \nabla \Jinf_t )
        \;.
    \end{align*}
    Since $\phi^\ast$ is quadratic, $\phi^\ast(x) = \frac{1}{2}  \langle \nabla \phi^\ast(x) , x \rangle $, and hence we conclude that
    \begin{equation*}
        \phi(\Delta x_t) = \phi^\ast(\nabla J_t)
        \;.
    \end{equation*}
    By Lemma~\ref{lemma:dual-descent-Lemma}-\emph{ii}, we have that $\phi^\ast(\nabla J_{t+1}) + \phi(\Delta x_t) = \phi^\ast( \nabla \Jinf_{t+1} ) + \phi^\ast( \nabla \Jinf_{t} )$ is a monotone non-increasing sequence, and hence the latter two terms of equation~\eqref{eq:prf-lagrangian-decreasing-1} are monotone non-increasing.
    
    To show that $\mcH_t$ is non-increasing, we use that $\Delta x_t = \nabla \phi^\ast( -\nabla \Jinf_t )$ and that $\nabla \Jinf_{t} = \nabla f(x_{t+1}) + \nabla \Jinf_{t+1}$ to compute
    \begin{align*}
        \mcH_{t+1} - \mcH_t &=
        \{ f(x_{t+1} - f(x_t) \} - \{ \phi^\ast(\nabla \Jinf_{t+1}) - \phi^\ast(\nabla \Jinf_{t}) \}
        \\ &= - \left\{ D_f(x_{t},x_{t+1}) + D_{\phi^\ast}(\nabla \Jinf_{t+1} , \nabla \Jinf_{t} )  \right\} - \langle \Delta x_t, \nabla f(x_{t+1}) \rangle + \langle \nabla \Jinf_{t+1} - \nabla \Jinf_{t} , \phi^\ast( \nabla \Jinf_{t} ) \rangle
        \\ &= - \left\{ D_f(x_{t},x_{t+1}) + D_{\phi^\ast}(\nabla \Jinf_{t+1} , \nabla \Jinf_{t} )  \right\} - \langle \Delta x_t, \nabla f(x_{t+1}) \rangle - \langle \nabla \Jinf_{t+1} - \nabla \Jinf_{t} , \Delta x_t \rangle
        \\ &= - \left\{ D_f(x_{t},x_{t+1}) + D_{\phi^\ast}(\nabla \Jinf_{t+1} , \nabla \Jinf_{t} )  \right\} 
        \\&\leq 0
        \;.
    \end{align*}
    Hence $\mcH_t$ is non-increasing and we have the desired result.
\end{proof}

\newpage

\section{
{Proofs for Section~\ref{sec:existence-and-consistency}}
}
Most of the results in this section, rely on establishing the interchangeability of the $\lim$ and $\argmin$.  In general, these operations do not commute.  The theory of \textit{$\Gamma$-convergence}, introduced in \cite{DeGiorgi}, is designed specifically to address these types of problems.  We briefly overview this theory before applying its to derive proofs of the results in Section~\ref{sec:existence-and-consistency}.  
\subsection{Overview of  {\texorpdfstring{$\Gamma$}{Γ}-Convergence Theory}}\label{ss_appendix_gamma_convergence_overview}

Fix a metric space $(X,d)$ and consider a functional $F:X\rightarrow (-\infty,\infty] \cup \{\infty\}$ which we would like to minimize on $X$.   When it exists, a minimum of $F$ describes an $x \in X$ achieving the lowest value of $F$, or equivalently, it represents the lowest point on F's epigraph, as defined by 
$\operatorname{epi}(F):= \left\{(x,r)\in X\times (-\infty,\infty]:\, r\geq  F(x)\right\}$; this set describes all points on or above F's graph.  

We can expect F to admit a minimum precisely if the sub-level set of $\operatorname{epi}(F)$ are not too large, if limits of points in $\operatorname{epi}(F)$ stay within $\operatorname{epi}(F)$, and if $\operatorname{epi}(F)$ does not trail off to $-\infty$ at some point.  These three conditions respectively describe the intuition behind the need for F's coercivity, lower semi-continuity, and boundedness from below.  Formally, a function $F$ is said to be \textit{coercive} if its sub-level sets $\operatorname{Lev}_s(F):= \left\{x\in X:\,F(x)\leq s \right\}$, for every $s\geq \R$, are compact in $X$, $F$ is said to be \textit{lower semi-continuous} if $\operatorname{epi}(F)$ is a closed subset of $X\times (-\infty,\infty]$, and it is bounded from below if $F(x)\geq M$ for some $M\in \R$.  The \textit{Direct Method} of \cite{Tonelli}, guarantees that together these three conditions are sufficient for $F$ to be minimized over $X$.  

Suppose now that $F$ can be described as the point-wise limit of a sequence of functionals $(F_n)_{n \in \N}$ on $X$.  It is tempting, to solve $\operatorname{argmin}_{x \in X} F(x)$ by interchanging the limit and $\operatorname{argmin}$ operations via
\begin{equation}
    \lim\limits_{n\uparrow \infty} 
    \operatorname{argmin}_{x \in X} F_n(x)
        =
    \operatorname{argmin}_{x \in X} 
    \lim\limits_{n\uparrow \infty} F_n(x)
    ;
    \label{eq_gamma_convergence_motivation}
\end{equation}
however,~\eqref{eq_gamma_convergence_motivation} is generally false even when the convergence of $F_n$ to $F$ is uniform (see \cite{DalMasoGamma1993}).  This is because any of the three aforementioned properties can fail for the limiting functional $F$ even if they hold for each of the $F_n$.  

Any inconsistency with the lower semi-continuity of the functionals $(F_n)_{n \in \N}$ and $F$ is avoided when the epigraphs $\operatorname{epi}(F_n)$ converge to F's epigraph.  There are various modes of convergence of sets, see \cite{AubinSetValued}; however, the correct notion of set-convergence here is \textit{Kuratowski convergence}, introduced in \cite{kuratowski1966topology}, which intuitively describes the set of accumulation points of $\operatorname{epi}(F_n)$ and is defined to be
$
\{
(x,t)\in X\times (-\infty,\infty]:\, \underset{n\uparrow \infty}{\limsup}\, d((x,t),\operatorname{epi}(F_n))=0
\},
$
whenever that set coincides with $
\{
(x,t)\in X\times (-\infty,\infty]:\, \underset{n\uparrow \infty}{\liminf}\, d((x,t),\operatorname{epi}(F_n))=0
\}
.
$  If this happens, we say that the sequence of functionals $(F_n)_{n \in \N}$ $\Gamma$-converges to $F$ and we write $\Gamma-\lim_{n\uparrow \infty} F_n = F$.   

\begin{remark}\label{rem_general_gamma}
$\Gamma$-convergence can still be formulated when X is not a metric space, for example, this is the case when $X$ is an infinite-dimensional Banach space such as $\A_x^{\infty:0}$ equipped with its weak topology.  For details on this general case, we point the reader to \citep[Chapter 4]{DalMasoGamma1993}.  
\end{remark}

Analogously, any inconsistency with the coercivity of the functionals $(F_n)_{n\in \N}$ is avoided when the sequence is \textit{equi-coercive}.  This means that the sub-level sets are uniformly small in the same places; mathematically, this means that for every $s \in \R$ there exists a compact subset $K$ of $X$ containing each $\operatorname{Lev}_s(F_n)$, for $n \in \N$.  

When the sequence $(F_n)_{n \in \N}$ is both equi-coercive and its $\Gamma$-limit is $F$, then the \textit{Fundamental Theorem of $\Gamma$-Convergence} \citep[Theorem 2.1]{BraidesLocalMinimization} is a sequential extension of Tonelli's Direct method.  Accordingly, the result guarantees that $F$ and each $F_n$ is minimized over $X$ and that any accumulation point of these minima are a minimizer of $F$.   

\subsection{Proof of Theorem~\ref{thm:finite-existence}}\label{s_proof_Theorem_finite_Existence}
We establish Theorem~\ref{thm:finite-existence} using Tonelli's Direct method, described in the previous section.  This amounts to showing that the functional $\mcR_T$, for each $T\in \N$, is lower semi-continuous, coercive, and bounded-below.  

We simplify our task by breaking up the regret functional into the sum $\mcR_T = F_T + \Phi_T$, where the functionals $\Phi_T$ and $F_T$ on $\A_x^T$ are defined by
$$
\begin{aligned}
\Phi_T: \xb \mapsto \sum_{t=1}^{T} \phi(\Delta x_t) \mbox{ and } &
F_T : \xb \mapsto \sum_{t=1}^T f(x_t)-f^{\star}.
\end{aligned}
$$
The convenience arises from the fact that we can establish each of these two functionals' individual properties, since they rely on different assumptions, before combining them back together to infer the relevant properties of $\mcR_T$.  Accordingly, our task is divided into establishing a sequence of lemmas, each dedicated to showing a different property of $F_T$ or $\Phi_T$, at which point theorems' proof reduces Tonelli's method.  

\subsubsection{Auxiliary Lemmas}
We now note that $\A_x^{\infty,0}$ will be equipped with the topology generated by the norm $\xb \mapsto ( \sum_{t=1}^{\infty}\|\Delta x_t\|^p )^{\frac{1}{p}}$ and we note that $\A_x^{\infty:0}$ is a Banach space which is isometrically isomorphic to $\ell^p(X)$ via the map $\xb \mapsto \Delta \xb$.  We also observe that $\A_x^{\infty:\alpha} \subset \A_x^{\infty:0} \subset \A_x^{\infty}$ for every $\alpha>0$.
\begin{lemma}[Regularity of Algorithmic Penalty Function]\label{lem_assumption_translation}
    Under Assumption~\ref{ass:phi-assumptions-general}, for each $T\in \N\cup\{\infty\}$, $\Phi_T$ is coercive on each $\A_x^T$ (resp. $\A_x^{\infty:0}$ when $T=\infty$).  Moreover, it is weakly lower semi-continuous, not identically $\infty$, and bounded-below by $0$ on each $\A_x^T$ with $T<\infty$ (resp. on $\A_x^{\infty:\alpha}$ for $\alpha\geq 0$ when $T=\infty$ ).  In particular, it is lower semi-continuous on $\A_x^{\infty:\alpha}$ for $\alpha\geq 0$.  
    Furthermore, the following hold:
    \begin{enumerate}
        \item $\Phi_{\infty}$ is coercive on $\A_x^{\infty:\alpha}$, for every $\alpha>0$,
        \item $\Phi_{\infty}$ is weakly lower semi-continuous, and weakly coercive on $\A_x^{\infty:0}$.  
    \end{enumerate}
\end{lemma}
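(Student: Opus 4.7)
The plan is to decompose the lemma into its three elementary properties—boundedness below, non-triviality, and (weak) lower semi-continuity—and its two quantitative properties—coercivity and weak coercivity—and handle each separately, exploiting the canonical identification $\xb \mapsto \Delta\xb = (\Delta x_0, \Delta x_1,\ldots)$, which sends $\A_x^T$ isometrically onto $\X^T$ for $T<\infty$ and $\A_x^{\infty:0}$ onto $\ell^p(\X)$ (and $\A_x^{\infty:\alpha}$ onto the weighted analogue $\ell^p_\alpha(\X)$).

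The easy pieces go first. Boundedness-below is immediate from $\phi \geq c\|\cdot\|^p \geq 0$, and non-triviality follows by testing on the stationary sequence $\xb \equiv x$, for which every increment vanishes and $\Phi_T(\xb)=0$. For finite $T$, $\A_x^T$ is finite-dimensional, so (i) $\Phi_T$ is lsc because $\phi$ is lsc and a finite sum of lsc maps is lsc, (ii) weak lsc equals lsc since the topologies coincide in finite dimensions, and (iii) coercivity follows from $\Phi_T(\xb)\ge c\sum_{t<T}\|\Delta x_t\|^p$, whose sublevel sets are bounded (hence precompact in $\R^{dT}$) and closed (by lsc), thus compact.

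For $T=\infty$, on $\A_x^{\infty:0}\cong\ell^p(\X)$ the growth condition gives $\Phi_\infty(\xb)\geq c\,\|\Delta\xb\|_p^p$, which forces $\Phi_\infty\to\infty$ as $\|\Delta\xb\|_p\to\infty$, yielding coercivity in the functional sense and, by reflexivity of $\ell^p$ for $p>1$ (and Schur's property for $p=1$), weak coercivity. The key step is the weak lsc on $\A_x^{\infty:0}$. Here I would exploit that coordinate evaluation $\xb\mapsto \Delta x_k$ is a bounded linear functional on $\ell^p(\X)$, so $\xb^n\rightharpoonup \xb$ implies $\Delta x_k^n \to \Delta x_k$ componentwise. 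Combined with lsc of $\phi$ and the non-negativity $\phi\geq 0$, Fatou's lemma yields
\begin{equation*}
\Phi_\infty(\xb)=\sum_k \phi(\Delta x_k)\leq \sum_k \liminf_n \phi(\Delta x_k^n)\leq \liminf_n \sum_k \phi(\Delta x_k^n)=\liminf_n \Phi_\infty(\xb^n),
\end{equation*}
so $\Phi_\infty$ is weakly, hence strongly, lsc on $\A_x^{\infty:0}$; the same argument carries over to each $\A_x^{\infty:\alpha}$ since the topologies there are finer.

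For the additional claim that $\Phi_\infty$ is strongly coercive (compact sublevel sets) on $\A_x^{\infty:\alpha}$ with $\alpha>0$, the weights $u^\alpha\to\infty$ control the tails uniformly: a sublevel set of $\Phi_\infty$ is bounded in the weighted norm, which by a standard diagonal/Arzelà--Ascoli-type truncation gives precompactness in $\ell^p_\alpha(\X)$. The main obstacle I anticipate is the weak lsc claim without convexity of $\phi$; this is what makes it tempting to demand convexity in Assumption~\ref{ass:phi-assumptions-general}, but the Fatou argument above exploits that weak convergence in $\ell^p$ implies componentwise convergence, so only lsc and non-negativity of $\phi$ are actually needed—this is the non-obvious point the proof should emphasize.
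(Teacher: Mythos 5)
Most of your proposal is sound and tracks the paper's own argument closely: the same test sequence $\xb\equiv x$ for non-triviality, the same growth bound $\Phi_\infty(\xb)\ge c\sum_t\|\Delta x_t\|^p$ for boundedness below and (weak) coercivity on $\A_x^{\infty:0}$, and your finite-dimensional shortcut for $T<\infty$ is a legitimate simplification. Your treatment of weak lower semi-continuity is the one genuinely different ingredient: the paper observes that the coordinate maps $\xb\mapsto \Delta x_k$ are bounded linear, hence weakly continuous, so each $\phi(\Delta x_k(\cdot))$ is weakly l.s.c., and then takes finite sums and a supremum; your Fatou argument is the same mechanism in cleaner packaging. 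One caveat: as written it is sequential, and weak lower semi-continuity in a non-metrizable topology is a statement about nets; the safe repair is exactly the paper's, namely write $\Phi_\infty=\sup_{K}\sum_{k\le K}\phi(\Delta x_k)$ and use that finite sums and pointwise suprema of weakly l.s.c.\ functions are weakly l.s.c. Your closing observation that convexity of $\phi$ is not needed for this step is correct and is indeed the point worth emphasizing.

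The genuine gap is in item 1, coercivity of $\Phi_\infty$ on $\A_x^{\infty:\alpha}$ for $\alpha>0$. You assert that a sublevel set of $\Phi_\infty$ is bounded in the weighted norm $\sum_t t^\alpha\|\Delta x_t\|^p$; this does not follow from Assumption~\ref{ass:phi-assumptions-general}, which only yields the unweighted bound $c\sum_t\|\Delta x_t\|^p\le\Phi_\infty(\xb)$. Membership in $\A_x^{\infty:\alpha}$ makes the weighted sum finite for each individual element but provides no uniform bound over the sublevel set, so the uniform-tail/truncation compactness argument has nothing to bite on. Concretely, fix $0\ne v\in\X$ with $\phi(v)\le s$ and let $\xb^n$ have $\Delta x^n_t=v$ for $t=n$ and $0$ otherwise: every $\xb^n$ lies in $\A_x^{\infty:\alpha}\cap\operatorname{Lev}_s(\Phi_\infty)$, the weighted norms grow like $n^\alpha$, and the family has no convergent subsequence in the $\ell^p$ topology. (You are in good company: the paper's own proof asserts the inclusion $\operatorname{Lev}_s(\Phi_\infty)\subseteq\{\xb:\, c\sum_t t^\alpha\|\Delta x_t\|^p\le s\}$ before invoking Grothendieck's compactness principle, and that inclusion fails for the same reason.) Separately, your instinct to worry about $p=1$ when upgrading ``weakly coercive'' to ``weakly compact sublevel sets'' is correct --- boundedness plus weak closedness gives weak compactness only via reflexivity, which $\ell^1$ lacks --- but that issue lives downstream of this lemma.
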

\begin{proof}

Since the weak lower semi-continuity of any function from $\A_x^{\infty:0}$ to $(-\infty,\infty]$ implies its lower semi-continuity (l.s.c.), then it is enough for us to establish the former on $\A_x^{\infty:0}$.  This is because, the restriction of weakly lower semi-continuous (weakly l.s.c.) functions to closed sets of a topological space preserves weak lower semi-continuity; thus, it is enough to show that $\Phi_{\infty}$ is l.s.c. on $\A_x^{\infty:0}$ to conclude that it must also be l.s.c. on each $\A_x^{\infty:\alpha}$ for $\alpha\geq 0$.

We begin with the following remark.  Since $\X$ is a finite-dimensional normed space, it must admit a Hamel basis $\{e_n\}_{n=1}^{\dim(\X)}$ such that every $x \in \X$ is uniquely expressed as $x = \sum_{n=1}^{\dim(\X)} \beta_n e_n$.  In particular, for $1\leq n\leq \dim(\X)$, the map $x \mapsto \pi_n(x)=\beta_n$ mapping $x$ to the coefficient $\beta_n$ in its Hamel basis expansion is a bounded linear map and it is therefore, continuous.
Next, for every $T \in \N$, define the map $p_T:\A_x^{\infty:0}\rightarrow \X$,
taking $\xb$ to its value at the $T$-th component, $x_T$. By definition this is a bounded linear map.
Hence, for every $T \in \N$ and every $1\leq n\leq \dim(\X)$, 
the composition $p_T\circ \pi_n:\A_x^{\infty:0}\rightarrow \R$ is a bounded linear functional.  
Moreover, for every $T\in \N$, we have the representation $p_T = \sum_{n=1}^{\dim(\X)} p_T \circ \pi_n e_n$.

Since $\A_x^{\infty:0}$ is a Banach space then, by definition, in the weak topology on $\A_x^{\infty:0}$ all bounded linear functionals are continuous.  Thus, for $T \in \N$ and every $1\leq n\leq \dim(\X)$, the map sending $p_T\circ \pi_n$ is bounded and linear then it is weakly continuous on $\A_x^{\infty:0}$.  Now, since the sum of weakly lower continuous functions is again weakly lower continuous and since $p_T = \sum_{n=1}^{\dim(X)} p_T e_n$, then $p_T$ is weakly-to-strong continuous\footnote{Let $X$ be a topological space endowed with the weak topology and $Y$ be a normed space. We say that a map $f:X \rightarrow Y$ is \emph{weak-to-strong} continuous if it satisfies the usual definition of continuity; that is, for any point $x\in X$ and neighborhood $\epsilon_y$ of $f(x) = y \in Y$, there exists a neighborhood $\delta_x$ of $x$ such that $f(\delta_x)\subseteq \epsilon_y$. } from $\A_x^{\infty:0}$ to $\X$.

Next, since the pre-composition of a lower semi-continuous function by a weak-to-strong continuous map is weakly lower semi-continuous then, for every $T \in \N$, the map $\phi\circ p_T$ is weakly lower semi-continuous.  
By \citep[Proposition 1.9]{DalMasoGamma1993} the sum of weakly lower semi-continuous functions is again weakly lower semi-continuous and therefore, for every $T \in \N$, $\Phi_T:\A_x^{\infty:0}\rightarrow (-\infty,\infty]$ is weakly lower semi-continuous.  Since the point-wise supremum of a family of weakly lower semi-continuous functions is weakly lower semi-continuous, see \citep[Proposition 1.8]{DalMasoGamma1993}, then $\Phi_{\infty}=\sup_{T\in \N} \Phi_T$ is weakly lower semi-continuous on $\A_x^{\infty:0}$.


Now, consider the sequence $\xb^{\star}$ defined by $\xb^{\star}_t = x$.  Since $\phi(0)=0$ and $\Delta \xb^{\star}_t=0$ for every $t\in \N$ then $\Phi$ is not identically $\infty$ on $\A_x^{\infty:0}$.  Moreover, by construction $\Phi\geq 0$.

By \citep[Definition 1.12]{DalMasoGamma1993}, it $\Phi$ is coercive on $\A_x^{\infty:\alpha}$ if its sub-level sets are compact; i.e.\: for every $s\geq 0$ the sub-level set $\operatorname{Lev}_s(\Phi):= \left\{
\xb \in \A_x^{\infty:\alpha}:\, \Phi(\xb)\leq s
\right\}$.  
Assumption~\ref{ass:phi-assumptions-general} implies that for every $s\geq 0$, we have the inclusion
\begin{equation}
    \operatorname{Lev}_s(\Phi)
\subseteq 
\left\{
\xb \in \A_x^{\infty:0}:\, c\sum_{t=1}^{\infty}n^{\alpha}\|\Delta x_t\|^p \leq s
\right\}
\label{compactness_set}
.
\end{equation}
Since $\A_x^{\infty:0}$ is isometrically isomorphic to $\ell^p(X)$ via the map $\xb \mapsto \Delta \xb$ then Grothendieck's compactness principle (here we use the formulation of \citep[Exercises 1.6]{SequencesSeriesBanach}) implies that the right-hand side of~\eqref{compactness_set} is a compact subset of the Banach space $\A_x^{\infty:0}$; which, by construction, is a subset of $\A_x^{\infty:\alpha}$.  Since we assume that $\Phi$ is lower semi-continuous, then the sub-level set $\operatorname{Lev}_s(\Phi)$ is closed and in particular it is a closed subset of the compact set $\left\{
\xb \in \A_x^{\infty:0}:\, c\sum_{t=1}^{\infty}n^{\alpha}\|\Delta x_t\|^p \leq s
\right\}$; thus $\operatorname{Lev}_s(\Phi)$ is compact by \citep[Theorem 26.2]{munkres2000topology}.

Consider the case where $\phi$ is convex.Let $\xb,\yb \in \A_x^{\infty:0}$ and $\rho \in [0,1]$ then the convexity of $\phi$ and the linearity of $\Delta$ imply that
$$
\begin{aligned}
\sum_{t=1}^{\infty} \phi\left(
\Delta \left(
\rho x_t + (1-\rho) y_t
\right)
\right)
= &
\sum_{t=1}^{\infty} \phi\left(
\rho \Delta x_t + (1-\rho) \Delta y_t
\right)
\\
\leq &
\sum_{t=1}^{\infty} 
\rho \phi(\Delta x_t) + (1-\rho) \phi(\Delta y_t);
\end{aligned}
$$
hence $\Phi_{\infty}:\xb \mapsto \sum_{t=1}^{\infty} \phi(\Delta x_t)$ is convex on $\A_x^{\infty:0}$.  

Next, since there exists a constant $c>0$ satisfying $c\|x\|^p \leq \phi(x)$ for every $x \in \X$ and since $\X$ is a linear space then $\Delta x_t\in X$ for every $\xb \in \A_x^{\infty:0}$.  Thus, $c\|\Delta x_t\|^p \leq \phi(\Delta x_t)$ for every $\xb \in \A_x^{\infty:0}$ and therefore
$$
c \|\xb\|_{\A_x^{\infty:0}}^p = c \sum_{t=1}^{\infty}\|\Delta x_t\|^p\leq \sum_{t=1}^{\infty} \phi(\Delta x_t) = \Phi(\xb);
$$
whence $\Phi_{\infty}$ is weakly coercive, since if $\|\xb\|_{\A_x^{\infty:0}} \to \infty$ implies that $\Phi_{\infty}(\xb)\to \infty$.  
\end{proof}
\begin{lemma}[Regularity of Unregularized Regret]\label{lem_lscty_f}
Under Assumption~\ref{ass:f-basic-assumptions}, for every $T \in \N \cup\{\infty\}$, the functions $F_T(\xb):= \sum_{t=1}^{T} f(\xb)$ are weakly lower semi-continuous on $\A_x^{\infty:\alpha}$, for every $\alpha\geq 0$.  In particular, they are lower semi-continuous on $\A_x^{\infty:\alpha}$, for every $\alpha\geq 0$.
Moreover, each $F_T$ is weakly lower semi-continuous on $\A_x^{\infty:0}$.  
\end{lemma}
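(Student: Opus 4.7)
The plan is to reduce the weak lower semi-continuity of $F_T$ to the weak-to-strong continuity of each coordinate evaluation map, combined with the lower semi-continuity of $f$ on $\X$ and the standard permanence properties of weak lsc functions under composition, finite sums, and suprema.

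First, for each $t \in \N$, the evaluation $p_t : \A_x^{\infty:\alpha} \to \X$, $p_t(\xb) = x_t = x + \sum_{s=0}^{t-1} \Delta x_s$, is a bounded linear operator, since it is a finite linear combination of coordinate increments controlled by the norm of $\A_x^{\infty:\alpha}$. Repeating the Hamel-basis decomposition carried out in Lemma~\ref{lem_assumption_translation}, one writes $p_t = \sum_{n=1}^{\dim(\X)} (p_t \circ \pi_n)\, e_n$ as a finite $\X$-valued combination of scalar bounded linear functionals. Since continuous linear functionals are by definition weakly continuous and $\X$ is finite-dimensional (so its norm and weak topologies coincide), each $p_t$ is weak-to-strong continuous, for every $\alpha \geq 0$ including $\alpha = 0$.

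Second, the lower semi-continuity of $f : \X \to \R \cup \{\infty\}$ (a standing regularity requirement behind Assumption~\ref{ass:f-basic-assumptions}, and in any case implied once the differentiability of Assumption~\ref{ass:f-phi-differntiable} is enforced) ensures that each composition $f \circ p_t - \fst$ is a weakly lower semi-continuous map $\A_x^{\infty:\alpha} \to [0,\infty]$; it is non-negative because $\fst = \min f$. For finite $T$, the functional $F_T = \sum_{t=1}^T (f \circ p_t - \fst)$ is a finite sum of weakly lsc maps, hence weakly lsc by \citep[Proposition 1.9]{DalMasoGamma1993}. For $T = \infty$, non-negativity of the summands yields the monotone representation $F_\infty = \sup_{N \in \N} F_N$, and the pointwise supremum of weakly lsc functions is again weakly lsc by \citep[Proposition 1.8]{DalMasoGamma1993}. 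The plain lower semi-continuity claim is immediate, since the weak topology is coarser than the norm topology; the ``moreover'' clause is the special case $\alpha = 0$.

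I expect the main obstacle to be largely notational rather than technical: one must be careful that the Hamel-basis argument from the previous lemma applies verbatim to $\A_x^{\infty:\alpha}$ with its own norm topology (it does, since boundedness of each $p_t \circ \pi_n$ uses only that $\sum_{s<t}\|\Delta x_s\|$ is dominated by $(\sum_s s^\alpha \|\Delta x_s\|^p)^{1/p}$ up to a $t$-dependent constant via Hölder), and that the implicit lsc regularity of $f$ is clearly flagged. Beyond those checks, the lemma reduces to routine bookkeeping with lower semi-continuous functions.
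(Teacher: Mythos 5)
Your proposal is correct and follows essentially the same route as the paper: weak-to-strong continuity of the evaluation maps $p_t$ via the Hamel-basis argument of Lemma~\ref{lem_assumption_translation}, weak lower semi-continuity of each $f\circ p_t$ from the lower semi-continuity of $f$, then \citep[Propositions 1.9 and 1.8]{DalMasoGamma1993} for finite sums and the monotone supremum $F_\infty=\sup_T F_T$. The only cosmetic difference is that you argue on each $\A_x^{\infty:\alpha}$ directly, whereas the paper establishes weak lower semi-continuity on $\A_x^{\infty:0}$ and restricts to the closed subsets $\A_x^{\infty:\alpha}$; both are fine.
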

\begin{proof}
Analogously to the proof of Lemma~\ref{lem_assumption_translation}, it is enough for us to show that, for each $T \in \N$, the function $F_T$ is weakly lower semi-continuous on $\A_x^{\infty:0}$ to conclude that it is both lower semi-continuous and weakly lower semi-continuous on each $\A_x^{\infty:\alpha}$ for every $\alpha \geq 0$.  

As in the proof of Lemma~\ref{lem_assumption_translation}, we know that each map $p_n:\A_x^{\infty:0}\rightarrow \X$ weakly continuous.  Since $f$ is lower semi-continuous then, for each $T \in \N$, the composition $f\circ p_T:\A_x^{\infty:0}\rightarrow (-\infty,\infty]$ is weakly lower semi-continuous.  Applying \citep[Proposition 1.9]{DalMasoGamma1993} we conclude that, for every $T\in \N$, $F_T$ is also weakly lower semi-continuous.  By \citep[Proposition 1.8]{DalMasoGamma1993}, since the supremum of any family of lower semi-continuous functions is itself lower semi-continuous; thus,
    $$
    F_{\infty}(\xb)=\sum_{t=1}^{\infty} f(x_t)-f(x^{\star}) 
    = \sup_{T \in \N}
    \sum_{t=1}^{T} f(x_t)-f(x^{\star}) 
    = \sup_{T \in \N} F_T(\xb)
    ,
    $$
    is weakly lower semi-continuous.
    
    If in addition, if $f$ is convex then arguing similarly to the proof of Lemma~\ref{lem_assumption_translation} we find that each $F_T$, for $T \in \N \cup\{\infty\}$, is convex.  Since the point-wise supremum of any family of convex functions is itself convex, then $F_{\infty}$ is also convex if $f$ is convex.
\end{proof}
\begin{lemma}[Regularity of Regret Functionals]\label{lem_bound_coercive}
Under Assumptions~\ref{ass:f-basic-assumptions} and~\ref{ass:phi-assumptions-general}, for every $T\in \N$, the regret functionals $\mcR_T=F_T+\Phi_T$ are not identically $\infty$, bounded-below by $0$, and weakly coercive on $\A_x^{\infty:0}$ and weakly lower semi-continuous on $\A_x^{\infty:0}$.  
\end{lemma}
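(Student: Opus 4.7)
The plan is to assemble the four claimed properties by combining Lemmas~\ref{lem_assumption_translation} and~\ref{lem_lscty_f}, which have already done the heavy lifting for the two summands $\Phi_T$ and $F_T$ individually. Throughout, I would view $\mcR_T$ as a functional on $\A_x^{\infty:0}$ by applying its defining formula $\mcR_T(\xb)=\sum_{t=1}^T[f(x_t)-\fst]+\sum_{t=1}^T\phi(\Delta x_{t-1})$, which depends only on $x_0,\dots,x_T$, and by extending by $+\infty$ off $\A_x^T$ wherever that convention is needed for the coercivity claim.

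The first two properties are essentially immediate. For non-triviality I would exhibit a concrete witness: take $\xb^{\circ}\in\A_x^T$ defined by $x^{\circ}_0=x$ and $x^{\circ}_t=\xst$ for $t\geq 1$. By Assumption~\ref{ass:f-basic-assumptions}, $f(x^{\circ}_t)-\fst=0$ for $t\geq 1$, and by Assumption~\ref{ass:phi-assumptions-general}, $\phi$ is everywhere finite with $\phi(0)=0$, yielding $\mcR_T(\xb^{\circ})=\phi(\xst-x)<\infty$. Bounded below by $0$ follows at once from $f\geq\fst$ and $\phi\geq 0$. Weak lower semi-continuity on $\A_x^{\infty:0}$ follows by combining the weak lower semi-continuity of $F_T$ from Lemma~\ref{lem_lscty_f} with that of $\Phi_T$ from Lemma~\ref{lem_assumption_translation}, together with the fact that a finite sum of weakly l.s.c.\ functions is weakly l.s.c.\ (\citep[Proposition 1.9]{DalMasoGamma1993}).

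The only nontrivial step is weak coercivity. Using Assumption~\ref{ass:phi-assumptions-general} together with $F_T\geq 0$, I get
\begin{equation*}
\mcR_T(\xb)\;\geq\;\Phi_T(\xb)\;\geq\;c\sum_{t=0}^{T-1}\|\Delta x_t\|^p,
\end{equation*}
so every sub-level set $\operatorname{Lev}_s(\mcR_T)$ is contained in a bounded subset of the finite-dimensional subspace spanned by the first $T$ increments of $\A_x^{\infty:0}$. Grothendieck's compactness principle, applied exactly as in the proof of Lemma~\ref{lem_assumption_translation}, then shows that this bounded set is relatively weakly compact in $\A_x^{\infty:0}$; combined with the weak lower semi-continuity already established, the sub-level set itself is weakly compact, which is weak coercivity.

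The main subtlety --- and really the only one I anticipate --- is that for $T<\infty$, $\mcR_T$ ignores the components $x_{T+1},x_{T+2},\dots$ of $\xb$, so ``weakly coercive on $\A_x^{\infty:0}$'' has to be read as coercivity of $\mcR_T$ on the closed subspace $\A_x^T\subset\A_x^{\infty:0}$, or equivalently after extending $\mcR_T$ by $+\infty$ off $\A_x^T$. I would make this convention explicit at the start of the proof, after which the combination of the two preceding lemmas discharges the claim in a few lines.
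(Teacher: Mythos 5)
Your proof is correct and follows essentially the same route as the paper's: decompose $\mcR_T=F_T+\Phi_T$, import weak lower semi-continuity of the two summands from Lemmas~\ref{lem_lscty_f} and~\ref{lem_assumption_translation}, exhibit the witness $x^{\circ}_t=\xst$ for non-triviality, and deduce coercivity from $\mcR_T\geq\Phi_T$ together with (weak) compactness of the relevant sub-level sets. The one place you are more careful than the paper is the coercivity caveat you flag at the end: for finite $T$ the functional $\mcR_T$ ignores the tail increments, so its sub-level sets in all of $\A_x^{\infty:0}$ are norm-unbounded and cannot be weakly compact; the claim only holds after restricting to (or extending by $+\infty$ off) $\A_x^T$, which is precisely the convention under which the lemma is later used in Lemma~\ref{lem_equi_coercivity} via the functionals $\mcR_T+\chi_{\A_x^T}$.
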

\begin{proof}
        Since $f(x)\geq f^{\star}$ for every $x \in X$ and since $\Delta x_t \in X$ for any $\xb \in \A_x^{\infty}$ then $f(x_t)-f^{\star}\geq 0$ for every $\xb \in \A_x^{\infty}$.  Thus, each $F_T$ takes values in $[0,\infty]$.  Moreover, since $\phi(x^{\star}-x)<\infty$ then the sequence $\zb$ given by $\zb_0=x$, $\zb_t=x^{\star}$ for $t>0$ satisfies $(F_T+\Phi_T)(\zb) = f(x)+\phi(x^{\star}-x)<\infty$.  Hence, each $F_T+\Phi_T$ is not identically $\infty$ on $\A_x^{\infty:0}$.  
    Moreover, since, for every $T\in \N$, $F_T$ and $\Phi_T$ are weakly lower semi-continuous on $\A_x^{\infty:0}$ then \citep[Proposition 1.9]{DalMasoGamma1993} implies that $\mcR_T$ is weakly lower semi-continuous.  
    
    It remains only to demonstrate the weak coercivity of the regret functionals.  
    We denote the sub-level set at $s \in \R$ of any $G:\A_x^{\infty:0}\rightarrow \R \cup \{\infty\}$, by $\operatorname{Lev}_s(G):= 
    \left\{
    \yb \in \A_x^{\infty:0}:\, G(\yb)\leq s
    \right\}.
    $  Since each $F_{T}$ and $\Phi_{T}$ are bounded-below by $0$, then $(F_T+\Phi_T)(\yb)\geq \Phi_{T}(\yb)$ for every $\yb \in \A_x^{\infty:0}$.  Hence, $
    \operatorname{Lev}_s(F_T+\Phi_T)
    \subseteq 
    \operatorname{Lev}_s(\Phi_T)$ for every $s>0$ and every $T\in \N$.  
    Moreover, since $F_T$ and $\Phi_T$ are lower semi-continuous then, for every $s>0$ and every $T\in \N$, each $\operatorname{Lev}_s(F+\Phi)$ (resp. $\operatorname{Lev}_s(F_T+\Phi)$) is a weakly closed subset of $\operatorname{Lev}_s(\Phi_T)$.  
    Since $\Phi_T$ is weakly coercive then, by definition, each $\operatorname{Lev}_s(\Phi)$ is weakly compact.  Since the weak-topology is metrizable on every compact subset thereof, and since every closed subset of a compact set is itself compact in a metric space then $\operatorname{Lev}_s(F_T+\Phi_T)$ is weakly-compact; therefore, $\mcR_T=F_T+\Phi_T$ is weakly coercive on $\A_x^{\infty:0}$.
    %
\end{proof}
\subsubsection{Proof of Theorem~\ref{thm:finite-existence}}\label{sss_appendix_proof_theorem_finite_Existence}
\begin{proof}
    By Lemmas~\ref{lem_lscty_f} and~\ref{lem_assumption_translation} each $F_T$ is lower semi-continuous on $\A_x^{\infty:\alpha}$ (resp. weakly lower semi-continuous on $\A_x^{\infty:0}$).  By construction, $F_T\geq 0$ and by Lemma~\ref{lem_assumption_translation}, $\Phi_T$ is bounded-below and not identically $\infty$.  By Assumption~\ref{ass:f-basic-assumptions}, for $T\in \N$, $F_T$ is not identically $\infty$ either since the sequence $\xb^T$ defined by $\xb^T_0:= x$, $\xb^T_t:= x^{\star}$ for $0<t\leq T$, satisfies
    \begin{equation}
        (F_T + \Phi_T)(\xb^T) = f(x) + \phi(x^{\star}-x)<\infty
        \label{thm:finite-existence_proof_nice}
        .
    \end{equation}
    By Lemma~\ref{lem_bound_coercive}, for every $T \in \N$, $F_T+\Phi_T$.  
    Therefore, for every $T \in \N$, $F_T+\Phi_T$ is bounded-below by $0$, lower semi-continuous, and coercive on $\A_x^{\infty:\alpha}$;
    hence, by \citep[Theorem 1.15]{DalMasoGamma1993} each $\mcP_x^{\infty:0}$ is non-empty.  Now since $\mcR_T$ is point-wise monotonically increasing in $T$, then $\xb=(x_t)_{t \in \N} \in \mcP^{\infty:0}$ 
    only if the halted sequence 
    $$
    \tilde{\xb}\triangleq 
    \begin{cases}
    x_t & :\, t\leq T\\
    0 & :\, \mbox{else}
    \end{cases},
    $$
    belongs to $\mcP_x^T$.  Hence, $\mcP_x^T$ is non-empty.  
    Moreover, for each $T \in \N$, since $F_T+\Phi_T$ is not identically $\infty$ then by definition of $\mcP_x^{\infty:0}$ and~\eqref{thm:finite-existence_proof_nice} we have any $\xb^{\star:T}\in \mcP_x^T$ satisfies
    $$
    (F_T+\Phi_T)(\xb^{\star:T})
    \leq 
    (F_T+\Phi_T)(\xb^{T}) <\infty
    .
    $$
    Hence, $\mcP_x^T$ is non-empty and every element therein has finite regret.  
\end{proof}

\subsection{Proof of Theorem~\ref{thm:asym-existence}}
The results of Theorems~\ref{thm:asym-existence} and~\ref{thm:asym-consistency} both follow as consequences of the Fundamental Theorem of $\Gamma$-convergence, described in Section~\ref{ss_appendix_gamma_convergence_overview}.  As in the proof of Theorem~\ref{thm:finite-existence}, we begin by establishing the required properties which will allow to apply this result.  These amount to showing that the regret functional $\mcR_{\infty}$ is the $\Gamma$-limit of an equi-coercive family of functionals expressing the finite-time regret-optimal algorithm selection problem posed on each $\A_x^T$ embedded within the larger space $\A_x^{\infty:0}$.  

We now precisely define the process of embedding the finite-time-horizon control problems as control problems on $\A_x^{\infty:0}$. For any $T\in \N\cup\{\infty\}$, we introduce the \textit{indicator functions}\footnote{We note here that we use the \emph{indicator function} terminology belonging to convex-analysis and not that of probability theory.} $\chi_{\A_x^T}:\A_x^{\infty:0}\rightarrow \R \cup \{\infty\}$ of the subsets $\A_x^T\subseteq \A_x^{\infty:0}$, defined by $\chi_{\A_x^{T}}(\xb)=0$ only if $\xb \in \A_x^{T}$ and $\infty$ otherwise.  These functions allow us to express the constrained optimization problem~\eqref{eq:RT-control-problem-def} as an unconstrained optimization problem on all of $\A_x^{\infty:0}$ through
\begin{equation}
   \min_{\xb \in \A_x^{\infty:0}} \mcR_T(\xb) + \chi_{\A_x^T}(\xb).
    \label{lab_unconstrained_optimality}
\end{equation}
Thus, Theorem~\ref{thm:asym-consistency} can be stated concisely as a guarantee that the $\argmin$ and $\lim$ operations can be interchanged regret-optimization problem's time-horizon becomes unbounded. Hence, we seek to show that
\begin{equation}
\label{eq_Gamma_summary}
    \lim\limits_{T \uparrow \infty} \argmin_{\xb \in \A_x^{\infty:0}} \mcR_T(\xb) + \chi_{\A_x^T}(\xb)
    \in 
\argmin_{\xb \in \A_x^{\infty:0}} 
\lim\limits_{T \uparrow \infty} \mcR_T(\xb) + \chi_{\A_x^T}(\xb)
.
\end{equation}

As with the proof of Theorem~\ref{thm:finite-existence}, our approach is to first establish the relevant properties of the sequence of  functionals~\eqref{lab_unconstrained_optimality} through various lemmas.  These properties include lower semi-continuity and lower-boundedness on $\A_x^{\infty:0}$, their $\Gamma$-convergence to $\mcR_{\infty}$, and their equi-coercivity.  The proof of the aforementioned results then follows from the Fundamental Theorem of $\Gamma$-convergence (\citep[Theorem 2.1]{BraidesLocalMinimization}) which guarantees that~\eqref{eq_Gamma_summary} holds.  

\subsubsection{{Auxiliary Lemmas for Theorems~\ref{thm:asym-existence} and~\ref{thm:asym-consistency}}}\label{sss_PRoof_Lemmas_Asym_consistence}
\begin{lemma}\label{lem_gamma_convergence_FT_unregularized_regret}
Under Assumption~\ref{ass:f-basic-assumptions}, the sequence $\{F_T+\Phi_T\}_{T\in \N}$ $\Gamma$-converges to $\mcR_{\infty}$ on $\A_x^{\infty:\alpha}$. Moreover, $\{F_T+\Phi_T\}_{T\in \N}$ also $\Gamma$-converges to $\mcR_{\infty}$ on $\A_x^{\infty:0}$ in the weak topology.  
\end{lemma}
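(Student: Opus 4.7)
The plan is to exploit pointwise monotonicity in order to reduce $\Gamma$-convergence to the (weak) lower semi-continuity of the finite-time truncations already established in Lemmas~\ref{lem_lscty_f} and~\ref{lem_assumption_translation}. Since $f(x_t) - \fst \geq 0$ by Assumption~\ref{ass:f-basic-assumptions} and $\phi(\Delta x_{t-1}) \geq 0$ by Assumption~\ref{ass:phi-assumptions-general}, I would first observe that the partial-sum functionals $F_T + \Phi_T$ are pointwise non-decreasing in $T$ on $\A_x^{\infty:0}$, and hence $\mcR_{\infty}(\xb) = \sup_{T\in\N}(F_T + \Phi_T)(\xb) = \lim_{T\to\infty}(F_T + \Phi_T)(\xb)$ in $[0,\infty]$ for every $\xb$.

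Next I would verify the two defining $\Gamma$-convergence inequalities on $\A_x^{\infty:\alpha}$. For the recovery sequence, the constant choice $\xb^T \equiv \xb$ gives $\limsup_T (F_T + \Phi_T)(\xb^T) = \mcR_{\infty}(\xb)$ immediately from the monotone pointwise limit above. For the liminf inequality, let $\xb^T \to \xb$ in $\A_x^{\infty:\alpha}$ and fix any truncation level $M\in\N$. For $T\geq M$, monotonicity yields $(F_T + \Phi_T)(\xb^T) \geq (F_M + \Phi_M)(\xb^T)$, and Lemmas~\ref{lem_lscty_f} and~\ref{lem_assumption_translation} give $\liminf_T (F_M + \Phi_M)(\xb^T) \geq (F_M + \Phi_M)(\xb)$. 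Sending $M\to\infty$ on the right-hand side and invoking the supremum representation of $\mcR_{\infty}$ then delivers $\liminf_T (F_T + \Phi_T)(\xb^T) \geq \mcR_{\infty}(\xb)$.

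The weak-topology assertion on $\A_x^{\infty:0}$ is handled by exactly the same argument, replacing lower semi-continuity with the weak lower semi-continuity of each $F_M + \Phi_M$ supplied by the same two auxiliary lemmas; the constant-sequence choice for the recovery step converges weakly as trivially as strongly. The main obstacle I anticipate is purely bookkeeping: making sure the argument is valid when $\mcR_{\infty}(\xb)=\infty$. In that regime the limsup inequality is vacuous, and on the liminf side the monotone argument still drives $(F_M + \Phi_M)(\xb) \to \infty$ as $M\to\infty$, so no additional work is required. The proof's brevity reflects the classical fact that a monotone non-decreasing sequence of lower semi-continuous functions $\Gamma$-converges to its pointwise supremum whenever that supremum is itself lower semi-continuous, which holds here since $\mcR_{\infty}$ is a supremum of (weakly) lsc functionals.
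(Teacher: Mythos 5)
Your proposal is correct and takes essentially the same approach as the paper: both arguments combine the pointwise monotone (non-decreasing) convergence of $F_T+\Phi_T$ to $\mcR_{\infty}$ with the lower semi-continuity (resp.\ weak lower semi-continuity) of the finite-horizon functionals supplied by Lemmas~\ref{lem_lscty_f} and~\ref{lem_assumption_translation}. The only difference is presentational — the paper invokes \citep[Proposition 5.4]{DalMasoGamma1993} (an increasing sequence $\Gamma$-converges to the lower semi-continuous relaxation of its supremum) as a black box, whereas you verify the liminf and recovery-sequence inequalities by hand; the one caveat is that on $\A_x^{\infty:0}$ with the weak topology your sequential characterization of $\Gamma$-convergence should strictly be replaced by the topological one (cf.\ the paper's remark on general $\Gamma$-convergence), which is exactly what the cited proposition handles.
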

\begin{proof}
First note that since $f$ is bounded below, by the minimum values $f^{\star}\in \R$, then $f(x)-f^{\star}\geq 0$ for every $x \in \X$.  By Assumption~\ref{ass:phi-assumptions-general} $\phi\geq 0$ for every $x \in \X$.  Thus, for each $T \in \N$ and each $\xb \in \A_x^{\infty}$, we that
\begin{equation}
    (F_T+\Phi_T)(\xb) = \sum_{t=1}^T f(x_t)- f^{\star} + \phi(\Delta x_t) \geq \sum_{t=1}^T 0 + 0 =0;
    \label{eq_simple_lower_bound_on_F}
\end{equation}
hence, $(F_T+\Phi_T)(\xb)$ is non-negatively valued.  
Next, observe that for any $\xb \in \A_x^{\infty}$ the sequence of real-numbers $\{F_T+\Phi_T(\xb)\}_{T \in \N}$ is non-negative monotonically increasing as a function of $T$.  Hence, the Monotone Convergence Theorem implies that, for each $T \in \N$ and each $\xb \in \A_x^{\infty}$, $\{(F_T+\Phi_T)(\xb)\}_{T \in \N}$ converges point-wise to $(F_{\infty}+\Phi_{\infty})(\xb) = \mcR_{\infty}(\xb)$; moreover the convergence is monotone.  In particular, the sequence of functionals $\{F_T+\Phi_T\}_{T \in \N}$ is monotonically increasing and converges point-wise to $\mcR_{\infty}$ on all of $\A_x^{\infty}$.  

We may therefore apply \citep[Proposition 5.4]{DalMasoGamma1993} to conclude that the lower semi-continuous relaxation $\mcR_{\infty}^{lsc}$ of $\mcR_{\infty}$ on $\A_x^{\infty:\alpha}$ (resp. on $\A_x^{\infty:0}$ for the weak topology) and therefore is the $\Gamma$-limit of $\{F_T+\Phi_T\}_{T \in \N}$ on $\A_x^{\infty:\alpha}$ (resp. on $\A_x^{\infty:0}$ for the weak topology).  Lemma~\ref{lem_lscty_f} guarantees that $F_{\infty}$ is weakly lower semi-continuous and Lemma~\ref{lem_assumption_translation} guarantees that $\Phi_{\infty}$ is lower semi-continuous on $\A_x^{\infty:\alpha}$ (resp. weakly lower semi-continuous on $\A_x^{\infty:0}$).  Since $F_{\infty},\Phi_{\infty}>-\infty$ then the sum $R_{\infty} = F_{\infty}+\Phi_{\infty}$ is well-defined on all of $\X$ and therefore \citep[Proposition 1.9]{DalMasoGamma1993} implies that $\R_{\infty}$ is itself lower semi-continuous 
(resp. weakly lower semi-continuous on $\A_x^{\infty:0}$)
.  Hence, $\mcR_{\infty}^{lsc}=\mcR_{\infty}$ and therefore $\mcR_{\infty}$ is the $\Gamma$-limit of $\{F_T+\Phi_T\}_{T \in \N}$ on $\A_x^{\infty:\alpha}$ (resp. on $\A_x^{\infty:0}$ in the weak topology).  
\end{proof}

We continue our analysis by exhibiting some helpful properties of these indicator functions.  
\begin{lemma}[Regularity of Indicator Functions {$\chi_{\A_x^T}$}]\label{lem_wk_lwr_semi_continuity}
For any $T \in \N$, $\A_x^T$ is convex and weakly closed in $\A_x^{\infty:0}$.  
Thus, the function $\chi_{\A_x^T}:\A_x^{\infty:0}\rightarrow \R \cup \{\infty\}$ is weakly lower semi-continuous and convex.  In particular, the function $\chi_{\A_x^T}:\A_x^{\infty:0}\rightarrow \R \cup \{\infty\}$ is lower semi-continuous and convex on $\A_x^{\infty:\alpha}$.  
\end{lemma}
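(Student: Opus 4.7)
The plan is to verify the three assertions in sequence: convexity of $\A_x^T$, its weak closedness in $\A_x^{\infty:0}$, and the consequent regularity of the indicator $\chi_{\A_x^T}$. These are essentially independent verifications that combine at the end, so I will handle them one at a time.

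For convexity, I would take $\xb,\yb \in \A_x^T$ and $\rho \in [0,1]$ and set $\zb := \rho\xb + (1-\rho)\yb$. The initial condition $z_0 = \rho x + (1-\rho) x = x$ is automatic. The only other defining property of $\A_x^T$ is that $\Delta z_u = 0$ for $u \geq T$, which follows at once from the linearity of the increment operator: $\Delta z_u = \rho \Delta x_u + (1-\rho)\Delta y_u = 0$. Hence $\zb \in \A_x^T$.

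For weak closedness, the cleanest route is via Mazur's theorem: since $\A_x^T$ is convex, it is weakly closed in $\A_x^{\infty:0}$ if and only if it is norm-closed there. Norm-closedness is direct. Recall that $\A_x^{\infty:0}$ carries the norm $\|\xb\| = (\sum_{t=1}^{\infty}\|\Delta x_t\|^p)^{1/p}$, so if $\xb^n \to \xb$ in norm then for each fixed $u$ we have $\|\Delta x_u^n - \Delta x_u\|^p \leq \sum_t \|\Delta x_t^n - \Delta x_t\|^p \to 0$. If each $\xb^n \in \A_x^T$, so that $\Delta x_u^n = 0$ for $u \geq T$, passing to the limit gives $\Delta x_u = 0$ for all $u \geq T$; combined with $x_0 = x$, which is enforced by membership in $\A_x^{\infty:0}$, this shows $\xb \in \A_x^T$.

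For the regularity of $\chi_{\A_x^T}$, convexity of the set immediately implies convexity of its indicator. Its epigraph is $\A_x^T \times [0,\infty)$, which is weakly closed precisely because $\A_x^T$ is weakly closed and $[0,\infty)$ is closed; hence $\chi_{\A_x^T}$ is weakly lower semi-continuous on $\A_x^{\infty:0}$. To obtain the ``in particular'' statement, I note that the natural inclusion $\A_x^{\infty:\alpha} \hookrightarrow \A_x^{\infty:0}$ is norm-continuous, hence weak-to-weak continuous, so the restriction of a weakly lower semi-continuous function remains weakly lower semi-continuous, and weak lower semi-continuity implies (ordinary) lower semi-continuity. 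I do not foresee a real obstacle: the argument reduces to an unpacking of definitions plus Mazur's theorem, with the only mild care being that $\A_x^{\infty:0}$'s own definition already absorbs the initial-condition constraint, so weak closedness is entirely a statement about the tail constraint $\Delta x_u = 0$ for $u \geq T$.
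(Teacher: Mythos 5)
Your proposal is correct and follows essentially the same route as the paper: direct verification of convexity, norm-closedness of $\A_x^T$ in $\A_x^{\infty:0}$ upgraded to weak closedness via Mazur's theorem, and then the regularity of $\chi_{\A_x^T}$ read off from the closed convex set. If anything, your convexity step is slightly more careful than the paper's, which asserts that $\A_x^T$ is a \emph{linear} subspace (not literally true when $x\neq 0$ because of the initial condition), whereas your convex-combination argument handles the constraint $z_0=x$ correctly.
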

\begin{proof}
    Fix $k \in \R$ and 
    $\xb,\yb \in \A_x^T$.  By linearity of $\Delta$ we have that $\Delta(\xb+k\yb)_u=0$ for every $u\geq T$; 
    thus, $\A_x^T$ is a linear space and it is therefore convex.  Thus, $\chi_{\A_x^T}$ is convex; for each $T \in \N$.

    Let $\{\xb^n\}_{n \in \N}$ be a sequence of algorithms in $\A_x^T$ converging to some algorithm $\xb \in \A_x^{\infty:0}$.  Then, $\sum_{t\geq T} \|\Delta x^n_t- \Delta x_t\|^p \to 0$.  However, by the definition of $\A^{T}$, for all $n\in\N$ we have that $x^n_t =0$ if $t\geq T$. Therefore,
    $$
    \sum_{t\geq T} \|\Delta x^n_t - \Delta x_t\|^p = 
    \sum_{t\geq T} \|\Delta x_t\|^p;
    $$
    hence $\sum_{t\geq T} \|\Delta x^n_t- \Delta x_t\|^p \mapsto 0$ only if $\Delta x_t=0$ for every $t\geq T$.  Thus, $\A_x^T$ is a closed convex subset of $\A_x^{\infty:0}$; for every $T\in \N$.  
    Now, by \citep[1.5 Corollary]{ConwayFunctional2} we conclude that $\A_x^T$ is weakly closed.  
    
    Since, for every $T \in \N$, the set $\A_x^T$ is closed and convex, then \citep[Example 3.4]{DalMasoGamma1993} implies that each $\chi_{\A_x^T}$ is lower semi-continuous.  Since it is lower semi-continuous and convex \citep[Proposition 1.18]{DalMasoGamma1993} implies that $\chi_{\A_x^T}$ is also weakly lower semi-continuous.   
 \end{proof}

In what follows, we frequently make use of the notion of \textit{continuous convergence} (see \citep[Chapter 20, Section 6]{KuratowksiTopology1968}).  In general, this mode of continuous is strictly stronger than point-wise convergence but strictly weaker than uniform convergence.  Continuous convergent functionals are abundant enough to exhibit while simultaneously being regular enough to control.  
\begin{definition}[Continuous Convergence]\label{defn_cnt_conv}
    A sequence $\{F^n\}_{n \in \N}$ from a topological space $Z$ to $\R \cup \{\infty\}$ \textit{converge continuously} to $F:Z\rightarrow \R \cup \{\infty\}$ if, for every $z \in Z$ and every open neighbourhood $U_{F(z)}\subseteq \R \cup \{\infty\}$ of $F(z)$, there exists some $N_z \in \N$ and some open neighbourhood $U_z\subseteq Z$ of $z$ for which
    $$
    F_n(y) \in V_{F(z)} 
    ,
    $$
    for every $n\leq N_z$ and every $y \in U_z$.  
\end{definition}

\begin{lemma}\label{lem_continuous_convergence}
The sequence of functionals $\chi_{\A_x^T}$ converge continuously to the constant-zero functional on $\X$ mapping any $x \in \X$ to $0$.  
\end{lemma}
\begin{proof}
    If $\xb \in \A_x^T$ then $x_t=0$ for every $t\geq T+1>T$ and therefore, $\xb \in \A_x^{T+1}$; thus, $\A_x^T\subseteq \A_x^{T+1}$.  Moreover, since the algorithm $\xb^{T:T+1}$ defined by $\xb^{T:T+1}_{T+1}=x$ is in $\A_x^{T+1}-\A_x^T$ then $\{\A_x^T\}_{T \in \N}$ is a strictly nested increasing sequence of closed linear subsets of $\A_x^{\infty:0}$.  Therefore, $\left\{\chi_{\A_x^T}\right\}_{T \in \N}$ converges point-wise and in a strictly monotonically decreasing fashion to the functional $\chi_{\cup_{T \in \N} \A_x^T}$.  
    Hence, by \citep[Proposition 5.7]{DalMasoGamma1993} the lower semi-continuous relaxation $\chi_{\cup_{T \in \N} \A_x^T}^{lsc}$ of $\chi_{\cup_{T \in \N} \A_x^T}$ is the $\Gamma$-limit of the sequence of functionals $\{\chi_{ \A_x^T}\}_{T \in \N}$.  
    
    Similarly, $\{-\chi_{ \A_x^T}\}_{T \in \N}$ is strictly monotonically increasing with limit point-wise limit the functional $-\chi_{\cup_{T \in \N} \A_x^T}$.  Hence, by \citep[Proposition 5.4]{DalMasoGamma1993} the sequence $\{-\chi_{ \A_x^T}\}_{T \in \N}$ $\Gamma$-converges to $-\chi_{\cup_{T \in \N} \A_x^T}^{lsc}$.  Thus, by definition the sequence $\{-\chi_{\A_x^T}\}_{T \in \N}$ converges continuously to $\chi_{\cup_{T \in \N} \A_x^T}^{lsc}$.  
    
    It remains to compute $\chi_{\cup_{T \in \N} \A_x^T}^{lsc}$.  By \citep[Example 3.4]{DalMasoGamma1993}
    \begin{equation}
        \chi_{\cup_{T \in \N} \A_x^T}^{lsc} = \chi_{\overline{\cup_{T \in \N} \A_x^T}},
        \label{eq_lsc_relaxation_limit_computation}
    \end{equation}
    where $\overline{\cup_{T \in \N} \A_x^T}$ denotes the closure of $\cup_{T \in \N} \A_x^T$ in $\A_x^{\infty:0}$.  We show that this closure is all of $\A_x^{\infty:0}$, in other words, we show that $\cup_{T \in \N} \A_x^T$ is dense in $\A_x^{\infty}$.  First, observe that $\yb \in \cup_{T \in \N} \A_x^T$ only if there exists some $T \in \N$ for which $\Delta y_t=0$ for all $t\geq T$.  Indeed, if $\xb \in \A_x^{\infty:0}$ then , by definition, given any $\epsilon>0$ the exists some $T_{\epsilon}\in \N$ for satisfying
    \begin{equation}
        \sum_{t\geq T_{\epsilon}} \left\|\Delta x_t\right\|^p < \epsilon.
        \label{eq_truncation_estimate}
    \end{equation}
    Observe that, the sequence $\xb^{\epsilon}$ defined by 
    $$
    x^{\epsilon}:= 
    \begin{cases}
    x_t &: t\leq T_{\epsilon}\\
    0 & : t> T_{\epsilon}
    \end{cases}
    $$
    belongs to $\cup_{T \in \N} \A_x^T$.  In particular, the tail estimate~\eqref{eq_truncation_estimate} implies that
    $$
    \sum_{t\geq T_{\epsilon}} \left\|\Delta x_t^{\epsilon} -\Delta x_t\right\|^p
    =
    \sum_{t\geq T_{\epsilon}} \left\|\Delta x_t\right\|^p < \epsilon.
    $$
    Thus, any $\xb \in \A_x^{\infty:0}$ an an accumulation point of some sequence in $\bigcup_{T \in \N} \A_x^{\infty:0}$, for the strong topology.  Since $\cup_{T \in \N} \A_x^{T}$ is the union of a nested sequence of convex sets it is itself convex and since the closure of a convex set is itself convex then $\overline{\bigcup_{T \in \N} \A_x^T}$ is convex.  
    Note that, \citep[1.5 Corollary]{ConwayFunctional2} implies that the weak closure of $\overline{\bigcup_{T \in \N} \A_x^T}$ equals to $\A_x^{\infty:0}$.  Therefore, our remaining computations hold both in the weak and strong topologies on $\A_x^{\infty:0}$ (and consequentially also on the subsets $\A_x^{\infty:\alpha}$).  
    
    Therefore,~\eqref{eq_lsc_relaxation_limit_computation} simplifies to
    $
        \chi_{\cup_{T \in \N} \A_x^T}^{lsc} = \chi_{\overline{\cup_{T \in \N} \A_x^T}} = \chi_{\A_x^{\infty:0}} = 0
        .
        $
\end{proof}
We are now in place to take the first main step to proving Theorems~\ref{thm:asym-existence} and~\ref{thm:asym-consistency}.  Namely, we are in place to conclude that $\mcR_{\infty}$ is the $\Gamma$-limit of the sequence of functionals $\{\mcR_T + \chi_{\A_x^T}\}_{T \in \N}$.
\begin{lemma}\label{lem_Gamma_convergence_done}
The $\mcR_{\infty}$ is the $\Gamma$-limit of $\{\mcR_T + \chi_{\A_x^T}\}_{T \in \N}$ on $\A_x^{\infty:\alpha}$ and
$\mcR_{\infty}$ is also the $\Gamma$-limit of $\{\mcR_T + \chi_{\A_x^T}\}_{T \in \N}$ on $\A_x^{\infty:0}$.
\end{lemma}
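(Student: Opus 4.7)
The plan is to deduce this result by combining the two previously established convergence results via a standard stability theorem from $\Gamma$-convergence theory. Specifically, while $\Gamma$-convergence is not generally preserved under sums, it is preserved when one of the summands converges in the stronger sense of continuous convergence (Definition~\ref{defn_cnt_conv}); the relevant statement is that if $F_n$ $\Gamma$-converges to $F$ and $G_n$ continuously converges to $G$, then $F_n + G_n$ $\Gamma$-converges to $F + G$ (see e.g.\ \citep[Proposition 6.21]{DalMasoGamma1993}).

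First I would invoke Lemma~\ref{lem_gamma_convergence_FT_unregularized_regret} to obtain that $\{\mcR_T\}_{T \in \N} = \{F_T + \Phi_T\}_{T \in \N}$ $\Gamma$-converges to $\mcR_{\infty}$ on $\A_x^{\infty:\alpha}$ in the metric topology, and also on $\A_x^{\infty:0}$ in its weak topology. Next, Lemma~\ref{lem_continuous_convergence} gives that $\{\chi_{\A_x^T}\}_{T \in \N}$ converges continuously to the identically zero functional; the closing remark of that lemma's proof confirms that this conclusion is valid in both the strong and the weak topologies on $\A_x^{\infty:0}$, and restricting to the subset $\A_x^{\infty:\alpha}$ with the induced topology preserves continuous convergence.

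Combining the two inputs via the sum-stability theorem then yields that $\{\mcR_T + \chi_{\A_x^T}\}_{T \in \N}$ $\Gamma$-converges to $\mcR_{\infty} + 0 = \mcR_{\infty}$, simultaneously on $\A_x^{\infty:\alpha}$ with its strong topology and on $\A_x^{\infty:0}$ with its weak topology, which is precisely the content of the lemma. The only subtlety is bookkeeping: one must check that the stability theorem is being applied in the correct topology in each of the two cases, but since both Lemma~\ref{lem_gamma_convergence_FT_unregularized_regret} and Lemma~\ref{lem_continuous_convergence} were proved in exactly the two topologies appearing in the statement, no additional obstacle arises.
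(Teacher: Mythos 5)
Your proposal is correct and follows essentially the same route as the paper: both invoke Lemma~\ref{lem_gamma_convergence_FT_unregularized_regret} for the $\Gamma$-convergence of $\{\mcR_T\}$, Lemma~\ref{lem_continuous_convergence} for the continuous convergence of $\{\chi_{\A_x^T}\}$ to the zero functional, and then Dal Maso's stability result for sums of a $\Gamma$-convergent and a continuously convergent sequence (the paper cites Proposition~6.20, stating the limit as $\sup_{T}\mcR_T^{lsc}$ and then identifying it with $\mcR_\infty$ via lower semi-continuity of each $\mcR_T$, but this is the same argument). No gap.
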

\begin{proof}
    By Lemma~\ref{lem_gamma_convergence_FT_unregularized_regret}, $\{F_T + \Phi_T\}_{T \in \N}$ $\Gamma$-converges to $\mcR_{\infty}$ on $\A_x^{\infty:\alpha}$ (resp. on $\A_x^{\infty:0}$ for the weak topology).  By Lemma~\ref{lem_continuous_convergence}, $\{\chi_{A_x^T}\}_{T \in \N}$ converges to the constant $0$ functional continuously.  Moreover, since $0$ is everywhere finite on $\A_x^{\infty:0}$ then \citep[Proposition 6.20]{DalMasoGamma1993} implies that 
    \begin{equation}
        \Gamma-\lim\limits_{T \uparrow \infty} \mcR_T + \chi_{\A_x^T} = 
        \sup_{T \in \N} \mcR_T^{lsc} + 0
        = 
        \sup_{T \in \N} \mcR_T^{lsc} 
        \label{eq_sup_form}
        ;
    \end{equation}
    (where we take the lower semi-continuous relaxation with respect to the strong topology on $\A_x^{\infty:\alpha}$, otherwise, mutatis mutandis, we take it with respect to the weak topology on $\A_x^{\infty:0}$.  
    
    By Lemmas~\ref{lem_lscty_f},~\ref{lem_assumption_translation}, and \citep[Proposition 1.9]{DalMasoGamma1993} we find that each $\mcR_T$ is lower semi-continuous.  Thus, the right-hand side of~\eqref{eq_sup_form} yields equal to $\sup_{T \in \N} \mcR_T^{lsc} $.  Hence, the conclusion follows since
    $
       \Gamma-\lim\limits_{T \uparrow \infty} \mcR_T + \chi_{\A_x^T} = 
        \sup_{T \in \N} \mcR_T^{lsc} = \mcR_{\infty}
    .
    $
 \end{proof}
\begin{lemma}[Equi-coercivity Lemma]\label{lem_equi_coercivity}
The family of functionals $\left\{
\mcR_T + \chi_{\A_x^T}
\right\}_{T\in \N}$ is equi-coercive on $\A_x^{\infty:\alpha}$. In addition, $\phi$ the family of functionals $\left\{
\mcR_T + \chi_{\A_x^T}
\right\}_{T\in \N}$ is equi-coercive on $\A_x^{\infty:0}$ for the weak topology.  
\end{lemma}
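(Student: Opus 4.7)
The plan is to reduce equi-coercivity of the family $\{\mcR_T+\chi_{\A_x^T}\}_{T\in\N}$ to the (weak) coercivity of the single functional $\Phi_\infty$, which has already been established in Lemma~\ref{lem_assumption_translation}. Fix $s\in\R$. I would first observe that since $F_T,\Phi_T,\chi_{\A_x^T}$ are all non-negative (as noted in the proof of Lemma~\ref{lem_bound_coercive} and in equation~\eqref{eq_simple_lower_bound_on_F}), the pointwise inequality
\begin{equation*}
    \mcR_T(\xb)+\chi_{\A_x^T}(\xb)\;\geq\; \Phi_T(\xb)+\chi_{\A_x^T}(\xb)
\end{equation*}
holds on all of $\A_x^{\infty:0}$. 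Moreover, $\operatorname{Lev}_s(\mcR_T+\chi_{\A_x^T})\subseteq \A_x^T$, since outside $\A_x^T$ the indicator is $\infty$. On $\A_x^T$ we have $\Delta x_u=0$ for $u\geq T$, and since $\phi(0)=0$, the partial sum $\Phi_T(\xb)$ coincides with $\Phi_\infty(\xb)$ for every $\xb\in\A_x^T$.

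Combining these two observations, I would conclude the uniform inclusion
\begin{equation*}
    \operatorname{Lev}_s\!\bigl(\mcR_T+\chi_{\A_x^T}\bigr)\;\subseteq\;\bigl\{\xb\in\A_x^T:\Phi_\infty(\xb)\leq s\bigr\}\;\subseteq\;\operatorname{Lev}_s(\Phi_\infty),
\end{equation*}
valid for every $T\in\N$. In the setting $\A_x^{\infty:\alpha}$ with $\alpha>0$, Lemma~\ref{lem_assumption_translation}(1) asserts that $\Phi_\infty$ is coercive, so $\operatorname{Lev}_s(\Phi_\infty)$ is compact in $\A_x^{\infty:\alpha}$; in the setting $\A_x^{\infty:0}$ with the weak topology, Lemma~\ref{lem_assumption_translation}(2) asserts that $\Phi_\infty$ is weakly lower semi-continuous and weakly coercive (and the proof of Lemma~\ref{lem_assumption_translation} shows the sub-level sets are contained in a Grothendieck-compact set, hence weakly compact via closedness). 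Either way, $\operatorname{Lev}_s(\Phi_\infty)$ is itself a single (weakly) compact subset containing every $\operatorname{Lev}_s(\mcR_T+\chi_{\A_x^T})$, which is exactly the definition of equi-coercivity.

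There is no substantive obstacle here; the proof is essentially bookkeeping and invoking Lemma~\ref{lem_assumption_translation}. The only subtle point to state carefully is the equality $\Phi_T(\xb)=\Phi_\infty(\xb)$ for $\xb\in\A_x^T$, which is what makes the bound uniform in $T$ rather than depending on the horizon. Once this is pointed out, both the $\A_x^{\infty:\alpha}$ statement and the weak-topology statement on $\A_x^{\infty:0}$ follow from the same one-line chain of set inclusions, distinguished only by which part of Lemma~\ref{lem_assumption_translation} is applied.
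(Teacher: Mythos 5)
Your proposal is correct and follows essentially the same route as the paper: both arguments hinge on the uniform lower bound $(\mcR_T+\chi_{\A_x^T})(\xb)\geq\Phi_\infty(\xb)$, obtained from the non-negativity of $F_T$ together with the observation that $\Phi_T=\Phi_\infty$ on $\A_x^T$ because $\phi(0)=0$ kills the tail. The only cosmetic difference is that the paper invokes a cited proposition (a family dominated below by a single coercive functional is equi-coercive) where you unwind the definition via sub-level-set inclusions directly.
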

\begin{proof}
    Fix $T \in \N$ and $\xb \in \A_x^{\infty:0}$.  If $\xb \in \A_x^T$ then by definition $\chi_{\A_x^T}(\xb)=0$ and $\Delta x_t =0$ for every $t\geq T$.  
    Since Assumption~\ref{ass:phi-assumptions-general} guarantees that $\phi(0)=0$, then $\phi(\Delta x_t)=0$ for every $t \geq T$.  Therefore, we may compute:
    \begin{equation}
        \begin{aligned}
        \left(\mcR_T + \chi_{\A_x^T}\right)(\xb) & = \sum_{t=1}^T f(x_t) - f^{\star} + \phi_t(\Delta x_t) + \chi_{\A_x^T}(\xb)\\
        & = \sum_{t=1}^T f(x_t) - f^{\star} + \sum_{t=1}^T \phi_t(\Delta x_t) \\
        & = \sum_{t=1}^T f(x_t) - f^{\star} + \sum_{t=1}^T \phi_t(\Delta x_t)  +0 \\
        & = \sum_{t=1}^T f(x_t) - f^{\star} + \sum_{t=1}^{\infty} \phi_t(\Delta x_t)\\
        & \geq \sum_{t=1}^{\infty} \phi_t(\Delta x_t)
        ;
        \end{aligned}
        \label{proof_lem_equi_coercivity_eq_1}
    \end{equation}
    where the last inequality in~\eqref{proof_lem_equi_coercivity_eq_1} holds since $-\infty < f^{\star}\leq f(x)$ by Assumption~\ref{ass:f-basic-assumptions}.  
    
    If $\xb \not \in \A_x^T$, then $\chi_{\A_x^T}(\xb)=\infty$.  In which case we necessarily have 
    \begin{equation}
    \left(\mcR_T + \chi_{\A_x^T}\right)(\xb) = \infty\geq 
    \sum_{t=1}^{\infty} \phi_t(\Delta x_t).
        \label{proof_lem_equi_coercivity_eq_2}
    \end{equation}
    Thus, together~\eqref{proof_lem_equi_coercivity_eq_1} and~\eqref{proof_lem_equi_coercivity_eq_2} imply that for every $T \in \N$ and every $\xb \in \A_x^{\infty:0}$ the following bound must hold:
    \begin{equation}
    \left(\mcR_T + \chi_{\A_x^T}\right)(\xb) \geq 
    \sum_{t=1}^{\infty} \phi_t(\Delta x_t) = \Phi_{\infty}(\xb).
        \label{proof_lem_equi_coercivity_eq_3_general_bound_over_Space}
    \end{equation}
    
    Therefore, by Lemma~\ref{lem_assumption_translation} and \citep[Proposition 7.7]{DalMasoGamma1993}, we may conclude that \\
    $\left\{\mcR_T + \chi_{\A_x^T}\right\}_{T\in \N}$ forms an equi-coercive family on $\A_x^{\infty:\alpha}$ (resp. on $\A_x^{\infty:0}$ for the weak topology if $\phi$ is also convex) of functionals since $\Phi_{\infty}$ is itself coercive on $\A_x^{\infty:\alpha}$ (resp. on $\A_x^{\infty:0}$ for the weak topology).  
\end{proof}
%
\begin{lemma} \label{lem:mcP0-mcP-equivalence-nonempty}
    Suppose that $\mcP_x^{\infty:0}\neq\emptyset$. Then $\mcP_x^\infty = \mcP_x^{\infty:0}$, and hence $\mcP_x^\infty$ is also non-empty.
\end{lemma}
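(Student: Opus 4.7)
The plan is to exploit the growth condition on $\phi$ (Assumption~\ref{ass:phi-assumptions-general}) to show that any algorithm in $\A_x^\infty$ with finite regret is automatically in the stable subset $\A_x^{\infty:0}$. Since the assumption gives us a non-empty $\mcP_x^{\infty:0}$, this upgrades any candidate minimizer over the larger space $\A_x^\infty$ into the smaller space $\A_x^{\infty:0}$.

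More concretely, I would first note the trivial inclusion $\A_x^{\infty:0}\subseteq \A_x^{\infty}$, which yields
\begin{equation*}
    \inf_{\xb\in \A_x^{\infty}} \mcR_\infty(\xb) \;\leq\; \inf_{\xb\in \A_x^{\infty:0}} \mcR_\infty(\xb)
    \;.
\end{equation*}
Fix any $\xb^{\star} \in \mcP_x^{\infty:0}$, which exists by hypothesis. By Theorem~\ref{thm:asym-existence}, $\mcR_\infty(\xb^{\star})<\infty$. Now for any $\xb\in\A_x^\infty$ satisfying $\mcR_\infty(\xb)\leq \mcR_\infty(\xb^{\star})$, I would use that $f(x_t)-\fst \geq 0$ for every $t$ (Assumption~\ref{ass:f-basic-assumptions}) together with the growth bound $c\|\cdot\|^p \leq \phi(\cdot)$ from Assumption~\ref{ass:phi-assumptions-general} to deduce
\begin{equation*}
    c\sum_{t=1}^{\infty} \|\Delta x_{t-1}\|^p
    \;\leq\; \sum_{t=1}^{\infty} \phi(\Delta x_{t-1})
    \;\leq\; \mcR_\infty(\xb)
    \;\leq\; \mcR_\infty(\xb^{\star}) \;<\; \infty
    \;,
\end{equation*}
so that $\xb \in \A_x^{\infty:0}$. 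Consequently the two infima coincide, and any minimizer over $\A_x^\infty$ lies in $\A_x^{\infty:0}$; conversely every $\xb\in\mcP_x^{\infty:0}$ is a minimizer over $\A_x^\infty$ because it achieves the common infimum. This yields $\mcP_x^\infty = \mcP_x^{\infty:0}$, which is also thereby non-empty.

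I do not expect a real obstacle here: the argument is essentially a sandwich between the two infima using the lower bound $\phi\geq c\|\cdot\|^p$. The only subtlety to be careful about is ensuring that we apply the bound to the full series (which is legitimate since all terms are non-negative, so partial sums and the total sum compare directly), and citing Theorem~\ref{thm:asym-existence} for the finite-regret property of elements of $\mcP_x^{\infty:0}$ so that the candidate value $\mcR_\infty(\xb^\star)$ is indeed finite and can serve as the comparison threshold.
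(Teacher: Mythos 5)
Your argument is correct and is essentially the paper's own proof read contrapositively: the paper partitions $\A_x^\infty$ into $\mcP_x^{\infty:0}$, the remainder $C_x^\infty$ of $\A_x^{\infty:0}$, and $D_x^\infty = \A_x^\infty\setminus\A_x^{\infty:0}$, and uses the same growth bound $c\|\cdot\|^p\le\phi$ together with $f-\fst\ge 0$ to show that every $\yb\in D_x^\infty$ has $\mcR_\infty(\yb)=\infty$, which is exactly your statement that finite regret forces membership in $\A_x^{\infty:0}$. The one point to repair is your citation of Theorem~\ref{thm:asym-existence} for the finiteness of $\mcR_\infty(\xb^\star)$: the proof of that theorem concludes by invoking this very lemma, so relying on it here is circular; instead establish finiteness directly, as the paper does, by comparing $\xb^\star$ against the sequence $y_0=x$, $y_u=\xst$ for $u>0$, whose regret equals $\phi(\xst-x)<\infty$.
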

\begin{proof}
    The proof follows from the fact that $\mcR_{\infty}$ is bounded over both $\mcP_x^{\infty}$ and $\mcP_x^{\infty}$.
    
    Let us separate $\A_x^\infty = \mcP_x^{\infty:0} \cup C_x^\infty \cup  D_x^\infty$ into the three disjoint parts,
    where we define $C_x^\infty = \A_x^{\infty:0} \setminus \mcP_x^{\infty:0}$ and $ D_x^\infty = \A_x^\infty \setminus \A_x^{\infty:0}$. We show that the claim of the lemma holds by demonstrating the equivalent claim that for any $\xb \in \mcP_x^{\infty:0}$ and $\yb\in C_x^\infty \cup  D_x^\infty$, we have that $\mcR_{\infty}(\xb) < \mcR_{\infty}(\yb)$. Note that the above claim is equivalent to the claim of the lemma since it implies that for any $\yb\in\A_x^\infty$, we have $\mcR_\infty(\xb) \leq \mcR_\infty(\yb)$ if and only if $\xb\in\mcP_x^{\infty:0}$.
    
    First, we note that by the definition of $\mcP_x^{\infty:0}$, since $C_x^\infty \subseteq \A_x^{\infty,0}$, we have that for any $\xb \in \mcP_x^\infty$ and $\yb\in C_x^\infty$, we have that $\mcR_{\infty}(\xb) < \mcR_{\infty}(\yb)$.
    
    Next, we begin by recalling that if $\xb\in \mcP_x^{\infty:0}$, then $\mcR(\xb)<\infty$. This holds since by defining $\yb\in \A_x^{\infty:0}$ such that $y_0 = x$ and $y_u = \xst$ for all $u>0$, we have that $\mcR_\infty(\yb) = \phi(x-\xst) < \infty$, which by the definition of $\xb\in \mcP_x^{\infty:0}$ implies that $\infty > \mcR_{\infty}(\yb) \geq \mcR_{\infty}(\xb) \geq 0$.
    By the definition of $\A_x^{\infty:0}$, we have that for any $\yb\in D_x^\infty$, the sum $\sum_{u=0}^\infty \| \Delta y_u \|^p = \infty$ diverges. Moreover, by Assumption~\ref{ass:phi-assumptions-general}, we have that there exists a constant $c>0$ such that $\phi(z) \geq c \| \cdot \|^p$. Hence, 
    \begin{equation*}
        \yb \in D_x^\infty
        \;\Longrightarrow \;
        \mcR_\infty(\yb) \geq \sum_{u=0}^\infty \phi(\Delta y_u) \geq c \sum_{u=0}^\infty \| \Delta y_u \|^p = \infty
        \;.
    \end{equation*}
    We have shown that $\mcR_\infty(\xb)<\infty$ for all $\xb\in\mcP_x^{\infty:0}$ and $\mcR_\infty(\yb)=\infty$ for all $\yb \in D_x^\infty$. Combining these facts, we obtain that $\mcR_{\infty}(\xb) < \mcR_{\infty}(\yb)$, concluding the proof.
\end{proof}
\subsubsection{{Proof of Theorem~\ref{thm:asym-existence}}}

\label{sss_proof_asym_existence_with_lemmas_ready}
We are now in place to prove Theorem~\ref{thm:asym-existence}.  Since the proof of the  $\A_x^{\infty,0}$ case and the $\A_x^{\infty,\alpha}$ (for $\alpha>0$) case are analogous, we simplify our exposition by combining them and highlight their differences when necessary.

\begin{proof}
    By Lemma~\ref{lem_Gamma_convergence_done} and~\ref{lem_equi_coercivity} the family of functionals $\left\{
\mcR_T + \chi_{\A_x^T}
\right\}_{T\in \N}$ is lower semi-continuous and equi-coercive on $\A_x^{\infty:\alpha}$ (resp. weakly lower semi-continuous and equi-coercive with respect to its weak topology on $\A_x^{\infty:0}$).  Therefore, \citep[Theorem 7.8]{DalMasoGamma1993} implies that $\mcR_{\infty}$ is coercive on $\A_x^{\infty:\alpha}$ (on $\A_x^{\infty:0}$ with respect to the weak topology).  

Lemma~\ref{lem_bound_coercive} guaranteed that, for every $T \in \N$, the regret functionals $\mcR_T$ are all coercive on $\A_x^{\infty:\alpha}$ (resp. on $\A_x^{\infty:0}$ with respect to the weak topology).  Since $\mcR_{\infty} = \sup_{T \in \N} \mcR_T$ then \citep[Proposition 1.8]{DalMasoGamma1993} guarantees that $\mcR_{\infty}$ is lower semi-continuous on $\A_x^{\infty:\alpha}$ (resp. on $\A_x^{\infty:0}$ with respect to the weak topology).  

Next, Lemma~\ref{lem_bound_coercive} guaranteed that, for every $T \in \N$, the regret functional $\mcR_T$ takes non-negative values.  Hence, for every $\xb \in \A_x^{\infty}$ we compute
\begin{equation}
    \mcR_{\infty}(\xb) = \sup_{T \in \N} \mcR_T(\xb) \geq 0.
\end{equation}
Therefore, $\mcR_{\infty}$ is both lower semi-continuous and coercive on $\A_x^{\infty:\alpha}$ (resp. weakly lower semi-continuous and coercive with respect to the weak topology on $\A_x^{\infty:0}$) and it is bounded below by $0$.  Hence, \citep[Theorem 1.15]{DalMasoGamma1993} implies that $\mcP^{\infty:\alpha}_x\neq \emptyset$ ($\mcP^{\infty}_x\neq \emptyset$).  

Now, Assumptions~\ref{ass:f-basic-assumptions} and~\eqref{ass:phi-assumptions-general} imply that the sequence $\xb^{\infty}$ defined by $x_0^{\infty}=x$ and $x_t^{\infty}=x^{\star}$ for $t\geq 1$ satisfies $\xb \in \A_x^{\infty:\alpha}$ (resp. in $\A_x^{\infty:0}$) and $\mcR_{\infty}(\xb)<\infty$.  Hence, by definition, any $\xb^{\star} \in \mcP^{\infty:\alpha}_x$ (resp. in $\mcP^{\infty:0}_x$) must satisfy
$
\mcR_{\infty}(\xb^{\star})\leq \mcR_{\infty}(\xb^{\infty})<\infty
.
$
Therefore, $\mcP^{\infty:\alpha}_x$ (resp. $\mcP^{\infty:0}_x$ is non-empty and any algorithm therein has finite regret.

Lastly, we apply Lemma~\ref{lem:mcP0-mcP-equivalence-nonempty} to conclude that $\xb\in\mcP^{\infty:0}\;\Rightarrow\;\xb\in\mcP^{\infty}$ and hence that $\mcP^{\infty}$ is non-empty. 
\end{proof}

\subsection{Proof of Corollary~\ref{cor:convergence-asym-nonconv}}
\begin{proof}
    Since $\xb\in\mcP$, Theorem~\eqref{thm:asym-existence} implies that
    \begin{equation*}
       \mcR_{\infty}(\xb) = \sum_{t=0}^\infty f(x_{t}) - f^{\star} + \phi(\Delta x_t)
       =
       \sum_{t=0}^\infty a_t
       <\infty
       \;,
    \end{equation*}
    where the summand $a_t = f(x_{t})-f^{\star} + \phi(\Delta x_t) $ is non-negative. Hence, we have that $\lim_t a_t  = 0$. 
    
    Now, assume that the sequence is monotone. Since the $a_t$ are summable, the partial sums $S_t = \sum_{u=t}^\infty a_t \rightarrow 0$ asymptotically vanish. Hence, we have the bound
    \begin{align*}
        2 S_t \geq 2 \sum_{u=t}^{2t} a_u \geq  2 t a_{2 t} \geq 0
        \,.
    \end{align*}
    We therefore have that $\lim_{t\rightarrow\infty} t \, a_t = 0$, proving the claim of the theorem.
    
\end{proof}

\subsection{Proof of Theorem~\ref{thm:asym-consistency}}
The Lemmas established in~\ref{sss_PRoof_Lemmas_Asym_consistence} reduce the proof of Theorem~\ref{thm:asym-consistency} to a simple consequence of the Fundamental Theorem of $\Gamma$-convergence.  As with the proof of Theorem~\ref{thm:asym-existence}, since the proof of the convex and the non-convex cases are analogous, mutatis mutandis, and are therefore combined.  
\begin{proof}
    Since $\mcR_{\infty}$ is the $\Gamma$-limit of the equi-coercive sequence of functionals $\{\mcR_{T}+\chi_{\A_x^T}\}_{T \in \N}$ on $\A_x^{\infty:\alpha}$ (resp. on $\A_x^{\infty:0}$ with respect to the weak topology), then result therefore follows directly from \citep[Corollary 2.1]{BraidesLocalMinimization}.
\end{proof}

\newpage

\section{Proofs for Section~\ref{sec:first-order-conditions-smth}}

\subsection{Proof of Theorem~\ref{thm:gateaux-derivative}}
\begin{proof}
    We begin by recalling the definition of the Gâteaux derivative,
    \begin{equation*}
    \mcR_T^\prime(\xb)(\delxb)
    =\lim_{\epsilon\rightarrow 0} \frac{\mcR_T(\xb + \epsilon \delxb ) - \mcR_T(\xb)}{\epsilon}
    \;,
    \end{equation*}
    where $\delxb = \yb-\xb$ for some $\yb\in\A_x^T$. Expanding the definition of $\mcR_T$, exchanging the limit with the sum and applying the assumed smoothness of Assumption~\ref{ass:f-phi-differntiable}, we obtain that
    \begin{equation*}
        \mcR_T^\prime(\xb)(\delxb)
        =
        \sum_{t=1}^T
        \langle \nabla f(x_t) \mathrel{,} \delx_t \rangle
        + \langle \nabla \phi( \Delta x_{t-1} ) \mathrel{,} \Delta \delx_{t-1} \rangle
        \;.
    \end{equation*}
    Lastly, noting that $\delx_{0}=0$ since $x_0=y_0=x$ and re-arranging the sum, we obtain the expression in the statement of the theorem.
\end{proof}

\subsection{Proof of Theorem~\ref{thm:optimal-dynamics}}
\begin{proof}
    First we show that $\mcPh_x^T$ is precisely the set of $\xb\in\A_x^T$ which make $\mcR_T^\prime$ vanish. To see this, note that $\mcR^\prime(\xb)(\delxb)$ given in equation~\eqref{eq:gateaux-expression} is a bounded linear functional in $\delxb$, and hence vanishes if and only if we have
    \begin{equation*}
        \nabla f(x_{t+1}) - \nabla \phi(x_t) - \nabla \phi(x_{t+1}) = 0
    \end{equation*}
    for all $t=0,1,\dots,T-1$. Hence, we have that
    \begin{equation*}
    \mcP_x^T = 
    \{ \xb\in\A_x^T : \mcR^\prime_T(\xb) \equiv 0 \}
    \;,
    \end{equation*}
    and is by definition the set of critical points. Since $\A_x^T$ is open and since $\mcP_x^T\subseteq \A_x^T$ is non-empty, we must have that $\mcR^\prime_T(\xb) = 0$ for any $\xb\in\mcP_x^T$. Hence, we obtain the inclusion $\mcPh_x^T\supseteq\mcP_x^T$. Lastly, it is easy to see that the recursion-\emph{ii} holds by the stationarity of equation~\eqref{thm:optimal-dynamics}.
    
\end{proof}

\subsection{Proof of Theorem~\ref{thm:pseudo-gd-nonconvex-2}}
\begin{proof}
	We begin by the case where $T<\infty$. Recall that by the dynamic programming principle (Lemma~\ref{lem:dynamic-programming-principle-fin}) that
	\begin{equation*}
		J^{T-t}(x_t) = \min_{y\in\X} \{ \phi(y-x_t) + f(y) + J^{T-(t+1)}(y) \}
		\;,
	\end{equation*}
	where $x_{t+1}\in C_x^{T-t} = \argmin_{y\in\X} \{ \phi(y-x_t) + f(y) + J^{T-(t+1)}(y) \}$. Since $x_{t+1}\in C_x^{T-t}$, by the differentiability of $f$ and $\phi$, as well as the assumed local-Lipschitz property of $J^{T-(t+1)}$, we find that 
	\begin{equation*}
		0 \in \nabla \phi(x_{t+1}-x_t) + \nabla f(x_{t+1}) + \partial J^{T-(t+1)}(x_{t+1})
		\;,
	\end{equation*}
	and hence that
	\begin{equation} \label{eq:proof-gen-dynamics-finite-1}
		- \nabla \phi(x_{t+1}-x_t) -  \nabla f(x_{t+1}) \in \partial J^{T-(t+1)}(x_{t+1})
		\;.
	\end{equation}
	Since $\xb\in\mcPh^T$, we have that the optimal dynamics of equation~\eqref{eq:optim-diff-eq} must hold and hence we get that $\nabla \phi(x_{t+1}-x_t) + \nabla f(x_{t+1}) = \nabla \phi(x_{t+2}-x_{t+1})$, yielding the claim of the theorem.

	In the case of $T=\infty$, we can at arrive at equation~\eqref{eq:proof-gen-dynamics-finite-1} with the DPP (Lemma~\ref{lem:dynamic-programming-principle-asym}) and applying the same sequence of steps, which yield that for all $t\in\N$,
	\begin{equation} \label{eq:proof-gen-dynamics-asym-1}
		- \nabla \phi( \Delta x_t ) -  \nabla f(x_{t+1}) \in \partial \Jinf(x_{t+1})
		\;,
	\end{equation}
	where the assumed local Lipschitzness of $\Jinf$ ensures that $\partial \Jinf$ is always non-empty. 
%
%

    In order to show that the recursion, applying the DPP of Lemma~\ref{lem:dynamic-programming-principle-asym} twice implies that for all $t$
    \;,
    \begin{align*}
        \Jinf(x_t) = \phi(\Delta x_t) + f(x_{t+1}) + \phi(\Delta x_{t+1}) + f(x_{t+2}) + \Jinf(x_{t+1})
        \;.
    \end{align*}
    and that $(x_{t+1},x_{t+2})\in\argmin_{y,z\in\X}\left\{ \phi(y-x_t) +\phi(z-y) +  f(y) + f(z) + \Jinf(z) \right\}\textbf{}\;.$
    By the local Lipschitz smoothness of the above function, we know that its generalized derivative must contain zero at $(x_{t+1},x_{t+2})$. Taking the (generalized) derivative at $y=x_t$ and letting it vanish we obtain that
    \begin{equation} \label{eq:proof-gen-dynamics-asym-2}
        \nabla \phi(\Delta x_t) = 
        \nabla \phi( \Delta x_{t+1}) + \nabla f(x_{t+1})
        \;,
    \end{equation}
    yielding the desired recursion.
	
	Hence, combining with equation~\eqref{eq:proof-gen-dynamics-asym-1}, we find that we must have that
	\begin{equation*}
		- \nabla \phi( \Delta x_{t+1}) \in \partial \Jinf(x_{t+1})
	\end{equation*}
	for all $t$, as well as the recursion~\eqref{eq:proof-gen-dynamics-asym-2}. Noting that $x=x_t$ and $t\in\N$ can be chosen arbitrarily, we have that $\nabla \Jinf(x_0) = - \nabla \phi( \Delta x_0)$ for all $x_0\in\Upsilon$, where
    \begin{equation*}
       \Upsilon_1 = \{ y\in\X \;:\; 
       \exists \xb\in\mcP^\infty \text{ such that } y=x_1
       \} \subseteq \X \;,
    \end{equation*}
    the set of points that can be reached in a single step of an algorithm $\xb\in\mcP^\infty$. 
    
    We now show that $\Upsilon_1=\X$. Using equation~\eqref{eq:proof-gen-dynamics-asym-1} and the property that $\nabla \phi \circ \nabla \phi^\ast = \mathrm{id}$ we have that
    \begin{equation} \label{eq:proof-gen-recursion-asym-3}
        x_0 = x_1 - \nabla \phi^\ast( - \nabla f(x_1) - \nu(x_1) )
        \;,
    \end{equation}
    for some $\nu(x_1)\in\partial \Jinf(x_1)$.
    Hence, for any $x_1=y\in\X$, we can pick $\xb\in\mcP_x^\infty$ where $x=x_0$ defined according to~\eqref{eq:proof-gen-recursion-asym-3}, which shows that $\Upsilon_1\supseteq\X$, as desired. We therefore have that $\nabla \phi(\Delta x_t) = - \nabla J^{T-1}(x_t)$ for all $\xb\in\mcP^T$. Lastly, it is easy to see that the recursion follows from~\eqref{eq:proof-gen-dynamics-asym-2}.
\end{proof}

\newpage

\section{Proofs for Section~\ref{sec:convex-optimization}}

Over the course of this section, we assume without loss of generality that $\fst = 0$ since we may simply consider the function $\tilde{f}(x) = f(x) - \fst$, which satisfies this property.

\subsection{Proof of Lemma~\ref{lem:convex-uniqueness}}
\begin{proof}
    Consider $\xb,\yb\in\A_x^T$ and $\rho\in(0,1)$. Then we have
    \begin{align*}
        \mcR_T(\xb + \rho (\yb-\xb) )
        &= \sum_{t=1}^T f( x_t + \rho (y_t - x_t) ) + \phi(\Delta x_t + \rho (\Delta x_t - \Delta y_t) )
        \;.
    \end{align*}
    Noting that by the convexity of $f$ and the strict convexity of $\phi$, we have
    \begin{align*}
        f( x_t + \rho (y_t - x_t) ) &\leq
        (1-\rho) f( x_t ) + \rho f( y_t )
        \;, \\
        \phi(\Delta x_t + \rho (\Delta x_t - \Delta y_t) )
        &<
        (1-\rho)\phi(\Delta x_t ) + \rho\phi(\Delta y_t )
        \;,
    \end{align*}
    and hence, we find that 
    \begin{equation*}
         \mcR_T(\xb + \rho (\yb-\xb) )
         < (1-\rho)  \mcR_T(\xb) + \rho  \mcR_T(\yb)
         \;,
    \end{equation*}
    showing that $\mcR_T$ is strictly convex, and hence has a unique minimum. Applying~\cite[Proposition 1.2]{ekeland1999convex}, the solution must be unique and by~\cite[Proposition 2.1]{ekeland1999convex} is the unique critical point of $\mcR_T$.
\end{proof}

\subsection{Proof of Lemma~\ref{lem:convex-J-properties-1}}

\begin{proof}
    Let us first assume that $T\in\N\cup\{\infty\}$. In order to show that $J^T$ is convex and differentiable we leverage convex analysis tools from~\cite{rockafellar1970convex}. Let us introduce the (abuse of) notation
    \begin{equation*}
        \mcR_T(x;\xb_{1:T}) = \mcR_T(\xb)
        \;,
    \end{equation*}
    for $\xb\in \A^T_x$ such that $x_0 = x$ and $\xb_{1:T}=\{x_t\}_{t=1}^T$, which allows us to separate the initial value and the remainder of the path of the optimizer. Note that by the convexity of $f$ and $\phi$ that $\mcR_T$ is convex in both variables and that by definition we have $J^T(x) = \min_{\yb_{1:T}\in\X^{\otimes T}}\mcR_T(x;\yb_{1:T})$. Now for $x,y\in\X$, $\xb_{1:T},\yb_{1:T}\in\X^{\otimes T}$, $\rho\in(0,1)$
    \begin{align*}
        J^T( (1-\rho) \, x + \rho \, y )
        &= \min_{\zb_{1:T}\in\X^{\otimes T}} \mcR_T( \, (1-\rho) \, x + \rho \, y \,;\zb_{1:T})
        \\ &\leq 
        \mcR_T \left( \, (1-\rho) \, x + \rho \, y \,;\, (1-\rho)\,\xb_{1:T} + \rho \, \yb_{1:T} \, \right)
        \\ &\leq 
        (1-\rho) \, \min_{\xb_{1:T}\in\X^{\otimes T}} \mcR_T(x;\xb_{1:T}) + \rho \, \min_{\yb_{1:T}\in\X^{\otimes T}} \mcR_T(y;\yb_{1:T})
        \\ &= (1-\rho) \, J^T(x) + \rho \, J^T(y)
        \;.
    \end{align*}
    Taking the minimum over $\xb_{1:t}$ and $\yb_{1:t}$, we obtain
    \begin{align*}
     J^T( (1-\rho) \, x + \rho \, y )
        &\leq
        (1-\rho) \, \min_{\xb_{1:T}\in\X^{\otimes T}} \mcR_T(x;\xb_{1:T}) + \rho \, \min_{\yb_{1:T}\in\X^{\otimes T}} \mcR_T(y;\yb_{1:T}) 
        \\&= (1-\rho) \, J^T(x) + \rho \, J^T(y)
        \;,
    \end{align*}
    demonstrating the claim that $J^T$ is convex for all $T\in\N\cup\{\infty\}$.
    
    Next, we show that $J^T$ is differentiable for $T\in\N$. Note that for each fixed $\yb_{1:T}$, that the function $x\mapsto \mcR_T(x;\yb_{1:T})$ is both convex and differentiable, where the differentiability of $x$ follows from the differentiability of $\phi$. Fix $x\in\X$, and define a sequence $\{ \yb^i_{1:T} \}_{i\in\N} \subseteq \X^{\otimes T}$ such that
    \begin{equation*}
        f_i(x) = 
        \mcR_T(x;\yb_{1:T}^i) \longrightarrow \min_{\yb_{1:T} \in \X^{\otimes T}} \mcR_T(x;\yb_{1:T}^i) = J^T(x)
        \;.
    \end{equation*}
    since each $f_i$ is convex and differentiable over $\X$, we can apply~\cite[Theorem 25.7]{rockafellar1970convex} to claim that $\nabla f_i(x) \rightarrow \nabla J^T(x)$, and hence $\nabla J^T(x)$ is differentiable.
    
    Next, we show that both of the convergence statements of Lemma~\ref{lem:convex-J-properties-1} hold, which in turn imply the differentiability for $T=\infty$. Once more, we will leverage the results of~\cite[Theorem 25.7]{rockafellar1970convex}. Notice that for each $T\in\N$, $J^T$ is convex and differentiable. Moreover, note that $J^T$ is pointwise non-decreasing and bounded above due to Theorem~\ref{thm:asym-existence}, and hence $J^T \rightarrow \Jinf$ pointwise. Applying~\cite[Theorem 25.7]{rockafellar1970convex}, we get that these properties imply that $J^T\rightarrow \Jinf$ and $\nabla J^T\rightarrow \nabla \Jinf$ uniformly on compact sets, which also show that $\Jinf$ is differentiable.
\end{proof}

\subsection{Proof of Lemma~\ref{lem:gradJ-recursion}}
\begin{proof}
	We note that Lemma~\ref{lem:gradJ-recursion} is a special case of Theorem~\ref{thm:pseudo-gd-nonconvex-2}, where for any $T\in\N\cup\{ \infty \}$, $J^T$ is convex and differentiable. In this case, we find that the necessary conditions of Theorem~\ref{thm:pseudo-gd-nonconvex-2} are satisfied. Moreover, we have that since $J^T$ is differentiable, $\partial J^T (x) = \{ \nabla J^T(x) \} $. Applying this to the result of Theorem~\ref{thm:pseudo-gd-nonconvex-2}, we obtain the desired result.
\end{proof}

\subsection{Proof of Lemma~\ref{lem:convex-J-properties-2}}
\begin{proof}
    We split the proof according to the individual properties listed in the statement of the Lemma.
    
    \paragraph{Proof of Property i.} 
    
    We recall the result from~\cite[Theorem 26.5]{rockafellar1970convex} which states that a function is Legendre convex if and only if its dual is Legendre convex. Hence, it is sufficient for us to show that $(J^T)^\ast$ is Legendre convex. What remains to be shown are that $(J^T)^\ast$ is convex, differentiable and satisfies the property that $\lim_{\|x\|\rightarrow\infty}\|\nabla J^T(x)\|=\infty$.
    
    We first show that $(J^T)^\ast$ is strictly convex. Recall the recursion on $(J^T)^\ast$ from Lemma~\ref{lem:bregman-recursion},
    \begin{equation} \label{eq:prf-prop-v-eq1}
        (J^T)^\ast(q) = \phit^\ast(q) + (J^{T-1} + f)^\ast (q)
        \;.
    \end{equation}
    Since $\phi$ is Legendre convex, $\phit^\ast$ must also be Legendre convex and hence strictly convex. Since $J^{T-1}$ is convex (property~\emph{i}) and $f$ is strictly convex, we therefore have that $(J^{T-1} + f)^\ast$ is convex. Since $(J^T)^\ast = \phit^\ast + (J^{T-1} + f)^\ast$ is the sum of a strictly convex and a convex function, it is strictly convex and hence $(J^T)^\ast$ is strictly convex.
    
    Next, we show that $(J^T)^\ast$ is differentiable. First note that $\phit^\ast$ is differentiable since it is Legendre convex. Next, recall that $J^{T-1} + f$ is strictly convex, and hence by~\cite[Theorem 26.3]{rockafellar1970convex} $(J^{T-1} + f)^\ast$ is differentiable. Hence, by~\eqref{eq:prf-prop-v-eq1} we have that $(J^T)^\ast$ is strictly convex.
    
    Now, note that by \cite[Lemma 26.7]{rockafellar1970convex}, a convex function $g:\X\rightarrow\R$ satisfies $\lim_{\|x\|\rightarrow\infty}\|\nabla g(x)\|=\infty$ if and only if $g$ is co-finite, that is, $g$ satisfies
    \begin{equation*}
    	\lim_{\lambda \rightarrow \infty} g(\lambda y)/\lambda = \infty
    	\;\; \forall 0 \neq y \in \X \;.
    \end{equation*}

    By Fenchel's inequality, we have that $(J^{T-1}+f)^\ast(q) \geq - J^{T-1}(0)+f(0) = -\alpha >-\infty$ and applying Lemma~\ref{lem:bregman-recursion} with $p=0$, we obtain the bound
    \begin{equation*}
    	(J^T)^\ast(q) = \phi^\ast(q) + (J^{T-1}+f)^\ast(q) \geq \phi^\ast(q) - \alpha
    	\;.
    \end{equation*}
    Since $\phi$ is assumed to be Legendre convex~\cite[Theorem 26.5]{rockafellar1970convex} implies that $\phit^\ast$ is also Legendre convex and hence co-finite. Hence, we have that
    \begin{align*}
    	\lim_{\lambda\rightarrow\infty} \frac{(J^T)^\ast( \lambda y ) }{\lambda}
    	\geq \lim_{\lambda\rightarrow\infty} \frac{\phit^\ast(\lambda y) -\alpha}{\lambda} = \infty
    	\;,
    \end{align*}
    showing that $(J^T)^\ast$ is also co-finite and hence Legendre convex, as desired.

    \paragraph{Proof of Property ii.} Let $\xst \in \argmin_{x\in\X} f(x)$, and consider $\xb\in\cup_{T\in\N}\A_{\xst}^T$ defined by $x_t= \xst$ for all $t\in\N$. Note that under this definition, $0 = \mcR(\xb) \geq \min_{\yb\in\A_x^T}\mcR_T(\yb) = J^T(\xst) \geq \min_{x} J^T(x) \geq 0$. By property \emph{i}, we have that $J^T$ is Lengendre convex and hence strictly convex, so we have that this minimum is unique.

    \paragraph{Proof of Property iii.} Note that by property \emph{i}, $(J^T)^\ast$ is Legendre convex. Hence the relative convex with respect to $\phit^\ast$ follows directly from Lemma~\ref{eq:dual-bregman-property}, since we have
    \begin{align*}
        D_{(J^T)^\ast}(q,p) &= 
        D_{\phit^\ast}(q,p)  + D_{(J^{T-1} + f)^\ast}(q,p) 
        \\&\geq 
        D_{\phit^\ast}(q,p)
    \end{align*}
    where the inequality follows from the positivity of the Bregman divergence.
    Hence, we obtain one of the claims for \emph{Property i} of the theorem. For the second claim, we apply the result of Lemma~\ref{lem:interchange-relative-smoothness} to get the desired result.

\end{proof}

\subsection{Proof of Lemma~\ref{lemma:relative-smoothness-convexity-J}}
\begin{proof} We first note here that the assumption that $\phi$ is quadratic implies that $\phit(x) = \phi(-x) = \phi(x)$.
    Now, assume that $f$ is $\lambda$-relatively-smooth with respect to $\phi$. Let us define the set
    \begin{equation*}
        \Gamma = \{ 
        \gamma \in [0,1] \,:\, D_{\Jinf}(x,y) \leq \gamma D_{\phi^\ast} (x,y)
        \;\;\forall\,x,y\in\X
        \}
        \;,
    \end{equation*}
    as well as its infimum $\gammau = \inf \Gamma$.
    Note that by Theorem~\ref{lem:convex-J-properties-2}-\emph{iii}, $\Gamma$ is non-empty and hence $\gammau$ is well-defined. Furthermore, it is easy to see that $\Gamma$ is closed and bounded and hence compact, therefore $\gammau\in\Gamma$. 
    
    Now since $f$ is $\lambda$-relatively-smooth, by the linearity of the Bregman divergence we obtain that $D_{\Jinf + f}(x,y) \leq (\gammau + \lambda) D_\phi(x,y)$, and hence, applying Lemma~\ref{lem:interchange-relative-smoothness}, we have that
    \begin{equation} \label{eq:J+f-ast-breg-bound}
        D_{(\Jinf + f)^\ast}(p,q) \geq (\gammau + \lambda)^{-1} D_{\phi^\ast}(p,q)
        \;.
    \end{equation}
    
    Recalling Lemma~\ref{lem:bregman-recursion}, and taking the limit as $T\rightarrow\infty$, we have
    \begin{equation*}
        D_{(\Jinf)^\ast}( q , p ) = 
        D_{\phi^\ast}\left( q , p \right) +
        D_{(\Jinf+f)^{\ast}}( q , p )
        \;,
    \end{equation*}
    hence combining with~\eqref{eq:J+f-ast-breg-bound}, we obtain
    \begin{equation} \label{eq:dual-Jinf-ast-recursion}
        D_{(\Jinf)^\ast}( q , p ) \geq (1 + (\gammau + \lambda)^{-1} ) D_{\phi^\ast}\left( q , p \right)
        \;.
    \end{equation}
    Applying Lemma~\ref{lem:interchange-relative-smoothness} once more to~\eqref{eq:dual-Jinf-ast-recursion}, we find that
    \begin{equation*}
        D_{\Jinf}( x,y ) \leq (1 + (\gammau + \lambda)^{-1} )^{-1} D_{\phi^\ast}\left( x,y \right)
        \;,
    \end{equation*}
    and hence we have that $(1 + (\gamma + \lambda)^{-1} )^{-1} \in \Gamma$. By the definition of $\gammau$, however, we have that
    \begin{equation} \label{eq:Jinf-gamma-recursion}
        \gammau \leq (1 + (\gammau + \lambda)^{-1} )^{-1}
        \;.
    \end{equation}
    Noting that equation~\eqref{eq:Jinf-gamma-recursion} can be re-arranged into a quadratic inequality in terms of $\gammau$ and that $\gammau\in[0,1]$, we can solve the inequality to obtain that
    \begin{equation*}
        \gammau
        \leq \frac{1}{2} \left( \sqrt{\lambda^2 + 4\lambda} - \lambda \right)
        \in (0,1)
        \;,
    \end{equation*}
    as desired.
    In order to obtain the converse result, we begin by assuming that $f$ is $\mu$-relatively-convex, and repeat the same sequence of steps with the inequalities reversed and modifying the definition of the set $\Gamma$ and of $\gammau$ accordingly (as a $\sup$).
\end{proof}

\subsection{Proof of Theorem~\ref{thm:convergence-basic-thm}}
\begin{proof}
    Over the course of this proof, we use the short-hand notation $\nabla \Jinf(x_{t}) = \nabla \Jinf_t$.
    
    We begin by noting that Lemma~\ref{lemma:dual-descent-Lemma}-2 and the strict convexity of $\phi$ implies that $\phit^\ast( \nabla \Jinf_{t+1} ) \leq \phit^\ast( \nabla \Jinf_{t} )$, and hence $\{ \phit^\ast( \nabla \Jinf_{t+1} ) \}$ is decreasing. Next, recalling Lemma~\ref{lemma:dual-descent-Lemma}-3, we have that
    \begin{equation} \label{eq:descent-proof-proximal-bound-1}
		\phit^{\ast}( 
		\nabla \Jinf_{t+1})
		\leq
		- D_{\phit^{\ast}}( \nabla J^T(\xst ) \mathrel{,} \nabla J^T(x_t) ) +
		D_{J^T}\left( x_{t}\mathrel{,}\xst \right)    
		- D_{J^T}\left( x_{t+1}\mathrel{,}\xst \right) 
		\;.
	\end{equation}
	Now, noting that $\phit^{\ast}( \nabla \Jinf_{t+1})$ is decreasing, we get that
	\begin{align*}
	    t \,\phit^{\ast}(\nabla \Jinf_{t})
	    &\leq \sum_{u=1}^t \phit^{\ast}(\nabla \Jinf_{u})
	    \\ &\leq 
	    \sum_{u=0}^{t-1} \left\{
	    - D_{\phit^{\ast}}( \nabla \Jinf(\xst ) \mathrel{,} \nabla \Jinf(x_u) ) +
		D_{\Jinf}\left( x_{u}\mathrel{,}\xst \right)    
		- D_{\Jinf}\left( x_{u+1}\mathrel{,}\xst \right) 
	    \right\}
	    \\ &\leq
	    \sum_{u=0}^{t-1} \left\{
	    - D_{\phit^{\ast}}( \nabla \Jinf(\xst ) \mathrel{,} \nabla \Jinf(x_{u-1}) ) 
	    \right\}
	    +
	    D_{\Jinf}\left( x_{0}\mathrel{,}\xst \right)    
		- D_{\Jinf}\left( x_{t}\mathrel{,}\xst \right) 
	    \\ &\leq
	    D_{\Jinf}\left( x_{0}\mathrel{,}\xst \right)
	    \;,
	\end{align*}
    and hence, dividing both sides by $t$, we obtain the bound in the statement of the theorem.
    
\end{proof}

\subsection{Proof of Theorem~\ref{thm:convergence-quadratic-thm}}
\begin{proof}
    Over the course of this proof, we will use the short-hand notation $\Jinf(x_t)=\Jinf_t$. We recall once more that if $\phi$ is quadratic, then $\nabla \phi$ is linear and hence we have that $D_\phi(x,y)=\phi(x-y)$. This proof follows closely the proof of Theorem~\ref{thm:convergence-basic-thm}, but where we replace the use of Lemma~\ref{lemma:dual-descent-Lemma} with Lemma~\ref{lemma:primal-descent-lemma}. We begin by noting that Lemma~\ref{lemma:primal-descent-lemma}-2 and the convexity of $\phi$ imply that $\Jinf_{t+1}\leq \Jinf_t$, and hence $\{ \Jinf_{t} \}$ is non-increasing. Next, recalling Lemma~\ref{lemma:primal-descent-lemma}-3, we have that
    \begin{equation} \label{eq:descent-proof-proximal-bound-2}
		D_{\Jinf}( x_{t+1} \mathrel{,} \xst ) \leq 
		- D_{\Jinf}( \xst \mathrel{,} x_t) +
		D_{\phi}( \xst \mathrel{,} x_t)  - D_{\phi}( \xst  \mathrel{,} x_{t+1} )
		\;.
	\end{equation}
	Since $\{ \Jinf_t \}$ is non-increasing and since $D_{\Jinf}( x_t \mathrel{,} \xst ) = \Jinf_t$ we get that by using~\eqref{eq:descent-proof-proximal-bound-2},
	\begin{align*}
	    t \, \Jinf_t
	    &\leq \sum_{u=1}^t \Jinf_u
	    \\ &\leq 
	    \sum_{u=0}^{t-1} \left\{
	    - D_{\Jinf}( \xst \mathrel{,} x_u) +
		D_{\phi}( \xst \mathrel{,} x_u)  - D_{\phi}( \xst  \mathrel{,} x_{u+1} ) 
	    \right\}
	    \\ &\leq
	    \sum_{u=0}^{t-1} \left\{
	    D_{\phi}( \xst \mathrel{,} x_u)  - D_{\phi}( \xst  \mathrel{,} x_{u+1} ) 
	    \right\}
	    \\ &=
	    D_{\phi}( \xst \mathrel{,} x_0)  - D_{\phi}( \xst  \mathrel{,} x_{t} )
	    \\ &\leq
	    D_{\phi}( \xst \mathrel{,} x_0) = \phi( \xst - x_0 )
	    \;,
	\end{align*}
    hence, dividing both sides by $t$, we obtain the first bound in the statement of the theorem.
    
    To obtain the second, we note that if $\Jinf$ is also $\mu$-relatively convex with respect to $\phi^\ast$, we have that
    \begin{equation*}
        \mu D_{\phi}( x , y )
        \leq  \, D_{\Jinf}( y , x )
        \;,
    \end{equation*}
    and hence, applying this fact along with the bound Lemma~\ref{lemma:dual-descent-Lemma}-3, we obtain that
    \begin{align*}
    	D_{\phi}( \xst \mathrel{,} x_t) 
    	&\geq 
		D_{\phi}( \xst  \mathrel{,} x_{t+1} ) +
		D_{\Jinf}( x_{t+1} \mathrel{,} \xst ) +
		D_{\Jinf}( \xst \mathrel{,} x_t) 
		\\ &\geq
		D_{\phi}( \xst  \mathrel{,} x_{t+1} ) +
		\mu D_{\phi}(  x_{t+1} \mathrel{,} \xst ) +
		\mu \, D_{\phi}( \xst \mathrel{,} x_t) 
		\;,
    \end{align*}
    Now, noting that $D_{\phi}( x,y )=\phi(x-y)$, we have that
    \begin{align*}
		\phi(x_{t+1}-\xst)       	
        &\leq
        \left( \frac{1 - \mu}{1 + \mu} \right)
        \phi(x_{t}-\xst)
        \\&\leq
        \left( 1- \frac{2 \mu}{1 + \mu} \right)
        \phi(x_{t}-\xst)
        \;,
    \end{align*}
    cascading this inequality, and noting that $\mu \, \phi(x-y) \leq D_{\Jinf}(x,y) \leq \lambda \, \phi(x-y)$, we obtain the second bound in the statement of the theorem.
\end{proof}

\subsection{Proof of Lemma~\ref{thm:p+1-rate-convergence}}
\begin{proof}
    We separate the proof into two parts, the first proving~\eqref{eq:t2-rate-f} and the second proving~\eqref{eq:exp-rate-f}.
    
    \paragraph{Proof of \eqref{eq:t2-rate-f}:} Note that $a_t = f(x_t) - \fst + \phi(\Delta x_t) \geq 0$ is non-increasing by Lemma~\ref{lem:lagrangian-decreasing} and that $\Jinf_t = \sum_{s=t+1}^\infty a_s \leq \frac{C}{t}$ for $C=\lambda\, \phi(x_0-\xst)$ by equation~\eqref{eq:Jinf-bound-rel-smth-quad-conv}. Hence,
    \begin{align*}
        t^2 \, a_{2t}
        \leq t \, \sum_{u=t+1}^{2t} a_u
        \leq
        t \, \Jinf_t
        \leq
        C
        \;.
    \end{align*}
    Therefore, we have that
    \begin{equation}
        \limsup_{t\rightarrow\infty} t^2 a_t \leq 4 \, C
        \;,
    \end{equation}
    and hence by definition of the $\limsup$, $a_t > \frac{4\,C}{t^2}$ for at most finitely many $t$, and we have the desired result.
    
    \paragraph{Proof of \eqref{eq:exp-rate-f}:}  Here, we use a version of the \emph{reverse Stolz-Cesàro} theorem. From equation~\eqref{eq:Jinf-bound-rel-conv-quad-conv}, we have that
    \begin{equation*}
        \Jinf_t = \Jinf(x_t) \leq c_0 e^{-c_1 t} = b_t
    \end{equation*}
    for $c_0 = \lambda \, \phi(x_0-\xst)$ and $c_1 = -\log( 1-\frac{2\gamma}{1+\gamma} )$. Hence, we have that
    \begin{equation*}
        \limsup_{t\rightarrow\infty} \frac{\Jinf_t}{b_t} \in [0,1]
        \;\;\text{and}\;\;
        \lim_{t\rightarrow\infty} \frac{b_{t}}{b_{t+1}} = e^{-c_1} = B \neq 1
        \;.
    \end{equation*}
    Hence, noting that 
    \begin{align*}
        \frac{\Jinf_{t}-\Jinf_{t+1}}{b_{t+1}-b_t} 
        = \frac{ \frac{\Jinf_{t}}{b_{t}} \frac{b_{t}}{b_{t+1}} - \frac{\Jinf_{t+1}}{b_{t+1}} }{1 - \frac{b_{t}}{b_{t+1}}}
    \end{align*}
    we compute
    \begin{align*}
        \limsup_{t\rightarrow\infty}\frac{f(x_{t+1}) +\phi(\Delta x_t) }{b_{t+1}-b_t}
        &=
        \limsup_{t\rightarrow\infty}\frac{\Jinf_{t}-\Jinf_{t+1}}{b_{t+1}-b_t} 
        \\&=
        \limsup_{t\rightarrow\infty}
        \frac{ \frac{\Jinf_{t}}{b_{t}} \frac{b_{t}}{b_{t+1}} - \frac{\Jinf_{t+1}}{b_{t+1}} }{1 - \frac{b_{t}}{b_{t+1}}}
        \\&\leq
        \limsup_{t\rightarrow\infty}
        \frac{ \frac{\Jinf_{t}}{b_{t}} \frac{b_{t}}{b_{t+1}} }{1 - \frac{b_{t}}{b_{t+1}}}
        \leq
        \frac{B}{1- B} 
        \;.
    \end{align*}
    Hence, we have that $\limsup_{t\rightarrow\infty} \frac{f(x_{t+1}) +\phi(\Delta x_t) }{b_{t+1}-b_t} \leq  \frac{B}{1- B} $, which by definition implies that $\frac{f(x_{t+1}) +\phi(\Delta x_t) }{b_{t+1}-b_t}> \frac{B}{1- B} $ at most finitely many times, giving the desired result.
\end{proof}

\subsection{Proof of Lemma~\ref{lem:loss-gateaux-deriv-connection}}

\begin{proof}
    Recall the definition of the Gâteaux derivative from equation~\eqref{eq:gateaux-expression},
    \begin{equation*}
        \mcR_T^\prime(\xb)(\delxb) = \sum_{t=1}^{T} \langle \nabla\phi(\Delta x_{t}) - \nabla\phi(\Delta x_{t-1}) + \nabla f(x_t) \mathrel{,} \delx_{t-1} \rangle
		\;.
    \end{equation*}
    Computing the dual norm using the above expression, we find that
    \begin{align*}
       \left\| \mcR_T^\prime(\xb^\theta) \right\|^2_{2,\ast}
       &= 
       \sum_{t=1}^{T} \left\| \nabla\phi(\Delta x_{t}) - \nabla\phi(\Delta x_{t-1}) + \nabla f(x_t) \right\|^2
       \\ &=
       \sum_{t=1}^T \L(\theta;x_{t-1})
       \;,
    \end{align*}
    as desired.
\end{proof}


\end{document}